\definecolor{MyGreen1}{RGB}{20,180,40}
\tikzset{
  block/.style    = {draw, thick, rectangle, minimum width = 2em},
sblock/.style      = {draw, thick, rectangle, minimum height = 2em,
minimum width = 2em}, 
}
\renewcommand{\tilde}{\widetilde}
\newcommand{\gptwo}{\textsc{GPT-2}\xspace}
\newcommand{\binary}{\{0,1\}}
\newcommand{\relu}{\mathrm{ReLU}}
\newcommand{\lnorm}{\mathrm{LN}}
\DeclarePairedDelimiterX{\infdivx}[2]{(}{)}{%
  #1\;\delimsize\|\;#2%
}
\newtheorem{lemma}{Lemma}
\newtheorem{remark}{Remark}
\newcommand{\reals}{\mathbb{R}}
\newcommand{\inner}[2]{\langle {#1}, {#2}  \rangle}
\newcommand{\calO}{\mathcal{O}}
\newcommand{\calS}{\mathcal{S}}
\newcommand{\bm}[1]{\textbf{#1}}
\newcommand{\pth}[1]{\left( {#1} \right)}
\newcommand{\qth}[1]{\left[ {#1} \right]}
\newcommand{\sth}[1]{\left\{ {#1} \right\}}
\newcommand{\ie}{i.e.\xspace}
\mathchardef\mhyphen="2D
\definecolor{MyGreen1}{RGB}{20,180,40}
\newcommand{\kth}{$k^{\text{th}}$-order~}
\definecolor{newblue}{HTML}{0064E0}
\definecolor{nblue}{HTML}{E3EDFF}
\definecolor{ngreen}{HTML}{E7FCF2}
\definecolor{nred}{HTML}{FFE8ED}
\newtcolorbox[auto counter]{theorembox}[2][]{%
  colframe=nblue,
  colback=nblue,
  before upper={{\textbf{Theorem \thetcbcounter.} #2}},
  #1
}
\def\tcb@cnt@theoremboxautorefname{Thm.}
\newtcolorbox[auto counter]{mainbox}[2][]{%
  colframe=ngreen, 
  colback=ngreen,  
  before upper={{\textbf{Main Contributions.} #2}},
  #1
}
\newtcolorbox[auto counter]{qoutebox}[2][]{%
  colframe=ngreen, 
  colback=ngreen,  
  left=2.75pt,                
  right=2.75pt,               
  boxsep=2pt,               
  #1
}
\newtcolorbox[auto counter]{definitionbox}[2][]{%
  colframe=purple!8!white,
  colback=purple!8!white,
  before upper={{\textbf{Definition \thetcbcounter.} #2}},
  #1
}
\def\tcb@cnt@definitionboxautorefname{Def.}
\newtcolorbox[auto counter]{algorithmbox}[2][]{%
  colframe=purple!8!white,     
  colback=purple!8!white,      
  before upper={{\textbf{Algorithm \thetcbcounter.} #2}}, 
  #1
}
\newtcolorbox[auto counter]{algorithmboxdef}[2][]{%
  colframe=blue!5!white, 
  colback=blue!5!white,  
  before upper={{#2}}, 
  #1
}
\newtcolorbox[auto counter]{assumptionbox}[2][]{%
  colframe=red!10!white,     
  colback=red!10!white,      
  before upper={{\textbf{Assumption \thetcbcounter.} #2}}, 
  #1
}
\title{\Large What One Cannot, Two Can: Two-Layer Transformers Provably Represent Induction Heads on Any-Order Markov Chains}
\author{%
    Chanakya Ekbote\thanks{Corresponding author.} \textsuperscript{ \includegraphics[height=2em]{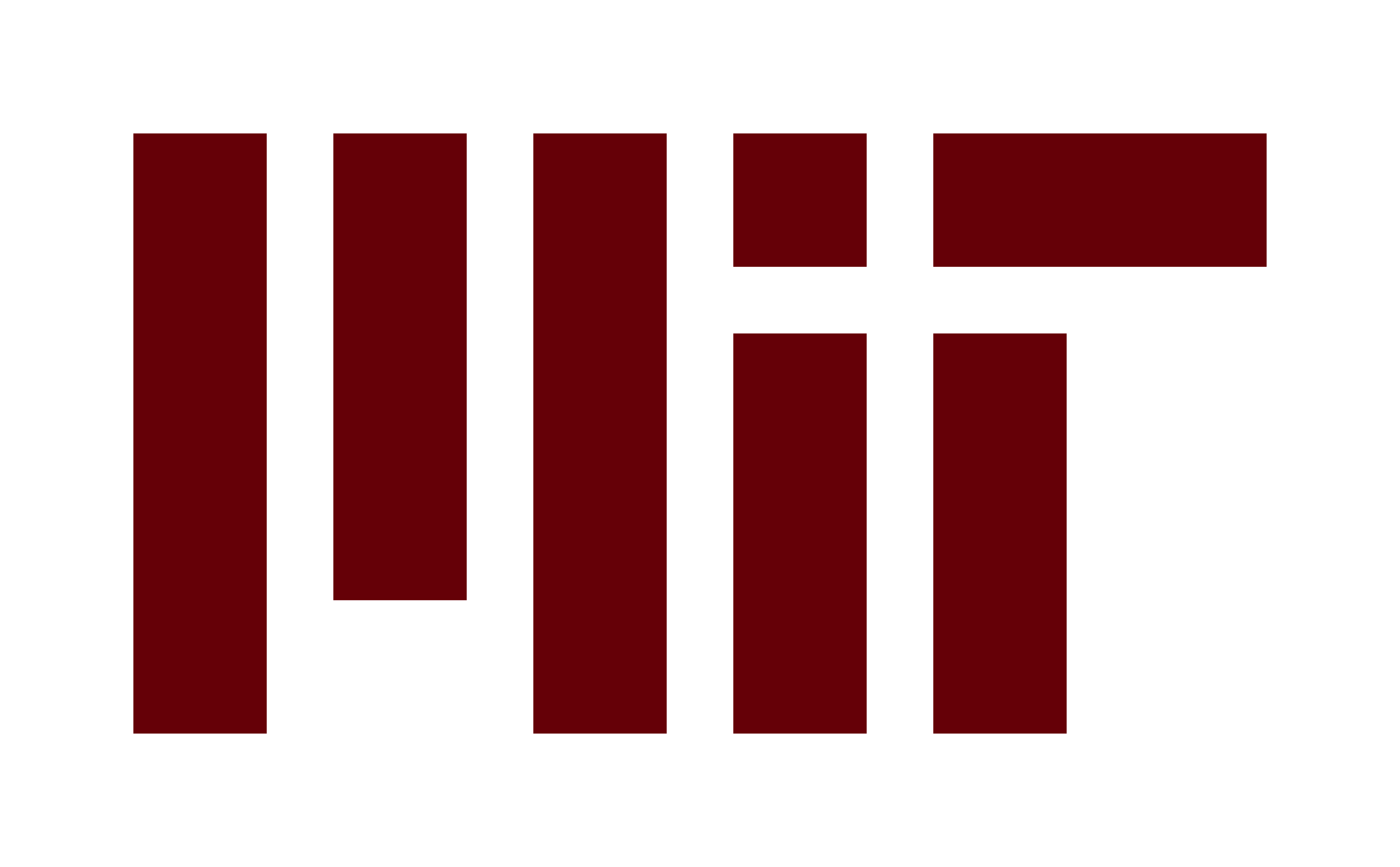}} \\
\href{mailto:cekbote@mit.edu}{\texttt{cekbote@mit.edu}}\\
  \And
  Marco Bondaschi\textsuperscript{ \includegraphics[height=1.5em]{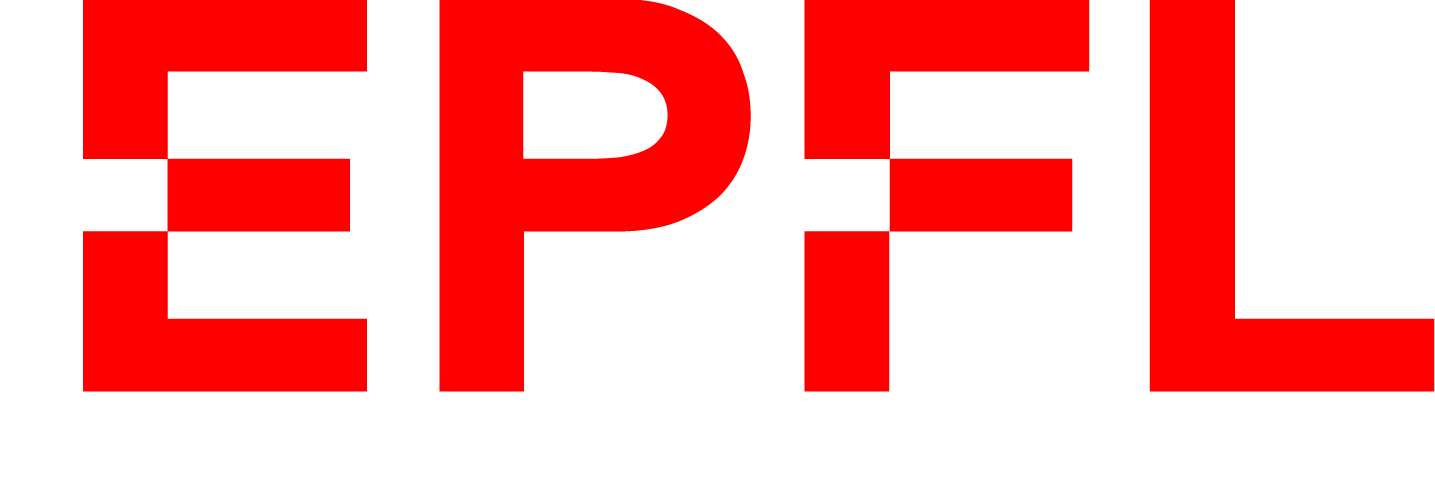}} \\
  \AND
  Nived Rajaraman\textsuperscript{ \includegraphics[height=1.7em]{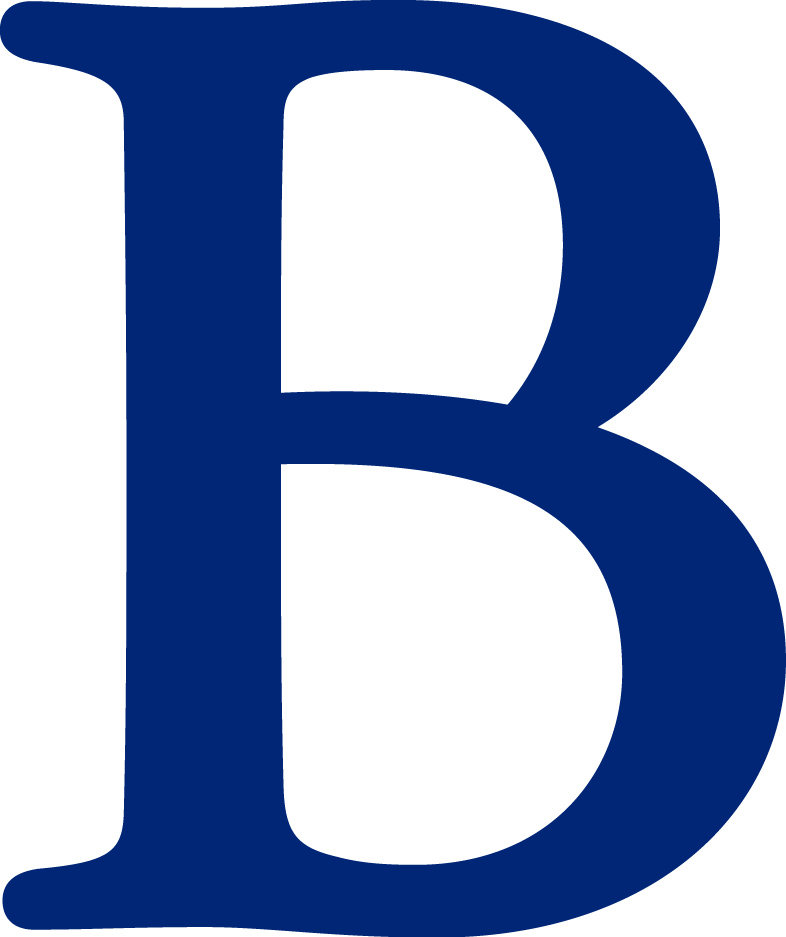}} \\
  \And
  Jason D. Lee \textsuperscript{\includegraphics[height=2.5em]{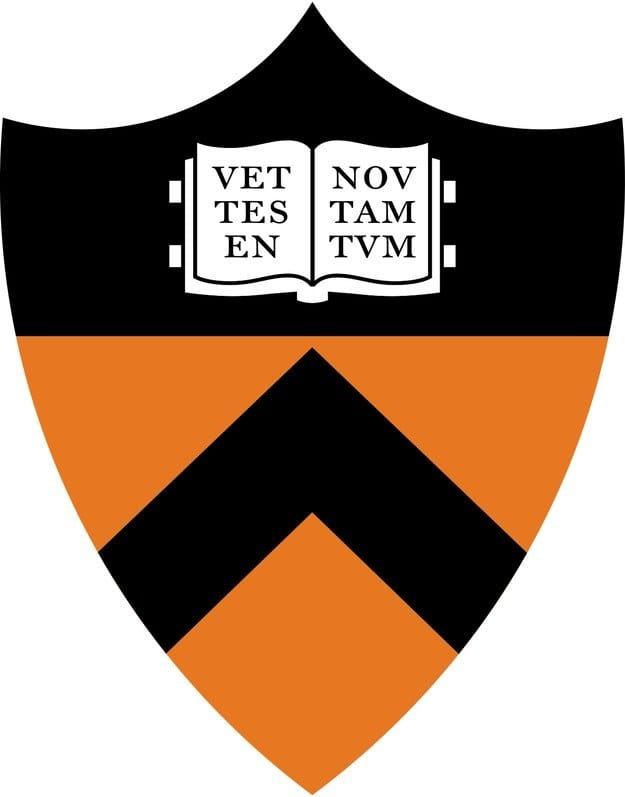}} \\
  \And
  Michael Gastpar\textsuperscript{ \includegraphics[height=1.5em]{logos/logo-epfl.png}} \\
  \\
 \AND
  Ashok Vardhan Makkuva\thanks{Equal contribution / equal mentorship. Authors listed alphabetically; either may be considered last author.} \textsuperscript{ \includegraphics[height=1.5em]{logos/logo-epfl.png}}  \\ 
  \And
    Paul Pu Liang\footnotemark[\value{footnote}] \textsuperscript{ \includegraphics[height=2em]{logos/mit_logo_std_cmyk_mit-red.jpg}} \\
\AND
\\
\vspace{-1cm}
\includegraphics[height=0.7em]{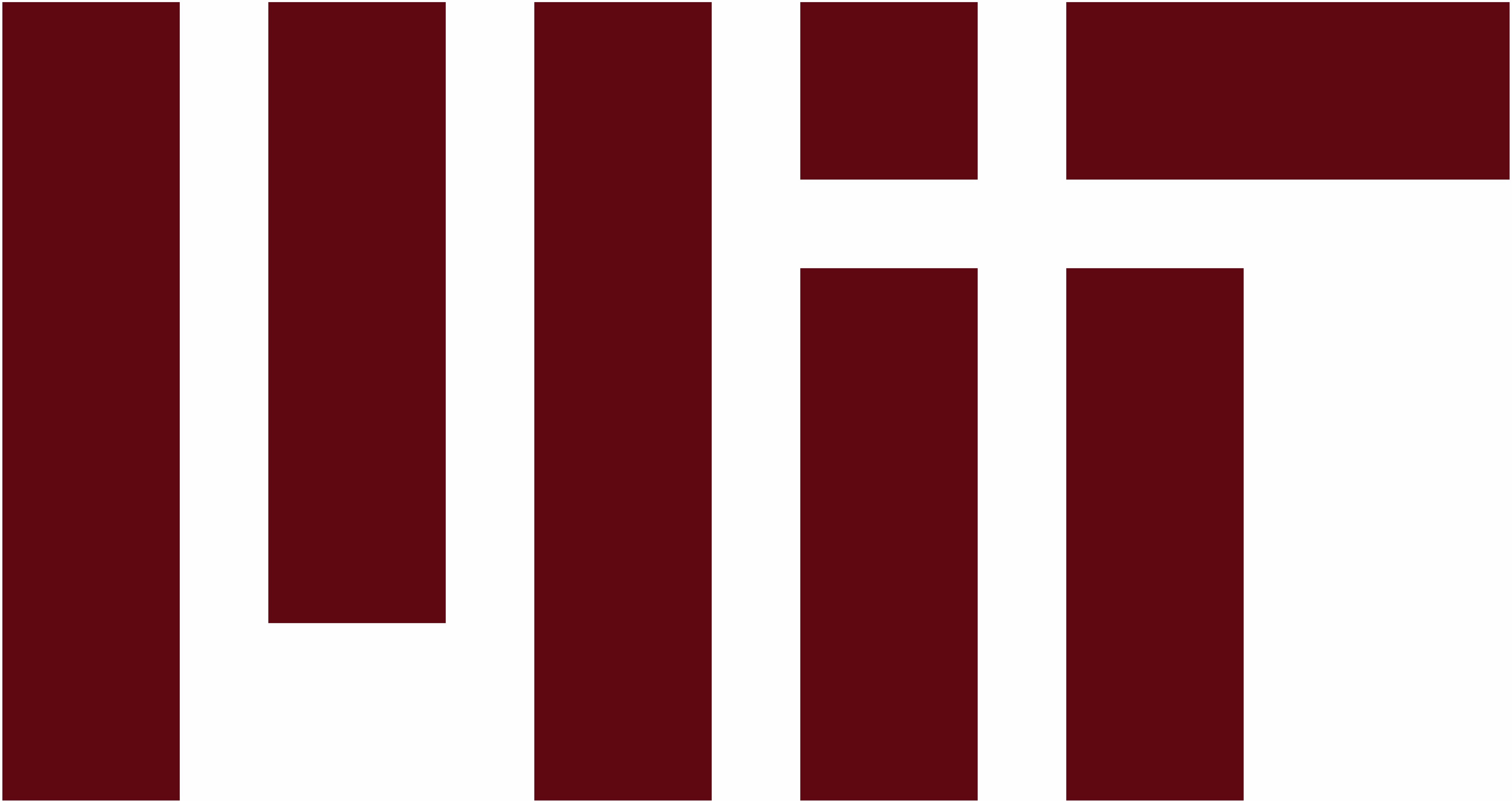} - Massachusetts Institute of Technology \quad
\includegraphics[height=0.6em]{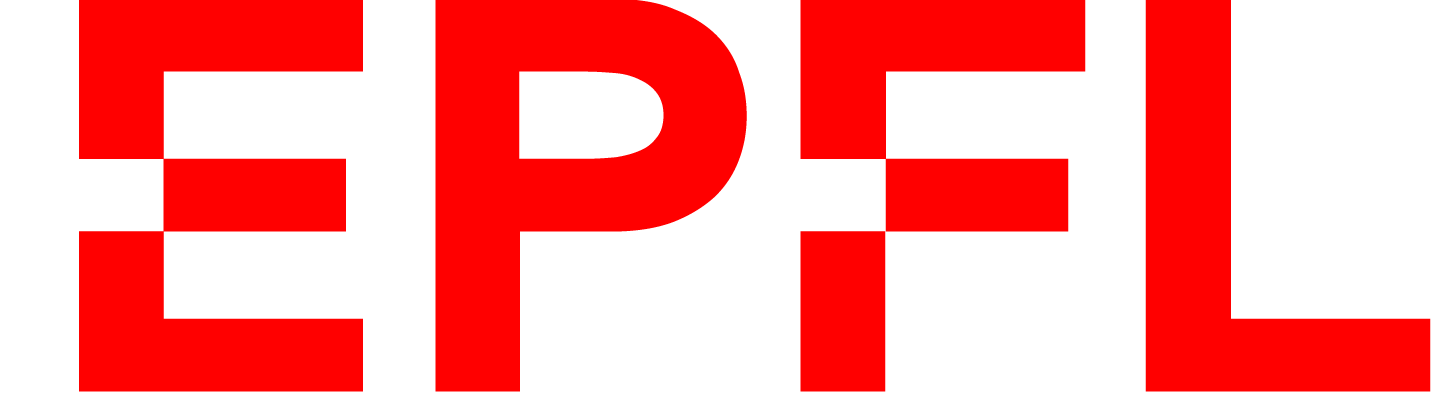} - École Polytechnique Fédérale de Lausanne \\
\includegraphics[height=0.8em]{logos/Berkeley_SecondaryBrand_BMonogram_Blue_RGB.jpg} - University of California, Berkeley \quad
\includegraphics[height=1em]{logos/princeton_logo.jpg}  - Princeton University
}
\begin{document}

\maketitle

\begin{abstract}

In-context learning (ICL) is a hallmark capability of transformers, through which trained models learn to adapt to new tasks by leveraging information from the input context.  Prior work has shown that ICL emerges in transformers due to the presence of special circuits called induction heads. Given the equivalence between induction heads and conditional $k$-grams, a recent line of work modeling sequential inputs as Markov processes has revealed the fundamental impact of model depth on its ICL capabilities: while a two-layer transformer can efficiently represent a conditional $1$-gram model, its single-layer counterpart cannot solve the task unless it is exponentially large. However, for higher order Markov sources, the best known constructions require at least three layers (each with a single attention head) --- leaving open the question: \emph{can a two-layer single-head transformer represent any $k^{\text{th}}$-order Markov process?} In this paper, we precisely address this and theoretically show that a two-layer transformer with one head per layer can indeed represent any conditional $k$-gram. Thus, our result provides the tightest known characterization of the interplay between transformer depth and Markov order for ICL. Building on this, we further analyze the learning dynamics of our two-layer construction, focusing on a simplified variant for first-order Markov chains, illustrating how effective in-context representations emerge during training. Together, these results deepen our current understanding of transformer-based ICL and illustrate how even shallow architectures can surprisingly exhibit strong ICL capabilities on structured sequence modeling tasks. Code is available at the \href{https://github.com/cekbote/what-one-cannot-two-can}{\faicon{github} link}. 
\end{abstract}

\section{Introduction}

\begin{qoutebox}
  
\begin{center}
{\textit{``A complex system that works is invariably found to have evolved from a simple system that worked.''}} \\
\hfill ---\textit{ John Gall, \textit{Systemantics} (1975)}
\end{center}
\end{qoutebox}

Transformers, powered by the attention mechanism, have emerged as the dominant architecture in machine learning, achieving state-of-the-art performance across a wide range of domains, including natural language processing \cite{brown2020language}, computer vision \cite{vaswani2017attention}, and complex reasoning tasks \cite{openai2023gpt4, yang2024qwen2}. A key factor underpinning this success is their ability to efficiently model sequences and perform in-context learning (ICL)---adapting to unseen tasks during inference by leveraging the relevant input context \cite{olsson2022context}. It is well known that ICL emerges in transformers due to the presence of special circuits called \emph{induction heads} \cite{olsson2022context, singh2024needs}. \looseness=-1

Intuitively, these circuits enable transformers to implement a ``copy-and-match'' mechanism by copying earlier tokens from the input and matching them with the desired context for next-token prediction. For example, a first-order induction head mimics the functionality $[\ldots, A, B, \ldots, A] \to B$. 
Capitalizing on the connection between induction heads and conditional $k$-gram models, a recent active line of work has leveraged \(k^{\text{th}}\)-order Markov chains as a simple yet powerful framework to analyze how transformers learn induction heads, with the framework referred to as Markov-ICL \cite{edelman2024evolution, nichani2024how, rajaraman2024transformers, chen2024unveiling, bietti2023birth}. In particular, using Markovian input sequences, the goal is to understand how transformers learn to represent conditional $k$-grams--equivalent to \kth induction heads (\autoref{def:condkgram}).

Building upon the Markov-ICL setting, \citet{bietti2023birth, edelman2024evolution}, and \citet{nichani2024how} demonstrate that a conditional $1$-gram model can be efficiently represented by a two-layer (\ie, two attention layers), single-head transformer.  In contrast, \citet{sanford2024one} show that a one-layer, single-head counterpart cannot solve this task unless its hidden dimension is exponentially larger than that of the two-layer model. On the other hand, for higher-order processes, \citet{edelman2024evolution} and \citet{nichani2024how} argue that low-depth transformers need the number of heads scaling linearly in $k$, in order to learn \kth Markov processes. However, \citet{rajaraman2024transformers} establish a surprising result that a three-layer, single-head transformer can represent the conditional \(k\)-gram model for any $k \geq 1$, and thereby learn \kth Markov models in-context. 

Together, these findings illustrate that while a $1$-layer transformer cannot efficiently represent an induction head, a $3$-layer model with $1$ head per layer suffices for all Markov orders $k \geq 1$. However, these results leave open a natural and important question: 

\begin{qoutebox}
    
\begin{center}
    \textbf{Can a two-layer, single-head transformer learn \kth Markov processes in-context?}
\end{center}
\end{qoutebox}

In this paper, we approach this question from two perspectives---representational power and learning dynamics. In particular, we affirmatively answer this on the representation front and make partial progress on the learning dynamics through the following key contributions:

\begin{mainbox}\\
    \begin{enumerate}[noitemsep,topsep=0pt,nosep,leftmargin=*,parsep=0pt,partopsep=0pt]
        \item Improving upon the best known three-layer construction, we prove that a two-layer single-head transformer is sufficient to represent the conditional \(k\)-gram model, and thus can learn \(k\textsuperscript{th}\)-order Markov chains in-context (\autoref{sec:twolayersinglehead}). To the best of our knowledge, this is the tightest characterization of transformer depth and Markov order for Markov-ICL.     \\
        \item In the course of establishing this result, we uncover an interesting tradeoff between depth and width: the existing three-layer single-head construction (depth is three, width is one) is equivalent to a two-layer architecture with two heads in the first layer and one in the second (depth is two, width is two). (\autoref{sec:twolayertwohead})  \\ 
        \item  For first-order Markov chains, we prove that gradient descent on our simplified two-layer transformer learns an induction head, and thereby the in-context conditional empirical distribution. (\autoref{sec:singleheadperturb})
    \end{enumerate}
\end{mainbox}

\paragraph{\bf Comparison with other Markov-ICL works vis-a-vis transformer architecture.} In this paper, as we focus on higher-order processes, we consider the general transformer architecture (\autoref{sec:transformer_model}) closer to real-world models with relative positional encodings \cite{shaw2018self}, softmax attention \cite{vaswani2017attention}, MLPs, and layer normalization \cite{ba2016layernormalization}. For first-order processes, past literature has considered simplified variants: \citet{bietti2023birth} use frozen and fixed random embeddings with no MLPs in the first layer, \citet{edelman2024evolution} consider attention-only models with no MLPs and embeddings, and \citet{nichani2024how} study a variant called \emph{disentangled transformer}, splitting the residual and attention streams, with the hidden dimension scaling with depth. While our gradient descent result for first-order processes (\autoref{sec:singleheadperturb}) is similar to that of \citet{nichani2024how}, the architecture is fundamentally different. In \autoref{tab:markov_comparison}, we compare the parameter counts of our construction against alternative constructions. Our results show that, as both the order and sequence length increase, our architecture remains the most compact in terms of parameter efficiency.

\begin{table}[!htbp]
\centering
\setlength{\tabcolsep}{8pt} 
\renewcommand{\arraystretch}{1.5} 
\begin{tabular}{cccc}
\toprule
\small{\textbf{Previous works}} & \small{\textbf{Markov order}} & \small{\textbf{Architecture}} & \small{{\textbf{\# Parameters }}} \\
\midrule\midrule
\small{\citet{bietti2023birth}} & \small{$1$} & \small{$2$-layer, $1$-head} & $9d^2 + dS + Td$   \\
\small{\citet{edelman2024evolution}} & \small{$1$} & \small{$2$-layer, $1$-head} & \small{$21S^2 + 3TS$}   \\
\small{\citet{nichani2024how}}    & \small{$k$} & \small{$2$-layer, $k$-head} & \small{$(k+(k+1)^2)(S+T)^2 + T(S+T)$}   \\
\small{\citet{rajaraman2024transformers}} & \small{$k$}  & \small{$3$-layer, $1$-head} & \small{$15(6S+3)^2 + (6S+3)(3T + 2S + 10)$}  \\
\midrule\toprule
\addlinespace[2pt]
\small{\textbf{Ours}}           & \small{$k$} & \small{$2$-layer, $1$-head} & \small{$9(6S+3)^2 + (6S+3)(2T + 2S + 9)$} \\
\bottomrule
\end{tabular}
\vspace{10pt}
\caption{Comparison with prior works on transformers with Markov-ICL. Here \(k \geq 1 \) is the Markov order, \(T\) the sequence length, and \(S\) the state-space (vocabulary) size. Note that \citet{bietti2023birth} do not explicitly state what \(d\) is and assume it to be large enough. To the best of our knowledge, our work obtains the tightest known characterization of transformer depth and Markov order for higher-order processes. The bit precision for our representation results is the same as that of \citet{rajaraman2024transformers}: it suffices to have \(\Omega(\log(T) + k)\) bits per parameter, with \(\mathcal{O}(1/T)\) additive approximation error. A detailed breakdown of the parameter count computations can be found in \autoref{app:sec:parameterbreakdown}.
Additional experiments analyzing the effect of varying bit precisions are presented in \autoref{app:subsec:exp_bp}.}
\label{tab:markov_comparison}
\end{table}

\vspace{-15pt}

\paragraph{Notation.} Scalars are denoted by lowercase italic letters (e.g., \(x, y\)), vectors by lowercase bold letters (e.g., \(\mathbf{x}, \mathbf{y}\)), and matrices by uppercase bold letters (e.g., \(\mathbf{A}, \mathbf{B}\)). The matrices \(\mathbf{0}_{m \times n}\) and \(\mathbf{1}_{m \times n}\) represent the all-zeros and all-ones matrices in \(\mathbb{R}^{m \times n}\), respectively. For norms and inner products we use standard notation, i.e., norms are written as $||\cdot||$, with $||\mathbf{x}||_2$ denoting the Euclidean norm. The inner product in Euclidean space between two vectors $\mathbf{x}$ and $\mathbf{y}$ is denoted by $\langle \mathbf{x}, \mathbf{y} \rangle$. The indicator function is denoted either by \(\mathbb{I}(\cdot)\) or \(\mathbbm{1}_{(\cdot)}\), depending on the context. The vocabulary \(\mathcal{S} = \{1, 2, \ldots, S\}\) of cardinality \(|\mathcal{S}| = S\) denotes the finite state-space of the Markov Chains studied in this paper. A sequence \(x_{0:T}\) is defined as \(x_{0:T} \coloneqq x_0, x_1, \ldots, x_T\), where \(T \in \mathbb{N}\). For any \(x \in \mathcal{S}\), its one-hot embedding is denoted by \(e_{x}^{S} \in \mathbb{R}^S\). $\calO(\cdot)$ denotes the big O notation.

\subsection{Related Work}


In recent years, transformers have been widely studied from both theoretical and mechanistic perspectives \cite{weiss2021thinking, li2023transformers, nguyen2024understanding}. Foundational results \cite{Yun2020seq2seq, perez2021turing, wei2022turing-approx} have established their universality and Turing-completeness, with further work examining their ability to model formal languages \cite{bhattamishra2020ability, liu2023transformers} and implement algorithmic behaviors such as induction heads \cite{olsson2022context}. Concurrently, recent studies have explored how transformers perform ICL, including their learning dynamics and phase transitions \cite{singh2024needs, wei2022chain, bai2023transformers, collins2024context, ren2024towards, bai2023transformers, akyürek2023learning, hoogland2024developmental, lin2024dual}. On this front, our work is most closely related to recent works studying Markov-ICL, where the sequential inputs are modeled as Markov processes. In  \citet{bietti2023birth} and \citet{edelman2024evolution}, the authors discover a stage-wise learning procedure for first-order sources, and show that a $2$-layer, $1$-head transformer can represent a first-order induction head. \citet{makkuva2024attention, makkuva2024local} study the optimization landscape and learning dynamics of a $1$-layer, $1$-head model for a first-order global Markov chain, unveiling the significance of weight-tying and Markov switching probabilities. \citet{nichani2024how} studies the learning dynamics of $2$-layer, $1$-head disentangled transformer, illustrating it learns to implement a first-order induction head. For higher-order sources, \citet{nichani2024how} constructs a $2$-layer, $k$-head architecture, whereas  \citet{rajaraman2024transformers} shows that a $3$-layer, $1$-head model suffices to represent a conditional $k$-gram model. In contrast, we show that a $2$-layer, $1$-head transformer suffices to represent arbitrary-order Markov models in-context. \looseness=-1


\section{Background}
\label{sec:prelim}

In this section, we provide the requisite preliminaries on Markov processes, the conditional $k$-gram model, and the Transformer.

\subsection{Markov Processes and the Conditional \(k\)-gram Model}
\label{sec:markov_background}

\begin{figure}[htbp!]
  \centering
  \begin{subfigure}[b]{0.49\textwidth}
    \includegraphics[width=\textwidth]{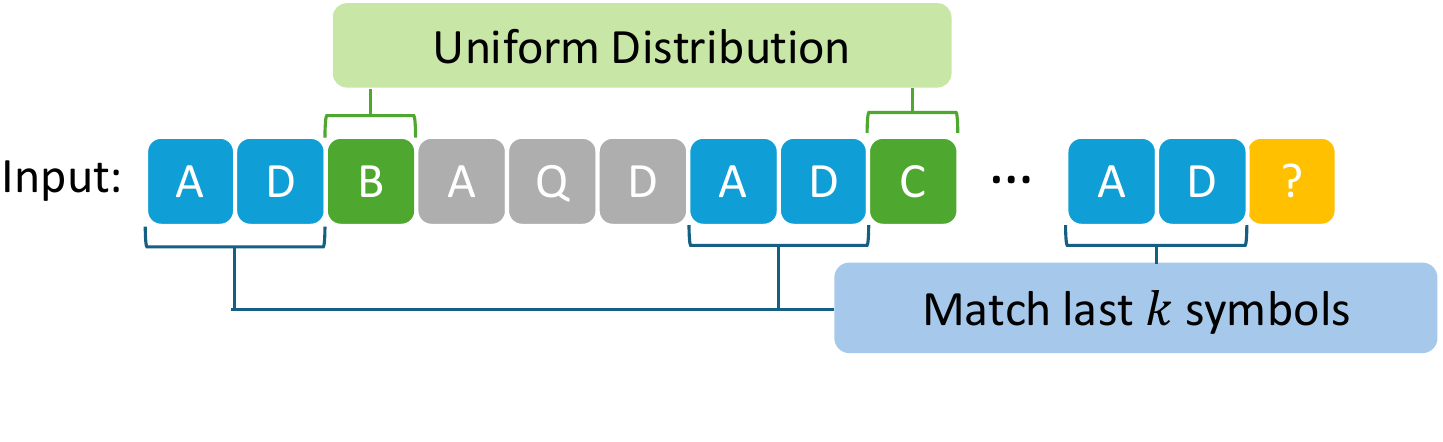}
    \caption{The conditional \(k\)-gram model (\autoref{def:condkgram}). It first (1) identifies the positions in the sequence where the preceding \(k\) tokens match the current context (blue), and (2) returns the empirical distribution over the symbols (green) that follow these matched positions.}
    \label{fig:condkgram}
  \end{subfigure}
  \hspace{0.00\textwidth}
  \begin{subfigure}[b]{0.49\textwidth}
    \includegraphics[width=\textwidth]{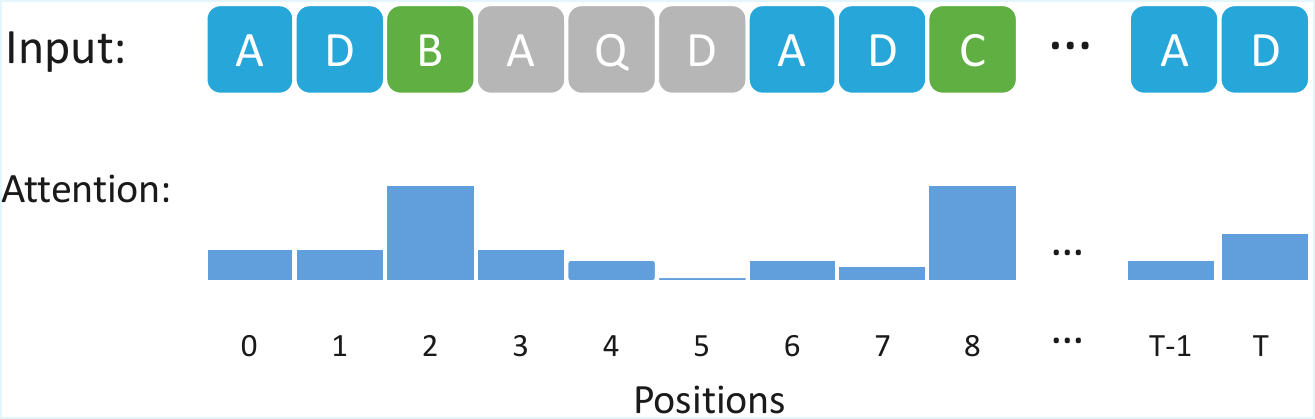}    \caption{\(k\textsuperscript{th}\)-order induction head (\autoref{def:kthinductionhead}) The attention pattern concentrates on positions in the sequence where the preceding \(k\) tokens exactly match the final \(k\) tokens. These are the positions the model attends to most strongly, as they reflect the same context as the current prediction target.}
    \label{fig:kinductionhead}
  \end{subfigure}
  \label{fig:main}
\end{figure}

A \(k\textsuperscript{th}\)-order Markov process is a time-homogeneous stochastic process over a finite state space, where the probability of the next symbol depends only on the most recent \(k\) symbols in the sequence \cite{norris1998markov}. More formally, it is defined as follows:

\begin{definitionbox}[label=def:markov]{(\kth Markov Chains) }
A stochastic process \(x_{0:T} \coloneqq x_0, x_1, \cdots, x_T\) over a finite state space \(\mathcal{S}\) is said to be a \(k\textsuperscript{th}\)-order Markov Chain with transition kernel \(\pi(\cdot)\) if, for all \(k \leq n \leq T\), the following condition holds:
\begin{align*}
    \pi(x_{n+1} = s \mid x_n, x_{n-1}, \ldots, x_0) = \pi(x_{n+1} =s \mid x_n, x_{n-1}, \ldots, x_{n-k+1}), \quad \forall s \in \calS.
\end{align*}
\end{definitionbox}

We now define the empirical equivalent of the kernel $\pi$, the 
\emph{conditional \(k\)-gram model}:

\begin{definitionbox}[label=def:condkgram]{{(Conditional \({k}\)-gram model)} }
Given a sequence \(x_{0:T} \coloneqq x_0, x_1, \cdots, x_T\) in \(\mathcal{S}^{T+1}\), the conditional \(k\)-gram model \(\widehat{\pi}_k(s \mid x_T, x_{T-1}, \ldots, x_0)\) computes the empirical probability of observing a symbol \(s \in \mathcal{S}\) matching the last \(k\) tokens, \ie
\begin{align*}
    \widehat{\pi}_k(s \mid x_0, x_1, \cdots, x_T) \coloneqq  
    \frac{\sum\limits_{i=k}^{T} \mathbb{I}(x_i = s,\, x_{i-1} = x_T,\, x_{i-2} = x_{T-1}, \ldots,\, x_{i-k} = x_{T-k+1})}
         {\sum\limits_{i=k}^{T} \mathbb{I}(x_{i-1} = x_T,\, x_{i-2} = x_{T-1}, \ldots,\, x_{i-k} = x_{T-k+1})}
\end{align*}
\end{definitionbox}
assuming the denominator is non-zero. Thus conditional \(k\)-gram can be interpreted as a simple count-based estimator of the transition kernel $\pi$, using the in-context count estimate for the predictive probability distribution.


{\bf Data generation (Random Markov sequences).} To empirically generate the input data, \ie random Markov sequences of order $k$, we first sample each row of the transition matrix $\pi$ independently from a Dirichlet prior with parameter vector \(\mathbf{1}\), which is equivalent to an uniform distribution on the $S$-dimensional probability simplex. The initial \(k\) tokens of each sequence are sampled uniformly from \(\mathcal{S}^k\), and the remaining tokens are generated according to the sampled kernel $\pi$. To sample an input batch, we first sample multiple independent transition kernels, generate sequences from each, and then aggregate these sequences to form a batch for training or evaluation.

\subsection{Transformer Model}
\label{sec:transformer_model}
While there exist several attention-based transformer architectures in literature, we adopt the formulation described in \citet{shaw2018self} and \citet{dai2019transformerxlattentivelanguagemodels}, which employs relative positional encodings. Each layer in our transformer comprises softmax-based self-attention, optionally followed by a multilayer perceptron (MLP). Following the construction outlined in \citet{rajaraman2024transformers}, we define a $L$-layer transformer with multi-headed attention as follows:

\begin{definitionbox}[label=def:transformer_model]{(Multi-head Attention Transformer)}
    \begin{algorithmic}
    \Require Input $x_{0:T}$, number of layers $L$, number of attention heads per layer \(H_\ell\)
    \Ensure Output distribution \(P_\theta(\cdot \mid x_{0:T})\)
    
    \State $\bm{x}_n^{(1)} \gets \texttt{Emb}(x_n)$ for $n = 0, 1, \dots, T$
    \For{$\ell = 1$ to $L$}
      \For{$n = 0$ to $T$}
        \For{$h = 1$ to $H_{\ell}$}
            \State $\widetilde{\bm{x}}_n^{(\ell,\texttt{head:}h)} \coloneqq \sum_{i=0}^n \operatorname{att}_{n,i}^{(\ell,\texttt{head:}h)} \cdot \left( \mathbf{W}_V^{(\ell,\texttt{head:}h)} \bm{x}_i^{(\ell)} + \mathbf{p}_{n-i}^{(\ell,\texttt{head:}h, V)} \right)$ 
        \EndFor
        \State $\widetilde{\bm{x}}_n^{(\ell+1)} \coloneqq \bm{x}_n^{(\ell)} + \sum_{h=1}^{H_{\ell}} \widetilde{\bm{x}}_n^{(\ell,\texttt{head:}h)}$ \Comment{Residual connection}
        \State $\bm{x}_n^{(\ell + 1)} \coloneqq \texttt{MLP}(\bm{x}_n^{(\ell)}, \widetilde{\bm{x}}_n^{(\ell+1)})$ \Comment{MLP with layer normalization and skip connections}
      \EndFor
    \EndFor
    \State $\operatorname{logits}_T \coloneqq \mathbf{W}_{o} \mathbf{x}_T^{(L+1)} + b_{o}$
    \State $P_\theta(\cdot \mid x_{0:T}) \coloneqq f(\operatorname{logits}_T)$, \\
    \end{algorithmic} 
Where the attention weights are defined as:

\begin{align*}
    \operatorname{att}_{n,i}^{(\ell,\texttt{head:}h)} \coloneqq \texttt{softmax}_{i} \left( \left\langle \mathbf{W}_K^{(\ell,\texttt{head:}h)} \bm{x}_i^{(\ell)} + \mathbf{p}_{n-i}^{(\ell,\texttt{head:}h, K)},\; \mathbf{W}_Q^{(\ell,\texttt{head:}h)} \bm{x}_i^{(\ell)} \right\rangle \right).
\end{align*}
\end{definitionbox}

Here \(\ell\) denotes the layer index and \(h\) the index of the attention head within a layer. The matrices \(\mathbf{W}_Q^{(\ell,\texttt{head:}h)}, \mathbf{W}_K^{(\ell,\texttt{head:}h)}, \mathbf{W}_V^{(\ell,\texttt{head:}h)} \in \mathbb{R}^{d \times d}\) are the query, key, and value projection matrices, respectively The vectors \(\mathbf{p}_{n-i}^{(\ell,\texttt{head:}h, K)}\) and \(\mathbf{p}_{n-i}^{(\ell,\texttt{head:}h, V)} \in \mathbb{R}^{d}\) represent the relative positional encodings that modulate attention based on token distance. Finally, the output of the transformer is projected using \(\mathbf{W}_{o} \mathbf{x}_T^{(L+1)} + b_{o} \in \mathbb{R}^{S}\), mapping the representation from the embedding dimension \(d\) to the output vocabulary size \(S\). Although a softmax is typically used for output normalization, in our setup we define \(f(\cdot) = \texttt{ReLU}(\cdot)\). Note that the attention mechanism described is causal self-attention.

\section{Warm Up: Construction with Two Heads in Layer One, One in Layer Two}
\label{sec:twolayertwohead}

As a warm up, in this section, we prove our first main result that we can represent \kth Markov processes with a $2$-layer, $2$-head transformer. Towards the same, we recall that the \kth in-context counting estimator, \ie conditional $k$-gram model, defined in \autoref{sec:markov_background} is closely related to the classical Laplacian smoothing, which in turn is the optimal Bayes estimator for the next-token predictive distribution \cite{laplace1814essaiprob, rissanen1984}. In view of this fact, in order to estimate the optimal conditional distribution of each token in-context, it is natural to ask: \emph{how does a transformer represent this conditional $k$-gram model?} To this end, \citet{rajaraman2024transformers} introduces the notion of a \emph{$k$-th order induction head}. This mechanism enables the attention layer of a transformer to assign at each time step the highest attention weight to past tokens whose length-$k$ context matches that of the current token. For the sake of completeness, we recall the formal definition. 

\begin{definitionbox}[label=def:kthinductionhead]{({\(k\textsuperscript{th}\)-order induction head}) }
An attention layer with a single head is said to implement a \(k\textsuperscript{th}\) order induction head if, for any input sequence 
\((x_0, x_1, \ldots, x_T) \in \mathcal{S}^{T+1}\), and for any fixed index \(i \leq T\), the attention score \(\text{att}_{T, i}\) at position $i$ is maximized if and only if its preceding \(k\) tokens exactly match the final \(k\) tokens of the sequence. That is, \(\text{att}_{T, i}\) is maximized if and only if \(x_{i-j} = x_{T - j + 1}\) for all \(i \in \{1, 2, \ldots, k\}\).
\end{definitionbox}

The salience of this higher-order induction head is immediately reflected by the fact that such a mechanism is indeed what is precisely needed to implement the conditional $k$-gram estimator from \autoref{def:condkgram}. Capitalizing on this, \citet{rajaraman2024transformers} proves the best known result for higher-order Markov processes, through a $3$-layer, $1$-head transformer architecture, which we recall below to motivate our main result: \looseness=-1


\begin{theorembox}[label=thm:threelayer]
{(Theorem 4 in \citet{rajaraman2024transformers}): } The conditional \(k\)-gram model can be represented using a transformer with three layers, each containing a single attention head. The layers are separated by MLP blocks and include relative positional encodings as well as layer normalization. The embedding dimension scales as \(\mathcal{O}(S)\).  
\end{theorembox}

\begin{proof}[Proof sketch.] We provide a brief outline of the proof of \autoref{thm:threelayer}. For each position \( n \) in the sequence \( s_{0:T} \), the first attention layer effectively learns a hard attention map concentrated on the indices \( \{n, n-1, \ldots, n-k+1\} \). Specifically, within a subspace of its embedding space, this layer computes \( \bm{u}_n = \pth{\sum_{i=0}^{k-1} 3^{i} \cdot e^{S}_{x_{n-i}}}/\pth{\sum_{i=0}^{k-1} 3^{i}} \), with attention weights \( \operatorname{attn}_{n, i} = 3^i/\pth{\sum_{j=0}^{k-1} 3^j} \) for \( i \in \{n, n-1, \ldots, n-k+1\} \), and zero otherwise. Intuitively,  this $\bm{u}_n$ allows to capture the past-$k$ context starting at position $n$. Similarly, the second attention layer focuses on the positions \( \{n-1, n-2, \ldots, n-k\} \), and computes \( \bm{v}_n = \pth{\sum_{i=1}^{k} 3^{i} \cdot e^{S}_{x_{n-i}}}/\pth{\sum_{i=1}^{k} 3^{i}} \), capturing the context shifted by one position. After these two layers, the embedding at position \( n \) contains both \( \bm{v}_n / \|\bm{v}_{n}\|_2\) and the normalized vector \( \bm{u}_n / \|\bm{u}_n\|_2 \), assuming appropriate choices of embedding dimensions, query/key/value matrices, positional encodings, and MLP layers with layer normalization. In particular, the architecture can be configured to compute the \( \ell_2 \)-norm via layer norm, as detailed in \autoref{app:layernorm}. The third attention layer then functions as a \( k \)-th order induction head: at position \( n \), the attention score is made proportional to the cosine similarity \( {\langle \bm{u}_T, \bm{v}_n \rangle}/\pth{\|\bm{u}_T\|_2 \|\bm{v}_n\|_2} \). This quantity equals 1 when \( \bm{u}_T = \bm{v}_n \), and is strictly less than 1 otherwise. As the softmax temperature tends to infinity, or equivalently, as the attention logits are scaled by a constant tending to infinity, the model approaches the behavior of a conditional \( k \)-gram estimator. Hence, the third layer effectively implements a \( k \)-th order induction head.
\end{proof}

We are now ready to prove our first main result. Specifically, we show that the aforementioned construction with $3$ layers and a single head can be adapted to a two-layer counterpart with two heads in the first layer and one head in the second.

\begin{theorembox}[label=thm:twolayertwoheads]
{(Two heads in the first layer, one in the second): }  The conditional \(k\)-gram model can be represented using a transformer with two layers, the first containing two attention heads and the second containing a single attention head. The layers are separated by MLP blocks and include relative positional encodings as well as layer normalization. The embedding dimension is \(6S + 3\).  
\end{theorembox}

\begin{remark}\normalfont (Depth-width tradeoff)
\autoref{thm:twolayertwoheads} thus reveals an interesting tradeoff between depth (number of attention layers) and width (maximum number of attention heads per layer). While our construction here has both depth and width two, its counterpart in \autoref{thm:threelayer} instead has depth three and width one. Thus our new architecture trades off depth with an additional attention head, thereby increasing the width. 
\end{remark}

\begin{proof}[Proof sketch.] We provide a brief sketch of the proof for \autoref{thm:twolayertwoheads}, using the same notation as that of the \autoref{thm:threelayer}, and focusing on the key differences between the two architectures. 

\paragraph{Layer One.} For each  position \( n \) in the sequence \( s_{0:T} \), the first attention head in the first layer learns a focused attention pattern over the positions \( \{n, n-1, \ldots, n-k+1\} \). In a subspace of the embedding space, this head computes the representation \( \bm{u}_n = \pth{\sum_{i=0}^{k-1} 3^{i} \cdot e^{S}_{x_{n-i}}}/\pth{\sum_{i=0}^{k-1} 3^{i}} \). On the other hand, the second attention head in this layer focuses on the preceding context, attending to positions \( \{n-1, n-2, \ldots, n-k\} \), and computes the representation \( \bm{v}_n = \pth{\sum_{i=1}^{k} 3^{i} \cdot e^{S}_{x_{n-i}}}/\pth{\sum_{i=1}^{k} 3^{i}} \). After this layer, the embedding at position \( n \) contains both the vectors \( \bm{u}_n / \|\bm{u}_n\|_2 \) and \( \bm{v}_n / \|\bm{v}_n\|_2 \), achieved by choosing an appropriate choice of embedding dimensions, query, key, and value matrices, positional encodings, and MLPs with layer normalization.

\paragraph{Layer Two.} The second attention layer now plays the role of an induction head, analogous to the third layer in Theorem~\ref{thm:threelayer}. At position \( n \), this head computes an attention score that is proportional to the cosine similarity \( {\langle \bm{u}_T, \bm{v}_n \rangle}/{\|\bm{u}_T\|_2 \|\bm{v}_n\|_2} \). This score attains the maximum value of one when \( \bm{u}_T = \bm{v}_n \), and strictly less otherwise. As the temperature parameter in the softmax becomes large, or equivalently as the inner product is multiplied by a large constant, the attention mechanism approaches the behavior of a conditional \( k \)-gram estimator. Therefore, this two-layer transformer, with two heads in the first layer and a single head in the second, is sufficient to represent the conditional \( k \)-gram model. We refer to \autoref{app:twoheads} for the full proof.
\end{proof}


\section{Main Result: Two-Layer Single-Head Construction}
\label{sec:twolayersinglehead}

\begin{figure}[htbp]
  \centering
  \includegraphics[width=1\textwidth]{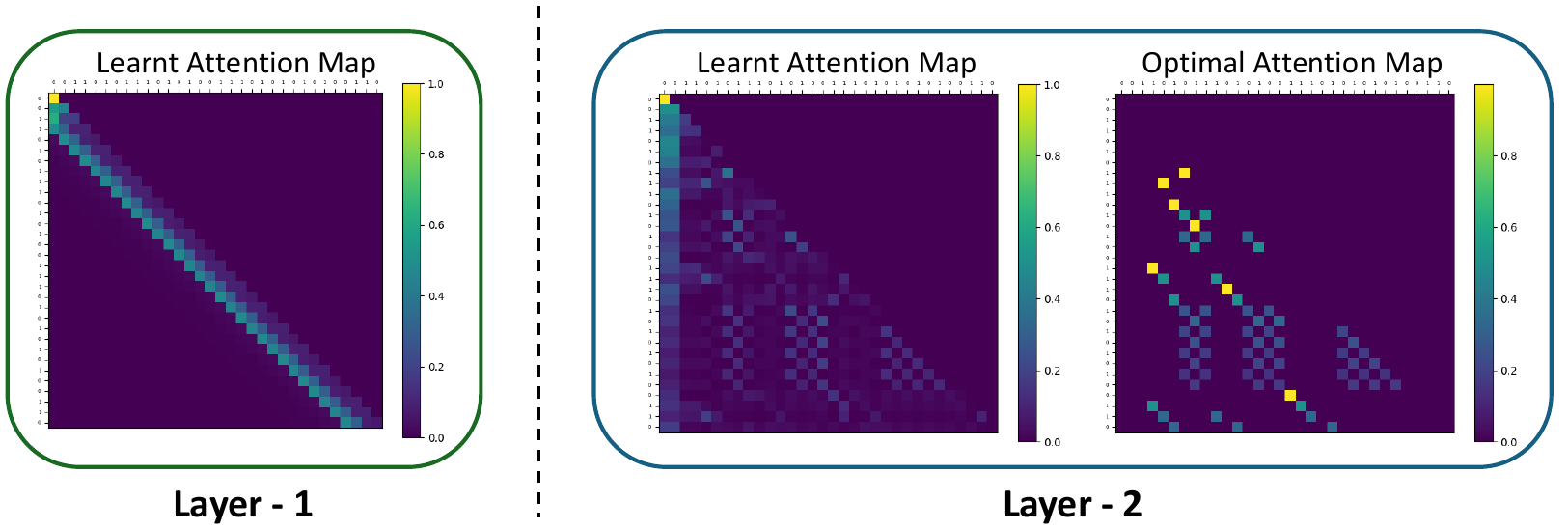}
  \caption{Attention maps learnt by a two-layer, single-head transformer trained on sequences generated by random Markov chains of order $3$ (\autoref{sec:markov_background}). (i) in the first layer, the attention map shows a clear pattern: attention weights increase monotonically along the first three lower-diagonals and drop to zero beyond that. This suggests that the relative positional bias is maximized the diagonal with an offset of \(-k=-3\), \ie, the third diagonal below the main diagonal, which is consistent with the construction in \autoref{sec:twolayersinglehead}, (ii) in the second layer, the attention map closely resembles the ideal attention pattern required to approximate the conditional \(k\)-gram estimator. We note that all experiments were conducted using standard initialization schemes. Additional experiments with different orders of markov chains and experimental details are provided in \autoref{app:experiments}. In \autoref{subsec:onelayertransformers}, we also experimentally demonstrate that single-layer transformers fail to solve the induction head task with the same order of parameters. Finally, in \autoref{sec:noisy-sequences}, we test the robustness of the two-layer, single-head model to noise in the input sequences.
 }
  \label{fig:optimalattention}
\end{figure}

While \autoref{thm:twolayertwoheads} above demonstrates that the conditional $k$-gram can be represented by a two-layer transformer, it however relies on a two-head construction, thus still leaving open our motivating question:  \emph{can a $2$-layer, $1$-head transformer represent the conditional $k$-gram?} In this section, we precisely address this gap and realize a  $2$-layer, $1$-head model to represent \kth Markov processes for any $k \geq 1$. The key idea behind our approach is to leverage the MLP—and specifically, the non-linearities such as ReLU and LayerNorm—to isolate symbols at particular positions that are useful for estimating induction heads. This contrasts with prior constructions, which primarily focused on the attention mechanism while underutilizing the role of the MLP and its non-linear components. As a result, our construction highlights how these non-linear MLP elements can play a critical role in enabling in-context learning, demonstrating their significance. We now present our main result. \looseness=-1


\begin{theorembox}[label=thm:twolayersinglehead]
{(Two-layer, single-head construction): }  The conditional \(k\)-gram model can be represented using a transformer with two layers, both containing a single attention head. The layers are separated by MLP blocks and include relative positional encodings as well as layer normalization. The embedding dimension is \(6S + 3\).
\end{theorembox}

\begin{remark}\normalfont (Parameter count and bit precision)
We note that the above construction utilizes $9(6S+3)^2 + (6S+3)(2T + 2S + 9)$ parameters. A slightly more efficient construction in terms of parameter count  with $8(6S+3)^2 + (6S+3)(2T + 2S + 9)$ parameters can be found in \autoref{app:singlehead}. For both these results, it suffices to have \(\Omega(\log(T) + k)\) bits per parameter, with \(\mathcal{O}(1/T)\) additive approximation error, where $T$ is the sequence length.
\end{remark}

\begin{proof}[Proof Sketch.] We now provide an outline of the proof of \autoref{thm:twolayersinglehead}, using notation consistent with the constructions in \autoref{thm:threelayer} and \autoref{thm:twolayertwoheads}. 

\paragraph{Layer One.}For each position \( n \) in the sequence \( s_{0:T} \), the attention head in the first layer attends to the preceding context, focusing on positions \( \{n-1, n-2, \ldots, n-k\} \), and computes the representation \( \bm{v}_n = \pth{\sum_{i=1}^{k} 3^i \cdot e^{S}_{x_{n-i}}} / \pth{\sum_{i=1}^{k} 3^i}\), with attention weights \( \operatorname{attn}_{n, i} = 3^i/\pth{\sum_{j=1}^{k} 3^j} \) for \( i \in \{n-1, \ldots, n-k\} \), and zero otherwise. This matches the optimal attention pattern for Layer-1 illustrated in \autoref{fig:optimalattention}. 

\paragraph{MLP.}In contrast to previous constructions, the  MLP situated between the two attention layers plays a central role in our proof and is structured to reconstruct the complementary representation \( \bm{u}_n = \pth{\sum_{i=0}^{k-1} 3^i \cdot e^{S}_{x_{n-i}} }/\pth{\sum_{i=0}^{k-1} 3^i} \), which corresponds to a hard attention pattern over the positions \( \{n, n-1, \ldots, n-k+1\} \). Recall that this $\bm{u}_n$ was constructed in \autoref{thm:twolayertwoheads} thanks to a second attention head, absent in our architecture.

Our main idea to tackle this issue is to isolate the one-hot embedding \( e^{S}_{x_{n-k}} \) from $\bm{v}_n$ computed in the first layer. This is achieved by embedding the entire one-hot vocabulary within the weight matrix of the first MLP layer. When combined with a suitable bias and followed by a ReLU activation and layer normalization, this configuration filters out all but the component aligned with \( x_{n-k} \), thereby producing \( e^{S}_{x_{n-k}} \). At this stage, the embedding also retains the representation \( \bm{v}_n \), computed by the first attention layer. Subsequent layers, along with appropriately configured skip connections, implement a linear transformation that combines \( e^{S}_{x_{n-k}} \), \( \bm{v}_n \), and the token embedding \( e^{S}_{x_n} \), the latter of which is introduced via a skip connection. Since \( \bm{v}_n \) and \( e^{S}_{x_{n-k}} \) occupy orthogonal subspaces of the embedding, and all required components are explicitly available, this construction yields the intermediate representation \(
\bm{u}_n = \pth{e^{S}_{x_n} / \sum_{i=0}^{k-1} 3^i} + \pth{3e^{S}_{x_{n-k}} \bm{v}_n} /\pth{\sum_{i=0}^{k-1} 3^i} - \pth{3^k \cdot e^{S}_{x_{n-k}}}/ \pth{\sum_{i=0}^{k-1} 3^i} ,
\) which simplifies exactly to the desired form \( \bm{u}_n = \pth{\sum_{i=0}^{k-1} 3^i \cdot e^{S}_{x_{n-i}} } / \pth{\sum_{i=0}^{k-1} 3^i} \). With appropriately chosen weights, layer normalization, and skip connections, the resulting embedding at position \( n \) encodes both \( \bm{u}_n / \|\bm{u}_n\|_2 \) and \( \bm{v}_n / \|\bm{v}_n\|_2 \). 

\paragraph{Layer Two.}As in the second layer of \autoref{thm:twolayertwoheads}, the attention mechanism in the second layer is configured to act as a \( k \)\textsuperscript{th}-order induction head by computing an attention score at position \( n \) that is proportional to the cosine similarity \( \pth{\langle \bm{u}_T, \bm{v}_n \rangle / \|\bm{u}_T\|_2 \|\bm{v}_n\|_2} \). This score reaches its maximum value of one when \( \bm{u}_T = \bm{v}_n \), and is strictly less than one otherwise. As the softmax temperature tends to infinity, or equivalently as the attention logits are scaled by a constant that tends to infinity, the resulting distribution converges to that of a conditional \( k \)-gram estimator. \autoref{app:singlehead} details the complete proof. \looseness=-1
\end{proof}

\paragraph{Importance of non-linearities.}This construction demonstrates that non-linear components such as ReLU activations and layer normalization are not merely auxiliary, but play a critical role in enabling the transformer architecture to express higher-order inductive structures that are essential for in-context learning.

\paragraph{Note.} Due to the problem’s non-convexity, multiple constructions may enable two-layer, single-head transformers to implement a $k$\textsuperscript{th}-order induction head; our approach offers one such construction motivated by two key insights. First, empirical observation: for datasets generated from $k$\textsuperscript{th}-order Markov chains, we observe that in a trained transformer model, the first-layer attention weights increase monotonically along the first $k$ lower diagonals before dropping sharply to near-zero (see \autoref{fig:optimalattention} and \autoref{app:other_order_markov_chains}). Second, prior theoretical construction: as demonstrated in \citet{rajaraman2024transformers}, the attention pattern can be designed to be proportional to a dyadic sum representation. Taken together, they offer both empirical and theoretical grounding for our construction of $k$\textsuperscript{th}-order induction heads. 

\section{Gradient Descent Analysis}
\label{sec:singleheadperturb}

\autoref{thm:twolayersinglehead} establishes a representation result showing that a two-layer transformer with single attention heads can implement the conditional $k$-gram model. However, this does not guarantee that current optimization algorithms such as gradient descent (or its stochastic variants) can recover such a solution in practice. To this end, in this section we focus on first-order Markov chains and show that gradient descent on a reduced variant of our two-layer transformer indeed learns an induction head, and thereby the in-context conditional $1$-gram. On this note, we would like to emphasize that first-order Markov analysis a critical first-step and a building block for challenging higher-order analysis \cite{nichani2024how}.

Building towards our result, we quickly recall a few recent works \cite{edelman2024evolution, nichani2024how} that also analyze the gradient dynamics on first-order Markov data with simplified transformer architectures. Specifically, \citet{edelman2024evolution} study an attention-only linear transformer without MLPs and softmax operations, while \citet{nichani2024how} analyze a disentangled transformer, with residual and attention streams split. On the other hand, our transformer architecture in \autoref{def:transformer_model} consists of relative positional encodings, softmax attention, MLP with ReLU activations, and layer normalization. While non-linearities such as ReLU and layer normalization play a crucial role in our representation result, as illustrated in \autoref{sec:twolayersinglehead}, at the same time, they make the gradient analysis challenging and intractable even for a simple first-order Markov setup.

To tackle this, we study a reduced model wherein we only treat the salient components of our two-layer architecture in \autoref{thm:twolayersinglehead}, such as positional encodings and attention layer, as parameters, and treating the rest as approximately optimal. This enables for a a faithful reproduction of the ICL phenomenon exhibited by the parent model, whilst being tractable. This is akin to the reparameterization strategies of \citet{makkuva2024local} and \citet{nichani2024how} for gradient-flow/gradient-descent analysis. Alternatively, this can also be viewed as a form of \emph{good parameter initialization}, similar to that of \cite{edelman2024evolution, nichani2024how}. We now start with our technical assumptions on the data distribution and the transformer architecture. We use notation from \autoref{sec:transformer_model}, dropping the head-identifying superscript.

\paragraph{Assumptions.} 
\begin{enumerate}[label=(\roman*)]
    \item Data distribution assumptions:  Following \cite{nichani2024how}, we assume the prior over transition kernels enforces non-degeneracy, positive transitions, and constant mean (see \autoref{app:singleheadperturb}).
    \item The positional embeddings used for the keys in the first attention layer are scalar-valued, \ie \( \bm{p}_n^{(1,K)} \propto \,  p_n \), where \( p_n \) is a scalar. 
    \item $\mathbf{W}_{Q,K,V}^{(1)}$ and $\bm{p}^{(1,V)}$ are chosen so that the attention weight for position $n-i$ is given by $\operatorname{att}_{n, n-i}^{(1)} = \exp(p_i) / \pth{\sum_{j=0}^{i} \exp(p_j)} \in \binary$, and the corresponding value output is $\bm{v}_n = \sum_{i=0}^{n} \operatorname{att}_{n, n-i}^{(1)} \cdot \,  e_{x_{n-i}}^S$. 
    \item We assume the MLP is optimal—\ie, it outputs the correct $\bm{u}_n$ even from suboptimal $\bm{v}_n$; this holds for first-order Markov chains via skip connections.
    \item $\mathbf{W}_{Q,K}^{(2)}$ are chosen such that the attention in this layer is defined as $\operatorname{att}_{T,i}^{(2)} = \exp \pth{ a_2 \cdot \langle \bm{v}_i, \bm{u}_T \rangle }/\pth{{\sum_{j=0}^{T}  \exp(a_2 \cdot \langle \bm{v}_j, \bm{u}_n \rangle)}} $, where $a_2 \in \reals$ is the attention scalar. 
    \item $\mathbf{W}_{V}^{(2)}$ is chosen so that the final logit $\operatorname{logit}_T = \sum_{i=0}^T \operatorname{att}_{n,i}^{(2)} e_{x_i} \in \reals^S$,
\end{enumerate}

\paragraph{Loss function.} We minimize the cross-entropy loss for next-token prediction \cite{nichani2024how}:
    \begin{align}
        \label{eq:loss}
        \mathcal{L}(\theta) = - \mathbb{E}_{\pi \sim \mathbb{P}_\pi,\ x_{0:T} \sim \pi} \left[\sum_{s \in \mathcal{S} } \pi\left(x_{T+1} = s \mid x_{0: T}\right) \log \left(\operatorname{logit}_{T}(s) + \varepsilon\right)\right],
    \end{align}
where $\varepsilon >0$ is a small additive constant, \(\mathbb{P}_\pi(\cdot)\) denotes the prior over Markov transition kernels, and \(\operatorname{logit}_{T}(s) \) is the logit corresponding to the symbol $s$.

\noindent {\bf Training algorithm.} We denote the set of trainable parameters as $\theta=(\bm{p}, a_2)$, where the positional scalars \(\bm{p} = [p_0, p_1, \ldots, p_T]^T\) and $a_2$ is the attention scalar. Similar to \cite{nichani2024how}, we adopt a two-stage training procedure. (i) In the first stage, we optimize the positional scalars \(\bm{p}\) using gradient descent for \(T_1\) steps with learning rate \(\eta_1\), (ii) in the second stage, we freeze \(\bm{p}\) and train only \(a_2\) for an additional \(T_2\) steps using a separate learning rate \(\eta_2\). A key distinction in our setup is how we treat layer normalization during training. While in the first stage we omit all layer norms in the MLP, during the second step we reintroduce them. This stage-wise enables a tractable theoretical analysis. We refer to \autoref{app:singleheadperturb} for further details. We now present our main result:
\begin{theorembox}[label=thm:grad]{(Convergence of the training Algorithm): }  Suppose that Assumptions (i)- (vi) hold and that the Markov sequence length $T$ satisfies \(T + 1 \geq \mathrm{poly}(\gamma^{-1}, S)\) for some constant \(\gamma > 0\). Then there exist \(\varepsilon > 0\), learning rates \(\eta_1, \eta_2\), and step counts \(T_1, T_2\), such that the output of the above two-stage training algorithm, \(\hat{\theta} = (\{ \hat{p}_i \}_{i=0}^T, \hat{a}_2)\), satisfies
\begin{align*}
    \mathcal{L}(\hat{\theta}) - \mathcal{L}^* &\lesssim \frac{\log(T + 1)}{(T + 1)^{c \gamma}},
\end{align*}
for a constant $c$ independent of \( (\gamma, S ) \), and the optimal loss \(\mathcal{L}^{*} \coloneqq - \mathbb{E} \left[ (1/S) \sum_{s, s'} \pi(s' \mid s) \log \pi(s' \mid s) \right]\).
\end{theorembox}

Thus \autoref{thm:grad} illustrates that the transformer model trained via the two-stage algorithm achieves near-optimal loss asymptotically. Furthermore, we can also show that $\hat{\theta}$ approximates the conditional $1$-gram model, \ie first-order induction head, with vanishing error as sequence length grows (see \autoref{app:singleheadperturb}). For completeness, perturbative analysis of the architecture consisting of two attention heads in the first layer and one in the second (see \autoref{sec:twolayertwohead}) for any-order Markov chains can be found in \autoref{app:twoheadperturb}.

\section{Conclusion}
In this paper, improving over prior three-layer constructions, we show that a two-layer, single-head transformer can represent any $k$-th order Markov process. Thus our result gives the tightest known depth characterization for in-context learning of conditional $k$-grams. Additionally, we provide a gradient descent analysis of how such representations emerge during training on first-order Markov data. While these results deepen our theoretical understanding of ICL and show that compact transformer architectures can be both expressive and learnable, several open questions remain. In particular, fully characterizing the learning dynamics for higher-order processes is an important avenue of future research. Further, we view our contributions as a first step toward understanding the fundamental limits of ICL in compact transformer architectures, offering not only a tractable setting for theoretical analysis, but also a potential pathway toward more efficient model designs.

\section*{Acknowledgements}

This work was supported in part by the Swiss National Science Foundation under Grant No. 200364. We are grateful to Eshaan Nichani for insightful discussions and valuable input throughout the preparation of this paper.

\newpage


\bibliographystyle{plainnat}
\bibliography{references}


\newpage

\appendix

\vspace{1em}
\begin{center}
\rule{\textwidth}{1pt}\\[0.75em]
{\LARGE\bfseries Appendix}\\[0.5em]
\rule{\textwidth}{1pt}
\end{center}
\vspace{1em}
\tableofcontents

\newpage

\section{Two Heads in Layer One, One in Layer Two Construction}
\label{app:twoheads}

\subsection{Modification of the Layer Norm}
\label{app:layernorm}

Following \citet{rajaraman2024transformers} (Appendix C.1 in \citet{rajaraman2024transformers}), we use a modified version of layer normalization to extract the $\ell_2$-normalized direction of the embedding $\bm{x}_n$. Specifically, we augment the embedding by concatenating it with its negation to form $[\bm{x}_n; -\bm{x}_n]$, which ensures that the mean of the features is zero. Applying standard layer normalization to this vector yields $\sqrt{d} \cdot \bm{x}_n / \|\bm{x}_n\|$ in the first $d$ components, since the standard deviation of the augmented vector is $\|\bm{x}_n\| / \sqrt{d}$. To recover the unit-normalized embedding $\bm{x}_n / \|\bm{x}_n\|$, we simply divide the first $d$ components of the output by $\sqrt{d}$, and extract the first \(d\) components. Throughout this work, we assume this normalization and scaling step is handled implicitly, and we interpret layer normalization as returning the normalized embedding direction. In order to avoid explicitly repeating this transformation at each layer, we redefine the layer norm to implicitly return $\bm{v} / \|\bm{v}\|_2$ for any input vector $\bm{v}$. Hence, $\bm{v} / \|\bm{v}\|_2 = \lnorm(\bm{v})$, where, \(\lnorm(\cdot)\) denotes the modified layer norm operator. Note that this modified definition is used throughout this paper.

\subsection{Construction}
\label{app:sec:two_layer_two_head}

\begin{theorembox}{(Two Layer, Two Heads in the First Layer). } We claim that there exists a two-layer attention-based Transformer model \(f_\theta\), consisting of two attention heads in the first layer and a single attention head in the second layer, which can be configured in one of two equivalent ways: (1) with two MLP layers inserted between the first and second attention blocks, or (2) with no intermediate MLP layers, but with layer normalization applied to the query and key projections in the second attention block, such that:

\[
 \quad f_\theta(s_{1:T})_{s'} \approx \pi\bigl(s' \mid s_{1:T}\bigr) \quad \forall\, (s_{1:T},\, s') \sim \pi, \pi \sim \mathbb{P}_\pi.
\]

Note that \(\mathbb{P}_\pi\) denotes an arbitrary distribution over trajectories, where each trajectory is generated using a Markovian transition kernel \(\pi\). The kernel \(\pi\) corresponds to a \(k^\text{th}\)-order Markov chain.
\end{theorembox}

Importantly, in both configurations, the input embeddings and the structure of the first layer (\textbf{Layer 1}) remain identical. We define the input embeddings as follows:

\begin{align*}
    \bm{x}^{(1)}_n = \texttt{Emb} (x_n) = \begin{bmatrix}
        \bm{0}_{1 \times 3} & e_{x_{n}}^S & \bm{0}_{1 \times 5S}
    \end{bmatrix}^T \in \mathbb{R}^{6S + 3}
\end{align*}

In the first layer, for both the heads we use the following relative positional embeddings

\begin{align*}
    \bm{p}_{i}^{(\texttt{head:1}),K} =  \bm{p}_{i}^{(\texttt{head:2}),K}= \begin{cases}
        {\kappa} \cdot \begin{bmatrix} 1 & 0 & \bm{0}_{1 \times (1 + 6S) } \end{bmatrix}^T, & \text{if } i=0, \\
    ({i} \cdot \log (3) + {\kappa}) \cdot \begin{bmatrix} 0 & 1 & \bm{0}_{1 \times (1 + 6S) } \end{bmatrix}^T, & \text{if } i \in \{ 1,2,\cdots, {k-1}\}, \\
    ({i} \cdot \log (3) + {\kappa}) \cdot \begin{bmatrix} 0 & 0 & 1 & \bm{0}_{1 \times (6S) } \end{bmatrix}^T, & \text{if } i = {k}, \\
    {\bm{0}}, & \text{if } i>k
    \end{cases},
    \label{eq:pemb}
\end{align*}

and the same, value embeddings,

\begin{align*}
    \bm{p}_i^{(\texttt{head:1}),V} =  \bm{p}_i^{(\texttt{head:2}),V} = \begin{cases}
        3^i \begin{bmatrix}
    1 & \bm{0}
\end{bmatrix}^T \quad &\text{for } i \le k \\
    \bm{0} & i > k.
    \end{cases}
\end{align*}

\paragraph{Layer 1, Head 1.}  Consider the key and query matrices,

\begin{align*}
\begin{split}
    \bm{W}^{(\texttt{head:1})}_K &= \begin{bmatrix} \bm{1}_{1 \times 2 } & \bm{0}_{1 \times 1} & \bm{0}_{1 \times 6S} \\ \bm{0}_{(2 + 6S) \times 2} & \bm{0}_{(2+6S)\times1} & \bm{0}_{(2 + 6S)\times(2+6S)} \end{bmatrix} \\
    \bm{W}_Q^{(\texttt{head:1})} &= \begin{bmatrix} \bm{0}_{1 \times 3} & \bm{1}_{1 \times S} & \bm{0}_{1 \times 5S} \\ \bm{0}_{(2+6S) \times 3} & \bm{0}_{(2+6S)\times S} & \bm{0}_{(2 + 6S)\times 5S} \end{bmatrix}
\end{split}
\end{align*}

On computing (for $i=0$ to $i=k-1$)

\begin{align*}
    \begin{split}
         \bm{W}^{(\texttt{head:1})}_K \texttt{Emb} (x_{n-i}) &= \bm{0} \\ 
         \bm{W}^{(\texttt{head:1})}_K \bm{p}_i^{(\texttt{head:1}),K} &=  
            \begin{bmatrix}
              ({i} \cdot \log(3) + {\kappa}) & \bm{0}_{(2+6S) \times 1} 
              \end{bmatrix}^T \\ 
         \bm{W}_Q^{(\texttt{head:1})} \texttt{Emb}(x_n) &=  \begin{bmatrix}
             1_{1 \times 1} & \bm{0}_{1 \times (2 + 6S)}
         \end{bmatrix}^T
    \end{split}
\end{align*}

Then, observe that,

\begin{align*} \nonumber
    \left\langle \bm{W}_K^{(\texttt{head:1})} \big( \texttt{Emb} (x_{n-i}) + \bm{p}_i^{(\texttt{head:1}),K} \big), \bm{W}_Q^{(\texttt{head:1})} \texttt{Emb} (x_n) \right\rangle =  ({i} \cdot \log (3) + {\kappa}) \cdot \mathbb{I}(0 \le i \le \min \{ n, k-1 \}) 
\end{align*}

Letting $\kappa \to \infty$, this results in the attention pattern,

\begin{align*} 
    \operatorname{att}^{(\texttt{head:1})}_{n,n-i} = \frac{3^i \mathbb{I} (0 \le i \le \min \{n,k-1\})}{\sum_{i'=0}^{\min \{ n, k-1 \}} 3^{i'}} = \frac{3^{i} \mathbb{I} (0 \le i \le \min \{n,k-1\})}{\sum_{i'=0}^{\min \{ n, k \}-1} 3^{i'}} = \frac{3^{i} \mathbb{I} (0 \le i \le \min \{n,k-1\})}{C_{\texttt{attn}}}
\end{align*}

Choose the value matrix as,

\begin{align*}
    \bm{W}_V^{(\texttt{head:1})} = \begin{bmatrix}
        \bm{0}_{3 \times 3} & \bm{0}_{3 \times S} & \bm{0}_{3\times 4S} \\
        \bm{0}_{3S \times 3} & \bm{0}_{S \times S} & \bm{0}_{3S \times 4S} \\
        \bm{0}_{S \times 3} & I_{S \times S} & \bm{0}_{S \times 4S} \\
        \bm{0}_{2S \times 3} & \bm{0}_{S \times S} & \bm{0}_{2S \times 4S} \\
    \end{bmatrix} 
\end{align*}

Hence, the output of the attention head is:

\begin{align*}
    \widetilde{\bm{x}}_n^{(\texttt{head:1})} &= \left[ \begin{array}{c|c|c|c|c|c
    }
        0 & \bm{0}_{1 \times 2} & \bm{0}_{1 \times S}  & \bm{u}_n & \bm{0}_{1 \times 2S}  & \bm{0}_{1 \times 2S}
    \end{array} \right]^T, \\
    \text{ where, } \bm{u}_n &= \sum_{i=0}^{\min \{n,k-1\}} \operatorname{att}_{n,n-i}^{\texttt{head:1}} e^S_{x_{n-i}}
\end{align*}

\paragraph{Layer 1, Head 2.}  Consider the key and query matrices,

\begin{align*}
\begin{split}
    \bm{W}^{(\texttt{head:2})}_K &= \begin{bmatrix} 0_{1 \times 1 } & \bm{1}_{1 \times 2} & \bm{0}_{1 \times 6S} \\ \bm{0}_{(2 + 6S) \times 1} & \bm{0}_{(2+6S)\times2} & \bm{0}_{(2 + 6S)\times(2+6S)} \end{bmatrix} \\
    \bm{W}_Q^{(\texttt{head:2})} &= \begin{bmatrix} \bm{0}_{1 \times 3} & \bm{1}_{1 \times S} & \bm{0}_{1 \times 5S} \\ \bm{0}_{(2+6S) \times 3} & \bm{0}_{(2+6S)\times S} & \bm{0}_{(2 + 6S)\times 5S} \end{bmatrix}
\end{split}
\end{align*}

On computing (for $i=1$ to $i=k$)

\begin{align*}
    \begin{split}
         \bm{W}^{(\texttt{head:2})}_K \texttt{Emb} (x_{n-i}) &= \bm{0} \\ 
         \bm{W}^{(\texttt{head:2})}_K \bm{p}_i^{(1),K} &=  
            \begin{bmatrix}
              ({i} \cdot \log(3) + {\kappa}) & \bm{0}_{(2+6S) \times 1} 
              \end{bmatrix}^T \\ 
         \bm{W}_Q^{(\texttt{head:2})} \texttt{Emb}(x_n) &=  \begin{bmatrix}
             1_{1 \times 1} & \bm{0}_{1 \times (2 + 6S)}
         \end{bmatrix}^T
    \end{split}
\end{align*}

Then, observe that,

\begin{align*} \nonumber
    \left\langle \bm{W}_K^{(\texttt{head:2})} \big( \texttt{Emb} (x_{n-i}) + \bm{p}_i^{(\texttt{head:2}),K} \big), \bm{W}_Q^{(\texttt{head:2})} \texttt{Emb} (x_n) \right\rangle =  ({i} \cdot \log (3) + {\kappa}) \cdot \mathbb{I}(1 \le i \le \min \{ n, k \}) 
\end{align*}

Letting $\kappa \to \infty$, this results in the attention pattern,

\begin{align*} 
    \operatorname{att}^{(1)}_{n,n-i} = \frac{3^i \mathbb{I} (1 \le i \le \min \{n,k\})}{\sum_{i'=1}^{\min \{ n, k \}} 3^{i'}} = \frac{3^{i-1} \mathbb{I} (1 \le i \le \min \{n,k\})}{\sum_{i'=0}^{\min \{ n, k \}-1} 3^{i'}} = \frac{3^{i-1} \mathbb{I} (1 \le i \le \min \{n,k\})}{C_{\texttt{attn}}}
\end{align*}

Choose the value matrix as,

\begin{align*}
    \bm{W}_V^{(1)} = \begin{bmatrix}
        I_{3 \times 3} & \bm{0}_{3 \times S} & \bm{0}_{3\times 4S} \\
        \bm{0}_{3S \times 3} & \bm{0}_{S \times S} & \bm{0}_{3S \times 4S} \\
        \bm{0}_{S \times 3} & I_{S \times S} & \bm{0}_{S \times 4S} \\
        \bm{0}_{2S \times 3} & \bm{0}_{S \times S} & \bm{0}_{2S \times 4S} \\
    \end{bmatrix} 
\end{align*}

The output of the second attention head is,

\begin{align*}
    \widetilde{\bm{x}}_n^{(\texttt{head:2})} &= \left[ \begin{array}{c|c|c|c|c|c
    }
        Z_n & \bm{0}_{1 \times 2} & \bm{0}_{1 \times S}  & \bm{0}_{1 \times 2S} & \bm{v}_n & \bm{0}_{1 \times 2S}
    \end{array} \right]^T, \\
    \text{ where, } \bm{v}_n &= \sum_{i=1}^{\min \{n,k\}} \operatorname{att}_{n,n-i} e^S_{x_{n-i}} \\
    Z_n & = \sum_{i=1}^{\min \{k,n-1\}} \operatorname{att}_{n,n-i} 3^i,
\end{align*}

It is straightforward to verify that $Z_n = 3^{k+1}/5$ when $n \geq k+1$, and that $Z_n \leq 3^k/5$ otherwise. This observation will be useful later, as the value of $Z_n$ serves as a signal for whether $n \geq k+1$ or $n \leq k$. In particular, this distinction enables the next layer to bypass attention computations for indices $i \leq k$, where the condition $x_n = x_{i-1}, \ldots, x_{n-k+1} = x_{i-k}$ is not well defined.

Therefore, with skip connections:

\begin{align*}
    \widetilde{\bm{x}}_n^{(1)} &= \bm{x}_n^{(1)} + \widetilde{\bm{x}}_n^{(\texttt{head:1})} + \widetilde{\bm{x}}_n^{(\texttt{head:2})} \\
    &= \left[ \begin{array}{c|c|c|c|c|c|c
    }
        Z_n & \bm{0}_{1 \times 2} & e_{x_n}^S  &  \bm{u}_n & \bm{0}_{1 \times S} & \bm{v}_n & \bm{0}_{1 \times 2S}
    \end{array} \right]^T
\end{align*}

\subsubsection{Two Layer MLP followed by Second Attention Layer}

\paragraph{MLP.} The first layer for the MLP is :

\begin{align*}
    \bm{W}_{\texttt{mlp}}^{(1)} &= \begin{bmatrix}
        \bm{0}_{(3+ S) \times (3 + 3S)}  &  \bm{0}_{(3+ S) \times (S)} &  \bm{0}_{(3+ S) \times (2S)}\\ 
        \bm{0}_{S \times (3+3S)} & \bm{0}_{S \times S} & \bm{0} _{S \times 2S} \\
        \bm{0}_{S \times (3+3S)} & I_{S \times S} & \bm{0} _{S \times 2S} \\
        \bm{0}_{3S \times (3+3S)} & \bm{0}_{3S \times S} & \bm{0} _{3S \times 2S}
    \end{bmatrix} \\
    \bm{b}_{\texttt{mlp}}^{(1)} &= \bm{0}_{(3+6S) \times 1}
\end{align*}

Incorporating skip connections and applying non-linearities, we obtain:

\begin{align*}
    \widetilde{\bm{x}}_{n, \texttt{mlp}}^{(1)} &=  \widetilde{\bm{x}}_n^{(1)} +  \lnorm\pth{\relu\pth{\bm{W}_{\texttt{mlp}}^{(1)} \widetilde{\bm{x}}_n^{(1)}} +  \bm{b}_{\texttt{mlp}}^{(1)}} \\
    \widetilde{\bm{x}}_{n, \texttt{mlp}}^{(1)} &= \left[ \begin{array}{c|c|c|c|c|c|c|c}
        Z_n & \bm{0}_{1 \times 2} & e_{x_n}^S & \bm{u}_n & \frac{\bm{u}_n}{\| \bm{u}_n \|_2} & \bm{v}_n & \bm{0}_{1 \times S} &\bm{0}_{1 \times S}
    \end{array} \right]^T
\end{align*}

The second layer for the MLP is :

\begin{align*}
    \bm{W}_{\texttt{mlp}}^{(2)} &= \begin{bmatrix}
        \bm{0}_{(3+ S) \times (3 + 3S)}  &  \bm{0}_{(3+ S) \times (S)} &  \bm{0}_{(3+ S) \times (2S)}\\ 
        \bm{0}_{S \times (3+3S)} & \bm{0}_{S \times S} & \bm{0} _{S \times 2S} \\
        \bm{0}_{S \times (3+3S)} & \bm{0}_{S \times S} & \bm{0} _{S \times 2S} \\
        \bm{0}_{S \times (3+3S)} & I_{S \times S} & \bm{0} _{S \times 2S} \\
        \bm{0}_{2S \times (3+3S)} & \bm{0}_{2S \times S} & \bm{0} _{2S \times 2S}
    \end{bmatrix} \\
    \bm{b}_{\texttt{mlp}}^{(2)} &= \bm{0}_{(3+6S) \times 1}
\end{align*}

Including skip connections and non-linearities, we get:

\begin{align*}
    \widetilde{\bm{x}}_{n, \texttt{mlp}}^{(2)} &=   \widetilde{\bm{x}}_{n, \texttt{mlp}}^{(1)} +  \lnorm\pth{\relu\pth{\bm{W}_{\texttt{mlp}}^{(2)} \widetilde{\bm{x}}_n^{(2)}} +  \bm{b}_{\texttt{mlp}}^{(2)}} \\
    \widetilde{\bm{x}}_{n, \texttt{mlp}}^{(2)} &= \left[ \begin{array}{c|c|c|c|c|c|c|c}
        Z_n & \bm{0}_{1 \times 2} & e_{x_n}^S & \bm{u}_n & \frac{\bm{u}_n}{\| \bm{u}_n \|_2} & \bm{v}_n & \frac{\bm{v}_n}{\| \bm{v}_n \|_2} &\bm{0}_{1 \times S}
    \end{array} \right]^T
\end{align*}

\paragraph{Layer 2.} In this layer, all relative position encodings are set to $\bm{0}$, and we use the following query and key matrices for the attention mechanism in the second layer:

\begin{align*}
    \begin{split}
    \bm{W}_Q^{(2)} &=  \sqrt{\kappa}\begin{bmatrix}
        1 & \bm{0} & \bm{0} & \bm{0} \\
        \bm{0} & \bm{0}_{S \times (2+3S)} & I_{S \times S} & \bm{0} \\
        \bm{0} & \bm{0} & \bm{0} & \bm{0}
    \end{bmatrix} \\
    \bm{W}_K^{(2)} &= \sqrt{\kappa}\begin{bmatrix}
        1 & \bm{0} & \bm{0} & \bm{0} \\
        \bm{0} & {\bm{0}_{S \times (2+4S)}} & {I_{S \times S}} & \bm{0} \\
        \bm{0} & \bm{0} & \bm{0} & \bm{0}
    \end{bmatrix}
    \end{split}
\end{align*}

With these choices in place, we obtain:

\begin{align*}
    \left\langle (\bm{W}_K^{(2)} \widetilde{\bm{x}}_{i, \texttt{mlp}}^{(2)}) , (\bm{W}_Q^{(2)} \widetilde{\bm{x}}_{n, \texttt{mlp}}^{(2)}) \right\rangle &=  \kappa \cdot Z_i Z_n  + \frac{\kappa \cdot   \langle \bm{v}_i, \bm{u}_n \rangle}{\| \bm{v}_i \|_2 \cdot \| \bm{u}_n \|_2} \nonumber 
    &= \kappa \cdot Z_i Z_n + 2 - \kappa \cdot \left\| \frac{\bm{v}_i}{\| \bm{v}_i \|_2} - \frac{\bm{u}_n}{\| \bm{u}_n \|_2} \right\|^2
\end{align*}

Applying the softmax function, we obtain:

\begin{align*}
    \operatorname{att}_{n,i}^{(2)} = \frac{\exp{(\kappa Z_i Z_n + 2\kappa - \kappa \left\| \frac{\bm{v}_i}{\| \bm{v}_i \|_2} - \frac{\bm{u}_n}{\| \bm{u}_n \|_2} \right\|^2)}}{\sum_{j=0}^{n} \exp{(\kappa Z_j Z_n + 2\kappa - \kappa \left\| \frac{\bm{v}_j}{\| \bm{v}_j \|_2} - \frac{\bm{u}_n}{\| \bm{u}_n \|_2} \right\|^2)}}
\end{align*}

We can see that for all \(k+1 \leq i \leq n\), \(\bm{v}_i = \bm{u}_n\) if and only if the local context around position \(i\) matches that of position \(n\), i.e., 
\(
\{ x_{i-1-j} = x_{n-j} \text{ for all } j = 0, \ldots, k \}
\). Moreover, for any \(k+1 \leq i \leq n\), if \(\bm{v}_i \neq \bm{u}_n\), then the normalized vectors are separated by a non-negligible margin:

\begin{align*}
    \left\| \frac{\bm{v}_i}{\| \bm{v}_i \|_2} - \frac{\bm{u}_n}{\| \bm{u}_n \|_2} \right\|_2 \ge \frac{1}{3^k}.
\end{align*}

Note that while this gap is small, it is strictly non-zero. Recall that \(Z_i = \frac{3^{k+1}}{5}\) for \(i \geq k+1\), and \(Z_i \leq \frac{3^k}{5}\) otherwise. As a result, the attention mechanism favors positions \(i\) such that \(\bm{v}_i = \bm{u}_n\) and \(i \geq k+1\). In the limit as \(\kappa \to \infty\), this results in the following attention pattern:

\begin{align*}
    \operatorname{att}^{(2)}_{n,\cdot} &= \operatorname{Unif}(\mathcal{I}_n),
\end{align*}

where \(\mathcal{I}_n\) denotes the set of all indices \(i \in \{k+1, \ldots, n\}\) whose local context matches that of position \(n\), formally defined as:

\begin{align*}
    \mathcal{I}_n := \left\{ i \in \{k, \ldots, n\} \,\middle|\, x_{i-1-j} = x_{n-j} \text{ for all } j = 0, \ldots, k \right\}.
\end{align*}

Moreover, \(\operatorname{Unif}(\mathcal{I}_n)\) denotes the uniform distribution over the indices in \(\mathcal{I}_n\); that is, each position \(i \in \mathcal{I}_n\) receives equal attention weight, and all other positions receive zero. We now choose the value projection matrix in the second attention layer as:

\begin{align*}
    \bm{W}_V^{(2)} = \begin{bmatrix}
        \bm{0} & \bm{0} & \bm{0} \\
        \bm{0}_{S \times 3} & I_{S \times S} & \bm{0}
    \end{bmatrix}.
\end{align*}

With this choice, the output of the second attention layer becomes:

\begin{align*}
    \widetilde{\bm{x}}_n^{(2)} 
    &= \widetilde{\bm{x}}_{n, \texttt{mlp}}^{(2)} 
    + \sum_{i=0}^n \operatorname{att}_{n,i}^{(2)} 
    \begin{bmatrix}
        \bm{0} \\ e_{x_i}^S
    \end{bmatrix} \\
    &= \widetilde{\bm{x}}_{n, \texttt{mlp}}^{(2)} + \frac{1}{|\mathcal{I}_n|} \sum_{i \in \mathcal{I}_n} 
    \begin{bmatrix}
        \bm{0} \\ e_{x_i}^S
    \end{bmatrix}.
\end{align*}

For the output linear layer, we choose:

\begin{align*}
    \bm{W}_o = \begin{bmatrix}
        \bm{0}_{S \times (5S + 3)} & I_{S \times S}
    \end{bmatrix}, \qquad
    \bm{b}_o = \bm{0}.
\end{align*}

This yields the output logits:
\begin{align*}
    \operatorname{logit}_n 
    = \frac{1}{|\mathcal{I}_n|} \sum_{i \in \mathcal{I}_n} e_{x_i}^S 
    = \sum_{i=k}^n 
    \frac{\mathbb{I} \left( \forall\, 1 \le j \le k,\ x_{i-j} = x_{n-j+1} \right)}
         {\sum_{i' = k}^n \mathbb{I} \left( \forall\, 1 \le j \le k,\ x_{i'-j} = x_{n-j+1} \right)} 
         \cdot e_{x_i}^S.
\end{align*}

Finally, by adjusting the notation and setting \(n = T\) to reflect prediction at the final time step, we obtain:

\begin{align*}
    \operatorname{logit}_T (x_{T+1}) 
    = \frac{\sum_{n = k}^{T} \mathbb{I} \left( \forall\, 0 \le i \le k,\ x_{n-i} = x_{T-i+1} \right)}
           {\sum_{n = k}^{T} \mathbb{I} \left( \forall\, 1 \le i \le k,\ x_{n-i} = x_{T-i+1} \right)},
\end{align*}

which precisely computes the conditional \(k\)-gram probability.

\subsubsection{Layer Norms in the Second Attention Block}

We start with  \(\widetilde{\bm{x}}_n^{(1)}\) from the first layer as defined.

\begin{align*}
    \widetilde{\bm{x}}_n^{(1)} &= \bm{x}_n^{(1)} + \widetilde{\bm{x}}_n^{(\texttt{head:1})} + \widetilde{\bm{x}}_n^{(\texttt{head:2})} \\
    &= \left[ \begin{array}{c|c|c|c|c|c|c
    }
        Z_n & \bm{0}_{1 \times 2} & e_{x_n}^S  &  \bm{u}_n & \bm{0}_{1 \times S} & \bm{v}_n & \bm{0}_{1 \times 2S}
    \end{array} \right]^T
\end{align*}

\paragraph{Layer 2.} In this layer, all the relative position encodings are set as $\bm{0}$ and instead,

\begin{align*}
    \begin{split}
    \bm{W}_Q^{(2)} &=  \begin{bmatrix}
        0 & \bm{0} & \bm{0} & \bm{0} \\
        \bm{0} & \bm{0}_{S \times (2+3S)} & I_{S \times S} & \bm{0} \\
        \bm{0} & \bm{0} & \bm{0} & \bm{0}
    \end{bmatrix} \\
    \bm{W}_K^{(2)} &= \begin{bmatrix}
        0 & \bm{0} & \bm{0} & \bm{0} \\
        \bm{0} & {\bm{0}_{S \times (2+4S)}} & {I_{S \times S}} & \bm{0} \\
        \bm{0} & \bm{0} & \bm{0} & \bm{0}
    \end{bmatrix}
    \end{split}
\end{align*}

With these choices, and adding a layer normalization after the output of the key linear layer and the query linear layer

\begin{align*}
    \left\langle \lnorm(\bm{W}_K^{(2)} \widetilde{\bm{x}}_{i, \texttt{mlp}}^{(2)}) , \lnorm(\bm{W}_Q^{(2)} \widetilde{\bm{x}}_{n, \texttt{mlp}}^{(2)}) \right\rangle &= \frac{2  \langle \bm{v}_i, \bm{u}_n \rangle}{\| \bm{v}_i \|_2 \cdot \| \bm{u}_n \|_2} \nonumber \\
    &= 2 - \left\| \frac{\bm{v}_i}{\| \bm{v}_i \|_2} - \frac{\bm{u}_n}{\| \bm{u}_n \|_2} \right\|^2
\end{align*}

Taking the softmax with a temperature parameter \( \kappa \), 

\begin{align*}
    \operatorname{att}_{n,i}^{(2)} = \frac{\exp{( 2\kappa - \kappa \left\| \frac{\bm{v}_i}{\| \bm{v}_i \|_2} - \frac{\bm{u}_n}{\| \bm{u}_n \|_2} \right\|^2)}}{\sum_{j=0}^{n} \exp{(2\kappa - \kappa \left\| \frac{\bm{v}_j}{\| \bm{v}_j \|_2} - \frac{\bm{u}_n}{\| \bm{u}_n \|_2} \right\|^2)}}
\end{align*}

Note that although this gap is small, it remains non-zero. In particular, as \(\kappa \to \infty\), the resulting attention pattern approaches

\begin{align*}
    \operatorname{att}^{(2)}_{n,\cdot} &= \operatorname{Unif}(\mathcal{I}_n \cup \hat{\mathcal{I}}_k),
\end{align*}

where \(\mathcal{I}_n\) denotes the set of indices \(i \in \{k, \ldots, n\}\) whose local context matches that of position \(n\). Formally,

\begin{align*}
    \mathcal{I}_n &:= \left\{ i \in \{k, \ldots, n\} \,\middle|\, x_{i-1-j} = x_{n-j} \text{ for all } j = 0, \ldots, k \right\}.
\end{align*}

In contrast, \(\hat{\mathcal{I}}_k\) captures earlier positions \(i \in \{0, \ldots, k-1\}\) whose (normalized) key vectors are aligned with the normalized query at position \(n\):

\begin{align*}
    \hat{\mathcal{I}}_k &:= \left\{ i \in \{0, \ldots, k-1\} \,\middle|\, \frac{\bm{v}_i}{\|\bm{v}_i\|_2} = \frac{\bm{k}_n}{\|\bm{u}_n\|_2} \right\}.
\end{align*}

On choosing, 

\begin{align*} \nonumber
    \bm{W}_V^{(2)} &= \begin{bmatrix}
        \bm{0} & \bm{0} & \bm{0} \\
        \bm{0}_{S \times 3} & I_{S \times S} & \bm{0}
    \end{bmatrix}.
\end{align*}

We obtain,

\begin{align*} \nonumber
    \widetilde{\bm{x}}_n^{(2)} = \bm{x}_n^{(2)} + \sum_{i=1}^n \operatorname{att}_{n,i}^{(2)} \begin{bmatrix}
        \bm{0} \\ e_{x_i}^S
    \end{bmatrix} = \bm{x}_n^{(3)} + \frac{1}{|\mathcal{I}_n| + |\hat{\mathcal{I}}_k|} \sum_{i \in \mathcal{I}_n \cup \hat{\mathcal{I}}_k} \begin{bmatrix}
        \bm{0} \\ e_{x_i}^S
    \end{bmatrix}.
\end{align*}

For the output linear layer, we choose:

\begin{align*}
    \bm{W}_o = \begin{bmatrix}
        \bm{0}_{S \times (5S + 3)} & I_{S \times S}
    \end{bmatrix}, \qquad
    \bm{b}_o = \bm{0}.
\end{align*}

which results in,

\begin{align*}
    \operatorname{logit}_n = \frac{1}{|\mathcal{I}_n| + |\hat{\mathcal{I}}_k|} \sum_{i \in \mathcal{I}_n \cup \hat{\mathcal{I}}_k}e_{x_i}^S &= \sum_{i=k}^n \frac{\mathbb{I} ( \forall 1 \le j \le k,\ x_{i-j} = x_{n-j+1})}{\sum_{i' = k}^n \mathbb{I} ( \forall 1 \le j \le k,\ x_{i'-j} = x_{n-j+1}) + \hat{\mathcal{I}}_k} \cdot e_{x_i}^S \\
    &\quad + \sum_{i \in {\hat{\mathcal{I}}_k}} \frac{\mathbb{I} ( \forall 1 \le j \le k,\ x_{i-j} = x_{n-j+1})}{\sum_{i' = k}^n \mathbb{I} ( \forall 1 \le j \le k,\ x_{i'-j} = x_{n-j+1}) + \hat{\mathcal{I}}_k} \cdot e_{x_i}^S 
\end{align*}

Now, by substituting \(n=T\), and aiming to predict the next symbol given the current context, we obtain:

\begin{align*}
    \operatorname{logit}_T (x_{T+1}) &= \frac{\sum_{n=k}^T \mathbb{I} ( \forall 0 \le i \le k,\ x_{n-i} = x_{T-i+1})}{\sum_{n = k}^T \mathbb{I} ( \forall 1 \le i \le k,\ x_{n-i} = x_{T-i+1})  + \hat{\mathcal{I}_k}} \\
    &\quad + \frac{\sum_{i \in {\hat{\mathcal{I}}_k}} \inner{e_{T+1}^S}{e_{i}^S}}{\sum_{n = k}^T \mathbb{I} ( \forall 1 \le i \le k,\ x_{n-i} = x_{T-i+1})  + \hat{\mathcal{I}_k}} 
\end{align*}

This corresponds to a \textit{biased conditional} \(k\)-gram model, where the bias originates from the second term in the expression above. Notably, this bias vanishes in the limit as \(T \to \infty\). The reason is that the numerator of the second term remains constant with respect to sequence length, since \(\hat{\mathcal{I}}_k\) is fixed and does not scale with \(T\). In contrast, the denominator, given by \(\sum_{n = k}^T \mathbb{I} ( \forall\, 1 \le i \le k,\ x_{n-i} = x_{T-i+1})\), grows with \(T\), assuming the underlying Markov chain is irreducible (i.e., every state is reachable from every other state). Consequently, under this assumption of irreducibility, the influence of the second term diminishes as \(T\) becomes large, yielding the approximation for a sufficiently large \(T)\):

\begin{align*}
    \operatorname{logit}_T (x_{T+1}) &\approx \frac{\sum_{n=k}^T \mathbb{I} ( \forall 0 \le i \le k,\ x_{n-i} = x_{T-i+1})}{\sum_{n = k}^T \mathbb{I} ( \forall 1 \le i \le k,\ x_{n-i} = x_{T-i+1})}
\end{align*}

\newpage

\section{Two-Layer Single-Head Construction}
\label{app:singlehead}

\begin{theorembox}{(Two-layer, single-head construction). } 
We assert that there exists a two-layer, single-head attention Transformer model \(f_\theta\), which can be configured in one of two equivalent ways: (1) with three MLP layers placed between the first and second attention blocks, or (2) with two MLP layers between the attention blocks, but with layer normalization applied to the query and key projections in the second attention block, such that:
\[
 \quad f_\theta(s_{1:T})_{s'} \approx \pi\bigl(s' \mid s_{1:T}\bigr) \quad \forall\, (s_{1:T},\, s') \sim \pi, \pi \sim \mathbb{P}_\pi.
\]
Note that \(\mathbb{P}_\pi\) denotes an arbitrary distribution over trajectories, where each trajectory is generated using a Markovian transition kernel \(\pi\). The kernel \(\pi\) corresponds to a \(k^\text{th}\)-order Markov chain.
\end{theorembox}

It is important to note that in both configurations, the input embeddings and the first layer (\textbf{Layer 1}) remain identical. We choose the input embeddings as follows:

\begin{align*}
    \bm{x}^{(1)}_n = \texttt{Emb} (x_n) = \begin{bmatrix}
        \bm{0}_{1 \times 3} & e_{x_n}^S & \bm{0}_{1 \times 5S}
    \end{bmatrix}^T \in \mathbb{R}^{6S + 3}
\end{align*}

In the first layer we use the following relative positional embeddings:

\begin{align*}
    \bm{p}_{i}^{(1),K} = \begin{cases}
        {\bm{0}}, & \text{if } i=0,\\
    ({i} \cdot \log (3) + {\kappa}) \cdot \begin{bmatrix} 0 & 1 & \bm{0}_{1 \times (1 + 6S) } \end{bmatrix}^T, & \text{if } i \in \{ 1,2,\cdots, {k}\}, \\
    {\bm{0}}, & \text{if } i>k
    \end{cases},
    \label{eq:pemb}
\end{align*}

and the value embeddings,

\begin{align*}
    \bm{p}_i^{(1),V} = \begin{cases}
        3^i \begin{bmatrix}
    1 & \bm{0}
\end{bmatrix}^T \quad &\text{for } i \le k \\
    \bm{0} & i > k.
    \end{cases}
\end{align*}

\paragraph{Layer 1.}  Consider the key and query matrices,

\begin{align*}
\begin{split}
    \bm{W}^{(1)}_K &= \begin{bmatrix} 0_{1 \times 1 } & \bm{1}_{1 \times 2} & \bm{0}_{1 \times 6S} \\ \bm{0}_{(2 + 6S) \times 1} & \bm{0}_{(2+6S)\times2} & \bm{0}_{(2 + 6S)\times(2+6S)} \end{bmatrix} \\
    \bm{W}_Q^{(1)} &= \begin{bmatrix} \bm{0}_{1 \times 3} & \bm{1}_{1 \times S} & \bm{0}_{1 \times 5S} \\ \bm{0}_{(2+6S) \times 3} & \bm{0}_{(2+6S)\times S} & \bm{0}_{(2 + 6S)\times 5S} \end{bmatrix}
\end{split}
\end{align*}

On computing (for $i=1$ to $i=K$)

\begin{align*}
    \begin{split}
         \bm{W}^{(1)}_K \texttt{Emb} (x_{n-i}) &= \bm{0} \\ 
         \bm{W}^{(1)}_K \bm{p}_i^{(1),K} &=  
            \begin{bmatrix}
              ({i} \cdot \log(3) + {\kappa}) & \bm{0}_{(2+6S) \times 1} 
              \end{bmatrix}^T \\ 
         \bm{W}_Q^{(1)} \texttt{Emb}(x_n) &=  \begin{bmatrix}
             1_{1 \times 1} & \bm{0}_{1 \times (2 + 6S)}
         \end{bmatrix}^T
    \end{split}
\end{align*}

Then, observe that,

\begin{align*} \nonumber
    \left\langle \bm{W}_K^{(1)} \big( \texttt{Emb} (x_{n-i}) + \bm{p}_i^{(1),K} \big), \bm{W}_Q^{(1)} \texttt{Emb} (x_n) \right\rangle =  ({i} \cdot \log (3) + {\kappa}) \cdot \mathbb{I}(1 \le i \le \min \{ n, k \}) 
\end{align*}

Letting $\kappa \to \infty$, this results in the attention pattern,

\begin{align*}
    \operatorname{att}^{(1)}_{n,n-i} = \frac{3^i \mathbb{I} (1 \le i \le \min \{n,k\})}{\sum_{i'=1}^{\min \{ n, k \}} 3^{i'}} = \frac{3^{i-1} \mathbb{I} (1 \le i \le \min \{n,k\})}{\sum_{i'=0}^{\min \{ n, k \}-1} 3^{i'}} = \frac{3^{i-1} \mathbb{I} (1 \le i \le \min \{n,k\})}{C_{\texttt{attn}}}
\end{align*}

Choose the value matrix as,

\begin{align*}
    \bm{W}_V^{(1)} = \begin{bmatrix}
        I_{3 \times 3} & \bm{0}_{3 \times S} & \bm{0}_{3\times 4S} \\
        \bm{0}_{3S \times 3} & \bm{0}_{S \times S} & \bm{0}_{3S \times 4S} \\
        \bm{0}_{S \times 3} & I_{S \times S} & \bm{0}_{S \times 4S} \\
        \bm{0}_{2S \times 3} & \bm{0}_{S \times S} & \bm{0}_{2S \times 4S} \\
    \end{bmatrix} 
\end{align*}

The output of the attention layer (with the residual connection) is,

\begin{align*}
    \widetilde{\bm{x}}_n^{(1)} &= \left[ \begin{array}{c|c|c|c|c|c
    }
        Z_n & \bm{0}_{1 \times 2} & e_{x_n}^S  & \bm{0}_{1 \times 2S} & \bm{v}_n & \bm{0}_{1 \times 2S}
    \end{array} \right]^T, \\
    \text{ where, } \bm{v}_n &= \sum_{i=1}^{\min \{n,k\}} \operatorname{att}_{n,n-i} e^S_{x_{n-i}} \\
    Z_n & = \sum_{i=1}^{\min \{k,n-1\}} \operatorname{att}_{n,n-i} 3^i,
\end{align*}

It is easy to verify that \(Z_n = 3^{k+1}/5\) for \(n \geq k+1\), and that \(Z_n \leq 3^k/5\) for \(n < k+1\). This distinction will be useful later, as the value of \(Z_n\) acts as a signal indicating whether \(n \geq k+1\) or \(n < k+1\). In particular, this allows the subsequent layer to skip attention computations for indices \(i \leq k\), where the condition \(x_n = x_{i-1}, \ldots, x_{n-k+1} = x_{i-k}\) is not well defined.

\subsection{Three Layer MLP followed by the Second Attention Layer}
\label{app:three_layer_mlp}

\begin{figure}[h!]
    \centering
    \begin{subfigure}{\textwidth}
        \centering
        \includegraphics[width=0.26\textwidth]{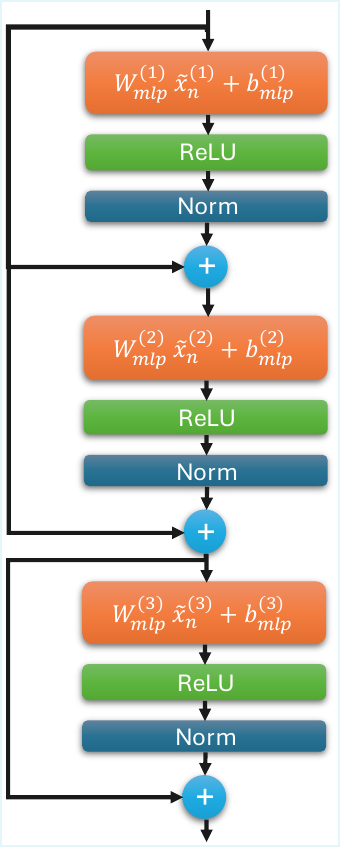} 
    \end{subfigure}

    \caption{The MLP Architecture}
    \label{fig:mlp_three_layers}
\end{figure}

The architecture of the MLP is illustrated in Fig.~\ref{fig:mlp_three_layers}. We now explain how the first layer of the MLP extracts the symbol \(e_{x_{n-k}}^S\) from \( \widetilde{\bm{x}}_n^{(1)} \). To achieve this, the first layer of the MLP is designed as follows:

\begin{align*}
\begin{split}
    \bm{W}_{\texttt{mlp}}^{(1)} &= \begin{bmatrix}
        \bm{0}_{(3+ S) \times (3 + 3S)}  &  \bm{0}_{(3+ S) \times (S)} &  \bm{0}_{(3+ S) \times (2S)}\\ 
        \bm{0}_{S \times (3+3S)} & \bm{A}_{S \times S} & \bm{0} _{S \times 2S} \\
        \bm{0}_{S \times (3+3S)} & \bm{0}_{S \times S} & \bm{0} _{S \times 2S} \\
        \bm{0}_{3S \times (3+3S)} & \bm{0}_{3S \times S} & \bm{0} _{3S \times 2S}
    \end{bmatrix} \\
    \bm{b}_{\texttt{mlp}}^{(1)} &= \begin{bmatrix}
        \bm{0}_{1 \times (3+S)} & -\frac{(3^{k-1} - 1)}{2 \cdot C_{\texttt{attn}}} \cdot \bm{1}_{1 \times S} & \bm{0}_{1 \times S} & \bm{0}_{1 \times 2S} 
    \end{bmatrix} ^T \\
    \text{where, } \bm{A} &= \begin{bmatrix}
         e_1^T\\
        e_2^T \\
        \vdots \\
        e_S^T 
    \end{bmatrix} \\
\end{split}
\end{align*}

 Note that on computing, \(\hat{\bm{x}}_{n, \texttt{mlp}}^{(1)} = \lnorm\pth{\relu\pth{\bm{W}_{\texttt{mlp}}^{(1)}\ \widetilde{\bm{x}}_n^{(1)} + \bm{b}_{\texttt{mlp}}^{(1)}}}\), effectively isolates the component associated with the symbol at position \( n - k \) in the sequence. This, in turn, allows us to identify the value of \( e^S_{x_{n-k}} \) for a given \( x_n \). Before demonstrating this explicitly, we first state a useful fact: in a triadic sum representation, the coefficient corresponding to position \( n-k \) is always greater than the sum of all coefficients at earlier positions. To illustrate this, consider the scenario in which several consecutive symbols, specifically those from positions \( n-1 \) to \( n-k+1 \), are identical. Observe that:
 
\begin{align*}
    \operatorname{att}_{n,n-k} &= \frac{3^{k-1}}{C_{\texttt{attn}}} \\
    \sum_{i=1}^{k-1} \operatorname{att}_{n,n-i} &= \sum_{i=1}^{k-1}\frac{3^{i-1}}{C_{\texttt{attn}}} = \frac{3^{k-1} - 1}{2 \cdot C_{\texttt{attn}}} \\
    \text{Hence, }\operatorname{att}_{n,n-k} &> \sum_{i=1}^{k-1} \operatorname{att}_{n,n-i}
\end{align*}

The operation \(\bm{W}_{\texttt{mlp}}^{(1)}\ \widetilde{\bm{x}}_n^{(1)} + \bm{b}_{\texttt{mlp}}^{(1)}\) computes the dot product of each symbol with \(\bm{v}_n\) and subtracts \(\frac{3^{k-1}}{2 \cdot C_{\texttt{attn}}}\) from each coefficient. This ensures that only the \(n-k\)\textsuperscript{th} symbol retains a positive coefficient, while the rest become negative. Passing this result through a ReLU function eliminates the negative coefficients, leaving only the one corresponding to the \(n-k\)\textsuperscript{th} symbol. Normalizing the output sets the coefficient of the \(n-k\)\textsuperscript{th} symbol to one. Concretely:

\begin{align*}
    \hat{\bm{x}}_{n, \texttt{mlp}}^{(1)} &= \bm{W}_{\texttt{mlp}}^{(1)}\ \widetilde{\bm{x}}_n^{(1)} + \bm{b}_{\texttt{mlp}}^{(1)} = \begin{bmatrix}
    \bm{0}_{1 \times (3+S)} & \bm{c}_{1 \times S} & \bm{0}_{1 \times S} & \bm{0}_{1 \times 2S} 
\end{bmatrix} ^T  \\
 \lnorm\pth{\relu(\hat{\bm{x}}_{n, \texttt{mlp}}^{(1)})} &=  \begin{bmatrix}
    \bm{0}_{1 \times (3+S)} & \widetilde{\bm{c}}_{1 \times S} & \bm{0}_{1 \times S} & \bm{0}_{1 \times 2S} 
\end{bmatrix} ^T \\
\text{Where, } \widetilde{\bm{c}} &= \begin{bmatrix}
        \mathbb{I}(c_1 > \frac{3^{k-1} - 1}{2 \cdot C_{\texttt{attn}}}) & \mathbb{I}(c_2 > \frac{3^{k-1} - 1}{2 \cdot C_{\texttt{attn}}}) & \cdots & \mathbb{I}(c_S > \frac{3^{k-1} - 1}{2 \cdot C_{\texttt{attn}}})
    \end{bmatrix}
\end{align*}

Note that in \(\widetilde{\bm{c}}\), exactly one entry is equal to 1—specifically, the entry associated with the symbol at position \(n - k\); all other entries are zero. After applying the skip connection, the final result becomes:

\begin{align*}
            \widetilde{\bm{x}}_{n, \texttt{mlp}}^{(1)} &= \widetilde{\bm{x}}_n^{(1)} + \lnorm\pth{\relu(\hat{\bm{x}}_{n, \texttt{mlp}}^{(1)})} \\
    &= \left[ \begin{array}{c|c|c|c|c|c|c
    }
        Z_n & \bm{0}_{1 \times 2} & e_{x_n}^S  &  \widetilde{\bm{c}}_{1 \times S} & \bm{0}_{1 \times S} & \bm{v}_n & \bm{0}_{1 \times 2S}
    \end{array} \right]^T 
\end{align*}

The second layer of the MLP is designed to compute \(\bm{u}_n\) from \(\widetilde{\bm{x}}_{n, \texttt{mlp}}^{(1)}\). It does so by extracting \(e_{x_n}^S\) and \(\bm{v}_n\), and leveraging the binary mask \(\widetilde{\bm{c}}\) together with the vocabulary embeddings stored in the second layer to produce \(\bm{u}_n\). Hence, we define the second layer as follows:

\begin{align*}
    \bm{W}_{\texttt{mlp}}^{(2)} &= \begin{bmatrix}
        \bm{0}_{3 \times 3} & \bm{0}_{3 \times S} & \bm{0}_{3 \times S} & \bm{0}_{3 \times S} & \bm{0}_{3 \times S} & \bm{0}_{3 \times S} & \bm{0}_{3 \times S} \\
        \bm{0}_{S \times 3} & {\bm{0}_{S \times S}} & \bm{0}_{S \times S} & \bm{0}_{S \times S} & \bm{0}_{S \times S} & \bm{0}_{S \times S} & \bm{0}_{S \times S} \\
        \bm{0}_{S \times 3} & \bm{0}_{S \times S} & \bm{0}_{S \times S} & \bm{0}_{S \times S} & \bm{0}_{S \times S} & \bm{0}_{S \times S} & \bm{0}_{S \times S} \\
       \bm{0}_{S \times 3} & {\frac{1}{C_{\texttt{attn}}} \cdot I_{S \times S}} & {-\frac{3^{k}}{C_{\texttt{attn}}} \cdot \bm{A}^T_{S \times S}} & \bm{0}_{S \times S} & 3 \cdot {I_{S \times S}} & \bm{0}_{S \times S} & \bm{0}_{S \times S} \\
        \bm{0}_{S \times 3} & \bm{0}_{S \times S} & \bm{0}_{S \times S} & \bm{0}_{S \times S} & \bm{0}_{S \times S} & \bm{0}_{S \times S} & \bm{0}_{S \times S} \\
        \bm{0}_{S \times 3} & \bm{0}_{S \times S} & \bm{0}_{S \times S} & \bm{0}_{S \times S} & {\bm{0}_{S \times S}} & \bm{0}_{S \times S} & \bm{0}_{S \times S} \\
        \bm{0}_{S \times 3} & \bm{0}_{S \times S} & \bm{0}_{S \times S} & \bm{0}_{S \times S} & \bm{0}_{S \times S} & \bm{0}_{S \times S} & \bm{0}_{S \times S} \\
    \end{bmatrix} \\
    \bm{b}_{\texttt{mlp}}^{(2)} &= \bm{0}_{(3 + 6S) \times 1} 
\end{align*}
Hence, on computing the output, we obtain:
\begin{align*}
    \hat{\bm{x}}_{n, \texttt{mlp}}^{(2)} &= \bm{W}_{\texttt{mlp}}^{(2)}  \widetilde{\bm{x}}_{n, \texttt{mlp}}^{(1)} + \bm{b}_{\texttt{mlp}}^{(2)}\\
    &= \left[ \begin{array}{c|c|c|c|c|c|c
    }
        0 & \bm{0}_{1 \times 2} & {\bm{0}_{1 \times S}}  &  {\bm{0}_{1 \times S}} & \bm{u}_n & {\bm{0}_{1 \times S}} & \bm{0}_{1 \times 2S}
    \end{array} \right]^T \\ 
    \text{Where, } \bm{u}_n &= \sum_{i=0}^{k-1} \frac{3^i}{C_{\texttt{attn}}} \cdot e^S_{x_{n-i}} = \sum_{i=0}^{k-1} \operatorname{att}_{n,n-i} \cdot e^S_{x_{n-i}} 
\end{align*}

To clarify how \(\bm{u}_n\) is constructed, we show that it can be obtained directly from \(\bm{v}_n\), \(e^S_{x_n}\), and \(e^S_{x_{n - k}}\). The key observation is that \(e^S_{x_{n - k}}\) can be recovered using the binary mask \(\widetilde{\bm{c}}\) as follows, \(
e^S_{x_{n - k}} = \bm{A}_{s \times s}^T \widetilde{\bm{c}},
\) where \(\bm{A}_{s \times s}\) is the vocabulary embedding matrix. This holds because \(\widetilde{\bm{c}}\) is a one-hot vector that selects the embedding corresponding to the symbol at position \(n - k\). Using this, the final representation \(\bm{u}_n\) is computed as:

\begin{align*}
    \bm{u}_n &= \frac{1}{C_{\texttt{attn}}}e_{x_{n}^S} + 3\bm{v}_n - \frac{3^k}{C_{\texttt{attn}}}e^S_{x_{n-k}}  \\
    &= \frac{1}{C_{\texttt{attn}}}e_{x_{n}}^S + \sum_{i=1}^{k}\frac{3^i}{C_{\texttt{attn}}}e_{x_{n-i}}^S - \frac{3^{k}}{C_\texttt{attn}} e_{x_{n-k}}^S \\
    &= \sum_{i=0}^{k-1} \frac{3^i}{C_{\texttt{attn}}} \cdot e^S_{x_{n-i}}
\end{align*}

Hence, passing this through the normalization and with the skip connections we get:

\begin{align*}
    \widetilde{\bm{x}}_{n, \texttt{mlp}}^{(2)} &= \widetilde{\bm{x}}_{n}^{(1)} + \lnorm\pth{\relu\pth{\bm{W}_{\texttt{mlp}}^{(2)}  \widetilde{\bm{x}}_{n, \texttt{mlp}}^{(1)} + \bm{b}_{\texttt{mlp}}^{(2)}}}\\
    &= \left[ \begin{array}{c|c|c|c|c|c|c}
        Z_n & \bm{0}_{1 \times 2} & e_{x_n}^S & {\bm{0}_n} & \frac{\bm{u}_n}{\| \bm{u}_n \|_2} & \bm{v}_n & \bm{0}_{1 \times 2S}
    \end{array} \right]^T
\end{align*}

Finally, on defining the third layer as follows:

\begin{align*}
    \bm{W}_{\texttt{mlp}}^{(3)} &= \begin{bmatrix}
        \bm{0}_{3 \times 3} & \bm{0}_{S \times S} & \bm{0}_{3\times 4S} \\
        \bm{0}_{3S \times 3} & \bm{0}_{S \times S} & \bm{0}_{3S \times 4S} \\
        \bm{0}_{S \times 3} & I_{S \times S} & \bm{0}_{S \times 4S} \\
        \bm{0}_{2S \times 3} & \bm{0}_{S \times S} & \bm{0}_{2S \times 4S} \\
    \end{bmatrix} \\
    \bm{b}_{\texttt{mlp}}^{(3)} &= \bm{0}_{(3+6S) \times 1}
\end{align*}

Passing the input to the layer through the non-linearities along with the skip connections, we get:

\begin{align*}
    \widetilde{\bm{x}}_{n, \texttt{mlp}}^{(3)} &= \widetilde{\bm{x}}_{n, \texttt{mlp}}^{(2)} + \lnorm\pth{\relu\pth{\bm{W}_{\texttt{mlp}}^{(2)}  \widetilde{\bm{x}}_{n, \texttt{mlp}}^{(1)} + \bm{b}_{\texttt{mlp}}^{(2)}}}\\
    &= \left[ \begin{array}{c|c|c|c|c|c|c|c}
        Z_n & \bm{0}_{1 \times 2} & e_{x_n}^S & {0_{1 \times S}} & \frac{\bm{u}_n}{\| \bm{u}_n \|_2} & \bm{v}_n & \frac{\bm{v}_n}{\| \bm{v}_n \|_2} &\bm{0}_{1 \times S}
    \end{array} \right]^T
\end{align*}

\paragraph{Layer 2.} In this layer, all relative position encodings are set to $\bm{0}$, and we use the following query and key matrices for the attention mechanism in the second layer:

\begin{align*}
    \begin{split}
    \bm{W}_Q^{(2)} &=  \sqrt{\kappa}\begin{bmatrix}
        1 & \bm{0} & \bm{0} & \bm{0} \\
        \bm{0} & \bm{0}_{S \times (2+3S)} & I_{S \times S} & \bm{0} \\
        \bm{0} & \bm{0} & \bm{0} & \bm{0}
    \end{bmatrix} \\
    \bm{W}_K^{(2)} &= \sqrt{\kappa}\begin{bmatrix}
        1 & \bm{0} & \bm{0} & \bm{0} \\
        \bm{0} & {\bm{0}_{S \times (2+4S)}} & {I_{S \times S}} & \bm{0} \\
        \bm{0} & \bm{0} & \bm{0} & \bm{0}
    \end{bmatrix}
    \end{split}
\end{align*}

With these choices in place, we obtain:

\begin{align*}
    \left\langle (\bm{W}_K^{(2)} \widetilde{\bm{x}}_{i, \texttt{mlp}}^{(3)}) , (\bm{W}_Q^{(2)} \widetilde{\bm{x}}_{n, \texttt{mlp}}^{(3)}) \right\rangle &=  \kappa \cdot Z_i Z_n  + \frac{\kappa \cdot   \langle \bm{v}_i, \bm{u}_n \rangle}{\| \bm{v}_i \|_2 \cdot \| \bm{u}_n \|_2} \nonumber 
    &= \kappa \cdot Z_i Z_n + 2 - \kappa \cdot \left\| \frac{\bm{v}_i}{\| \bm{v}_i \|_2} - \frac{\bm{u}_n}{\| \bm{u}_n \|_2} \right\|^2
\end{align*}

Applying the softmax function, we obtain:

\begin{align*}
    \operatorname{att}_{n,i}^{(2)} = \frac{\exp{(\kappa Z_i Z_n + 2\kappa - \kappa \left\| \frac{\bm{v}_i}{\| \bm{v}_i \|_2} - \frac{\bm{u}_n}{\| \bm{u}_n \|_2} \right\|^2)}}{\sum_{j=0}^{n} \exp{(\kappa Z_j Z_n + 2\kappa - \kappa \left\| \frac{\bm{v}_j}{\| \bm{v}_j \|_2} - \frac{\bm{u}_n}{\| \bm{u}_n \|_2} \right\|^2)}}
\end{align*}

We can see that for all \(k+1 \leq i \leq n\), \(\bm{v}_i = \bm{u}_n\) if and only if the local context around position \(i\) matches that of position \(n\), i.e., 
\(
\{ x_{i-1-j} = x_{n-j} \text{ for all } j = 0, \ldots, k \}
\). Moreover, for any \(k+1 \leq i \leq n\), if \(\bm{v}_i \neq \bm{u}_n\), then the normalized vectors are separated by a non-negligible margin:

\begin{align*}
    \left\| \frac{\bm{v}_i}{\| \bm{v}_i \|_2} - \frac{\bm{u}_n}{\| \bm{u}_n \|_2} \right\|_2 \ge \frac{1}{3^k}.
\end{align*}

Note that while this gap is small, it is strictly non-zero. Recall that \(Z_i = \frac{3^{k+1}}{5}\) for \(i \geq k+1\), and \(Z_i \leq \frac{3^k}{5}\) otherwise. As a result, the attention mechanism favors positions \(i\) such that \(\bm{v}_i = \bm{u}_n\) and \(i \geq k+1\). In the limit as \(\kappa \to \infty\), this results in the following attention pattern:

\begin{align*}
    \operatorname{att}^{(2)}_{n,\cdot} &= \operatorname{Unif}(\mathcal{I}_n),
\end{align*}

where \(\mathcal{I}_n\) denotes the set of all indices \(i \in \{k+1, \ldots, n\}\) whose local context matches that of position \(n\), formally defined as:

\begin{align*}
    \mathcal{I}_n := \left\{ i \in \{k, \ldots, n\} \,\middle|\, x_{i-1-j} = x_{n-j} \text{ for all } j = 0, \ldots, k \right\}.
\end{align*}

Moreover, \(\operatorname{Unif}(\mathcal{I}_n)\) denotes the uniform distribution over the indices in \(\mathcal{I}_n\); that is, each position \(i \in \mathcal{I}_n\) receives equal attention weight, and all other positions receive zero. We now choose the value projection matrix in the second attention layer as:

\begin{align*}
    \bm{W}_V^{(2)} = \begin{bmatrix}
        \bm{0} & \bm{0} & \bm{0} \\
        \bm{0}_{S \times 3} & I_{S \times S} & \bm{0}
    \end{bmatrix}.
\end{align*}

With this choice, the output of the second attention layer becomes:

\begin{align*}
    \widetilde{\bm{x}}_n^{(2)} 
    &= \widetilde{\bm{x}}_{n, \texttt{mlp}}^{(2)} 
    + \sum_{i=0}^n \operatorname{att}_{n,i}^{(2)} 
    \begin{bmatrix}
        \bm{0} \\ e_{x_i}^S
    \end{bmatrix} \\
    &= \widetilde{\bm{x}}_{n, \texttt{mlp}}^{(2)} + \frac{1}{|\mathcal{I}_n|} \sum_{i \in \mathcal{I}_n} 
    \begin{bmatrix}
        \bm{0} \\ e_{x_i}^S
    \end{bmatrix}.
\end{align*}

For the output linear layer, we choose:

\begin{align*}
    \bm{W}_o = \begin{bmatrix}
        \bm{0}_{S \times (5S + 3)} & I_{S \times S}
    \end{bmatrix}, \qquad
    \bm{b}_o = \bm{0}.
\end{align*}

This yields the output logits:
\begin{align*}
    \operatorname{logit}_n 
    = \frac{1}{|\mathcal{I}_n|} \sum_{i \in \mathcal{I}_n} e_{x_i}^S 
    = \sum_{i=k}^n 
    \frac{\mathbb{I} \left( \forall\, 1 \le j \le k,\ x_{i-j} = x_{n-j+1} \right)}
         {\sum_{i' = k}^n \mathbb{I} \left( \forall\, 1 \le j \le k,\ x_{i'-j} = x_{n-j+1} \right)} 
         \cdot e_{x_i}^S.
\end{align*}

Finally, by adjusting the notation and setting \(n = T\) to reflect prediction at the final time step, we obtain:

\begin{align*}
    \operatorname{logit}_T (x_{T+1}) 
    = \frac{\sum_{n = k}^{T} \mathbb{I} \left( \forall\, 0 \le i \le k,\ x_{n-i} = x_{T-i+1} \right)}
           {\sum_{n = k}^{T} \mathbb{I} \left( \forall\, 1 \le i \le k,\ x_{n-i} = x_{T-i+1} \right)},
\end{align*}

which precisely computes the conditional \(k\)-gram probability.

\subsection{Two Layer MLP followed by the Second Attention Layer}
\label{app:two_layer_mlp}

\begin{figure}[h!]
    \centering
    \begin{subfigure}{\textwidth}
        \centering
        \includegraphics[width=0.3\textwidth]{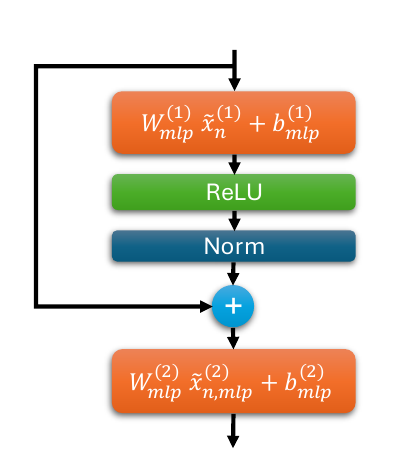} 
    \end{subfigure}

    \caption{The MLP Architecture}
    \label{fig:mlp_two_layers}
\end{figure}

We restate the output of the first attention layer: 

\begin{align*}
    \widetilde{\bm{x}}_n^{(1)} &= \left[ \begin{array}{c|c|c|c|c|c
    }
        Z_n & \bm{0}_{1 \times 2} & e_{x_n}^S  & \bm{0}_{1 \times 2S} & \bm{v}_n & \bm{0}_{1 \times 2S}
    \end{array} \right]^T, \\
    \text{ where, } \bm{v}_n &= \sum_{i=1}^{\min \{n,k\}} \operatorname{att}_{n,n-i} e^S_{x_{n-i}} \\
    Z_n & = \sum_{i=1}^{\min \{k,n-1\}} \operatorname{att}_{n,n-i} 3^i,
\end{align*}

The architecture of the MLP is illustrated in Fig.~\ref{fig:mlp_two_layers}. We now explain how the first layer of the MLP extracts the symbol \(e_{x_{n-k}}^S\) from \( \widetilde{\bm{x}}_n^{(1)} \). To achieve this, the first layer of the MLP is designed as follows:

\begin{align*}
\begin{split}
    \bm{W}_{\texttt{mlp}}^{(1)} &= \begin{bmatrix}
        \bm{0}_{(3+ S) \times (3 + 3S)}  &  \bm{0}_{(3+ S) \times (S)} &  \bm{0}_{(3+ S) \times (2S)}\\ 
        \bm{0}_{S \times (3+3S)} & \bm{A}_{S \times S} & \bm{0} _{S \times 2S} \\
        \bm{0}_{S \times (3+3S)} & \bm{0}_{S \times S} & \bm{0} _{S \times 2S} \\
        \bm{0}_{3S \times (3+3S)} & \bm{0}_{3S \times S} & \bm{0} _{3S \times 2S}
    \end{bmatrix} \\
    \bm{b}_{\texttt{mlp}}^{(1)} &= \begin{bmatrix}
        \bm{0}_{1 \times (3+S)} & -\frac{(3^{k-1} - 1)}{2 \cdot C_{\texttt{attn}}} \cdot \bm{1}_{1 \times S} & \bm{0}_{1 \times S} & \bm{0}_{1 \times 2S} 
    \end{bmatrix} ^T \\
    \text{where, } \bm{A} &= \begin{bmatrix}
         e_1^T\\
        e_2^T \\
        \vdots \\
        e_S^T 
    \end{bmatrix} \\
\end{split}
\end{align*}

 Note that on computing, \(\hat{\bm{x}}_{n, \texttt{mlp}}^{(1)} = \lnorm\pth{\relu\pth{\bm{W}_{\texttt{mlp}}^{(1)}\ \widetilde{\bm{x}}_n^{(1)} + \bm{b}_{\texttt{mlp}}^{(1)}}}\), effectively isolates the component associated with the symbol at position \( n - k \) in the sequence. This, in turn, allows us to identify the value of \( e^S_{x_{n-k}} \) for a given \( x_n \). Before demonstrating this explicitly, we first state a useful fact: in a triadic sum representation, the coefficient corresponding to position \( n-k \) is always greater than the sum of all coefficients at earlier positions. To illustrate this, consider the scenario in which several consecutive symbols, specifically those from positions \( n-1 \) to \( n-k+1 \), are identical. Observe that:
 
\begin{align*}
    \operatorname{att}_{n,n-k} &= \frac{3^{k-1}}{C_{\texttt{attn}}} \\
    \sum_{i=1}^{k-1} \operatorname{att}_{n,n-i} &= \sum_{i=1}^{k-1}\frac{3^{i-1}}{C_{\texttt{attn}}} = \frac{3^{k-1} - 1}{2 \cdot C_{\texttt{attn}}} \\
    \text{Hence, }\operatorname{att}_{n,n-k} &> \sum_{i=1}^{k-1} \operatorname{att}_{n,n-i}
\end{align*}

The operation \(\bm{W}_{\texttt{mlp}}^{(1)}\ \widetilde{\bm{x}}_n^{(1)} + \bm{b}_{\texttt{mlp}}^{(1)}\) computes the dot product of each symbol with \(\bm{v}_n\) and subtracts \(\frac{3^{k-1}}{2 \cdot C_{\texttt{attn}}}\) from each coefficient. This ensures that only the \(n-k\)\textsuperscript{th} symbol retains a positive coefficient, while the rest become negative. Passing this result through a ReLU function eliminates the negative coefficients, leaving only the one corresponding to the \(n-k\)\textsuperscript{th} symbol. Normalizing the output sets the coefficient of the \(n-k\)\textsuperscript{th} symbol to one. Concretely:

\begin{align*}
    \hat{\bm{x}}_{n, \texttt{mlp}}^{(1)} &= \bm{W}_{\texttt{mlp}}^{(1)}\ \widetilde{\bm{x}}_n^{(1)} + \bm{b}_{\texttt{mlp}}^{(1)} = \begin{bmatrix}
    \bm{0}_{1 \times (3+S)} & \bm{c}_{1 \times S} & \bm{0}_{1 \times S} & \bm{0}_{1 \times 2S} 
\end{bmatrix} ^T  \\
 \lnorm\pth{\relu(\hat{\bm{x}}_{n, \texttt{mlp}}^{(1)})} &=  \begin{bmatrix}
    \bm{0}_{1 \times (3+S)} & \widetilde{\bm{c}}_{1 \times S} & \bm{0}_{1 \times S} & \bm{0}_{1 \times 2S} 
\end{bmatrix} ^T \\
\text{Where, } \widetilde{\bm{c}} &= \begin{bmatrix}
        \mathbb{I}(c_1 > \frac{3^{k-1} - 1}{2 \cdot C_{\texttt{attn}}}) & \mathbb{I}(c_2 > \frac{3^{k-1} - 1}{2 \cdot C_{\texttt{attn}}}) & \cdots & \mathbb{I}(c_S > \frac{3^{k-1} - 1}{2 \cdot C_{\texttt{attn}}})
    \end{bmatrix}
\end{align*}

Note that in \(\widetilde{\bm{c}}\), exactly one entry is equal to 1—specifically, the entry associated with the symbol at position \(n - k\); all other entries are zero. After applying the skip connection, the final result becomes:

\begin{align*}
            \widetilde{\bm{x}}_{n, \texttt{mlp}}^{(1)} &= \widetilde{\bm{x}}_n^{(1)} + \lnorm\pth{\relu(\hat{\bm{x}}_{n, \texttt{mlp}}^{(1)})} \\
    &= \left[ \begin{array}{c|c|c|c|c|c|c
    }
        Z_n & \bm{0}_{1 \times 2} & e_{x_n}^S  &  \widetilde{\bm{c}}_{1 \times S} & \bm{0}_{1 \times S} & \bm{v}_n & \bm{0}_{1 \times 2S}
    \end{array} \right]^T
\end{align*}

The second layer of the MLP is designed to compute \(\bm{u}_n\) from \(\widetilde{\bm{x}}_{n, \texttt{mlp}}^{(1)}\). It does so by extracting \(e_{x_n}^S\) and \(\bm{v}_n\), and leveraging the binary mask \(\widetilde{\bm{c}}\) together with the vocabulary embeddings stored in the second layer to produce \(\bm{u}_n\). Hence, we define the second layer as follows:

\begin{align*}
    \bm{W}_{\texttt{mlp}}^{(2)} &= \begin{bmatrix}
        {{I}_{3 \times 3}} & \bm{0}_{3 \times S} & \bm{0}_{3 \times S} & \bm{0}_{3 \times S} & \bm{0}_{3 \times S} & \bm{0}_{3 \times S} & \bm{0}_{3 \times S} \\
        \bm{0}_{S \times 3} & {I_{S \times S}} & \bm{0}_{S \times S} & \bm{0}_{S \times S} & \bm{0}_{S \times S} & \bm{0}_{S \times S} & \bm{0}_{S \times S} \\
        \bm{0}_{S \times 3} & {\frac{1}{C_{\texttt{attn}}} \cdot I_{S \times S}} & {-\frac{3^{k}}{C_{\texttt{attn}}} \cdot \bm{A}^T_{S \times S}} & \bm{0}_{S \times S} & 3 \cdot {I_{S \times S}} & \bm{0}_{S \times S} & \bm{0}_{S \times S} \\
        \bm{0}_{S \times 3} & \bm{0}_{S \times S} & \bm{0}_{S \times S} & \bm{0}_{S \times S} & \bm{0}_{S \times S} & \bm{0}_{S \times S} & \bm{0}_{S \times S} \\
        \bm{0}_{S \times 3} & \bm{0}_{S \times S} & \bm{0}_{S \times S} & \bm{0}_{S \times S} & {I_{S \times S}} & \bm{0}_{S \times S} & \bm{0}_{S \times S} \\
        \bm{0}_{S \times 3} & \bm{0}_{S \times S} & \bm{0}_{S \times S} & \bm{0}_{S \times S} & \bm{0}_{S \times S} & \bm{0}_{S \times S} & \bm{0}_{S \times S} \\
        \bm{0}_{S \times 3} & \bm{0}_{S \times S} & \bm{0}_{S \times S} & \bm{0}_{S \times S} & \bm{0}_{S \times S} & \bm{0}_{S \times S} & \bm{0}_{S \times S} \\
    \end{bmatrix} \\
    \bm{b}_{\texttt{mlp}}^{(2)} &= \bm{0}_{(3 + 6S) \times 1} 
\end{align*}

Hence, on computing the output, we obtain:
\begin{align*}
    \widetilde{\bm{x}}_{n, \texttt{mlp}}^{(2)} &= \bm{W}_{\texttt{mlp}}^{(2)}  \widetilde{\bm{x}}_{n, \texttt{mlp}}^{(1)} + \bm{b}_{\texttt{mlp}}^{(2)}\\
    &= \left[ \begin{array}{c|c|c|c|c|c|c
    }
        Z_n & \bm{0}_{1 \times 2} & e_{x_n}^S  &  \bm{u}_n & \bm{0}_{1 \times S} & \bm{v}_n & \bm{0}_{1 \times 2S}
    \end{array} \right]^T \\ 
    \text{Where, } \bm{u}_n &= \sum_{i=0}^{k-1} \frac{3^i}{C_{\texttt{attn}}} \cdot e^S_{x_{n-i}} = \sum_{i=0}^{k-1} \operatorname{att}_{n,n-i} \cdot e^S_{x_{n-i}} 
\end{align*}

To clarify how \(\bm{u}_n\) is constructed, we show that it can be obtained directly from \(\bm{v}_n\), \(e^S_{x_n}\), and \(e^S_{x_{n - k}}\). The key observation is that \(e^S_{x_{n - k}}\) can be recovered using the binary mask \(\widetilde{\bm{c}}\) as follows, \(
e^S_{x_{n - k}} = \bm{A}_{s \times s}^T \widetilde{\bm{c}},
\) where \(\bm{A}_{s \times s}\) is the vocabulary embedding matrix. This holds because \(\widetilde{\bm{c}}\) is a one-hot vector that selects the embedding corresponding to the symbol at position \(n - k\). Using this, the final representation \(\bm{u}_n\) is computed as:

\begin{align*}
    \bm{u}_n &= \frac{1}{C_{\texttt{attn}}}e_{x_{n}^S} + 3\bm{v}_n - \frac{3^k}{C_{\texttt{attn}}}e^S_{x_{n-k}}  \\
    &= \frac{1}{C_{\texttt{attn}}}e_{x_{n}}^S + \sum_{i=1}^{k}\frac{3^i}{C_{\texttt{attn}}}e_{x_{n-i}}^S - \frac{3^{k}}{C_\texttt{attn}} e_{x_{n-k}}^S \\
    &= \sum_{i=0}^{k-1} \frac{3^i}{C_{\texttt{attn}}} \cdot e^S_{x_{n-i}}
\end{align*}

\paragraph{Layer 2.} In this layer, all the relative position encodings are set as $\bm{0}$ and instead,

\begin{align*}
    \begin{split}
    \bm{W}_Q^{(2)} &=  \begin{bmatrix}
        0 & \bm{0} & \bm{0} & \bm{0} \\
        \bm{0} & \bm{0}_{S \times (2+3S)} & I_{S \times S} & \bm{0} \\
        \bm{0} & \bm{0} & \bm{0} & \bm{0}
    \end{bmatrix} \\
    \bm{W}_K^{(2)} &= \begin{bmatrix}
        0 & \bm{0} & \bm{0} & \bm{0} \\
        \bm{0} & {\bm{0}_{S \times (2+4S)}} & {I_{S \times S}} & \bm{0} \\
        \bm{0} & \bm{0} & \bm{0} & \bm{0}
    \end{bmatrix}
    \end{split}
\end{align*}

With these choices, and adding a layer normalization after the output of the key linear layer and the query linear layer

\begin{align*}
    \left\langle \lnorm(\bm{W}_K^{(2)} \widetilde{\bm{x}}_{i, \texttt{mlp}}^{(2)}) , \lnorm(\bm{W}_Q^{(2)} \widetilde{\bm{x}}_{n, \texttt{mlp}}^{(2)}) \right\rangle &= \frac{2  \langle \bm{v}_i, \bm{u}_n \rangle}{\| \bm{v}_i \|_2 \cdot \| \bm{u}_n \|_2} \nonumber \\
    &= 2 - \left\| \frac{\bm{v}_i}{\| \bm{v}_i \|_2} - \frac{\bm{u}_n}{\| \bm{u}_n \|_2} \right\|^2
\end{align*}

Taking the softmax with a temperature parameter \( \kappa \), 

\begin{align*}
    \operatorname{att}_{n,i}^{(2)} = \frac{\exp{( 2\kappa - \kappa \left\| \frac{\bm{v}_i}{\| \bm{v}_i \|_2} - \frac{\bm{u}_n}{\| \bm{u}_n \|_2} \right\|^2)}}{\sum_{j=0}^{n} \exp{(2\kappa - \kappa \left\| \frac{\bm{v}_j}{\| \bm{v}_j \|_2} - \frac{\bm{u}_n}{\| \bm{u}_n \|_2} \right\|^2)}}
\end{align*}

Note that although this gap is small, it remains non-zero. In particular, as \(\kappa \to \infty\), the resulting attention pattern approaches

\begin{align*}
    \operatorname{att}^{(2)}_{n,\cdot} &= \operatorname{Unif}(\mathcal{I}_n \cup \hat{\mathcal{I}}_k),
\end{align*}

where \(\mathcal{I}_n\) denotes the set of indices \(i \in \{k, \ldots, n\}\) whose local context matches that of position \(n\). Formally,

\begin{align*}
    \mathcal{I}_n &:= \left\{ i \in \{k, \ldots, n\} \,\middle|\, x_{i-1-j} = x_{n-j} \text{ for all } j = 0, \ldots, k \right\}.
\end{align*}

In contrast, \(\hat{\mathcal{I}}_k\) captures earlier positions \(i \in \{0, \ldots, k-1\}\) whose (normalized) key vectors are aligned with the normalized query at position \(n\):

\begin{align*}
    \hat{\mathcal{I}}_k &:= \left\{ i \in \{0, \ldots, k-1\} \,\middle|\, \frac{\bm{v}_i}{\|\bm{v}_i\|_2} = \frac{\bm{k}_n}{\|\bm{u}_n\|_2} \right\}.
\end{align*}

On choosing, 

\begin{align*} \nonumber
    \bm{W}_V^{(2)} &= \begin{bmatrix}
        \bm{0} & \bm{0} & \bm{0} \\
        \bm{0}_{S \times 3} & I_{S \times S} & \bm{0}
    \end{bmatrix}.
\end{align*}

We obtain,

\begin{align*} \nonumber
    \widetilde{\bm{x}}_n^{(2)} = \bm{x}_n^{(2)} + \sum_{i=1}^n \operatorname{att}_{n,i}^{(2)} \begin{bmatrix}
        \bm{0} \\ e_{x_i}^S
    \end{bmatrix} = \bm{x}_n^{(3)} + \frac{1}{|\mathcal{I}_n| + |\hat{\mathcal{I}}_k|} \sum_{i \in \mathcal{I}_n \cup \hat{\mathcal{I}}_k} \begin{bmatrix}
        \bm{0} \\ e_{x_i}^S
    \end{bmatrix}.
\end{align*}

For the output linear layer, we choose:

\begin{align*}
    \bm{W}_o = \begin{bmatrix}
        \bm{0}_{S \times (5S + 3)} & I_{S \times S}
    \end{bmatrix}, \qquad
    \bm{b}_o = \bm{0}.
\end{align*}

which results in,

\begin{align*}
    \operatorname{logit}_n = \frac{1}{|\mathcal{I}_n| + |\hat{\mathcal{I}}_k|} \sum_{i \in \mathcal{I}_n \cup \hat{\mathcal{I}}_k}e_{x_i}^S &= \sum_{i=k}^n \frac{\mathbb{I} ( \forall 1 \le j \le k,\ x_{i-j} = x_{n-j+1})}{\sum_{i' = k}^n \mathbb{I} ( \forall 1 \le j \le k,\ x_{i'-j} = x_{n-j+1}) + \hat{\mathcal{I}}_k} \cdot e_{x_i}^S \\
    &\quad + \sum_{i \in {\hat{\mathcal{I}}_k}} \frac{\mathbb{I} ( \forall 1 \le j \le k,\ x_{i-j} = x_{n-j+1})}{\sum_{i' = k}^n \mathbb{I} ( \forall 1 \le j \le k,\ x_{i'-j} = x_{n-j+1}) + \hat{\mathcal{I}}_k} \cdot e_{x_i}^S 
\end{align*}

Now, by substituting \(n=T\), and aiming to predict the next symbol given the current context, we obtain:

\begin{align*}
    \operatorname{logit}_T (x_{T+1}) &= \frac{\sum_{n=k}^T \mathbb{I} ( \forall 0 \le i \le k,\ x_{n-i} = x_{T-i+1})}{\sum_{n = k}^T \mathbb{I} ( \forall 1 \le i \le k,\ x_{n-i} = x_{T-i+1})  + \hat{\mathcal{I}_k}} \\
    &\quad + \frac{\sum_{i \in {\hat{\mathcal{I}}_k}} \inner{e_{T+1}^S}{e_{i}^S}}{\sum_{n = k}^T \mathbb{I} ( \forall 1 \le i \le k,\ x_{n-i} = x_{T-i+1})  + \hat{\mathcal{I}_k}} 
\end{align*}

This corresponds to a \textit{biased conditional} \(k\)-gram model, where the bias originates from the second term in the expression above. Notably, this bias vanishes in the limit as \(T \to \infty\). The reason is that the numerator of the second term remains constant with respect to sequence length, since \(\hat{\mathcal{I}}_k\) is fixed and does not scale with \(T\). In contrast, the denominator, given by \(\sum_{n = k}^T \mathbb{I} ( \forall\, 1 \le i \le k,\ x_{n-i} = x_{T-i+1})\), grows with \(T\), assuming the underlying Markov chain is irreducible (i.e., every state is reachable from every other state). Consequently, under this assumption of irreducibility, the influence of the second term diminishes as \(T\) becomes large, yielding the approximation for a sufficiently large \(T)\):

\begin{align*}
    \operatorname{logit}_T (x_{T+1}) &\approx \frac{\sum_{n=k}^T \mathbb{I} ( \forall 0 \le i \le k,\ x_{n-i} = x_{T-i+1})}{\sum_{n = k}^T \mathbb{I} ( \forall 1 \le i \le k,\ x_{n-i} = x_{T-i+1})}
\end{align*}

\newpage

\section{Two-Layer Single-Head Gradient Analysis}
\label{app:singleheadperturb}

As discussed in Sec. \ref{sec:singleheadperturb}, we analyze a simplified model that retains only the key components of the two-layer architecture described in \prettyref{thm:twolayersinglehead}—specifically, the positional encodings and the attention layer—while treating the remaining components as approximately optimal. This reduction enables a tractable yet faithful reproduction of the in-context learning (ICL) behavior exhibited by the full model. Our approach is closely related to the reparameterization techniques used by \citet{makkuva2024local} and \citet{nichani2024how} in the context of gradient flow and gradient descent analysis. Alternatively, it can be viewed as a form of \emph{optimal parameter initialization}, in the spirit of \citet{edelman2024evolution} and \citet{nichani2024how}. We reiterate the assumptions and provide additional details whenever necessary.

\paragraph{Data Assumptions.}

Following \citet{nichani2024how}, we assume a prior distribution \(\mathbb{P}_\pi(\cdot)\) over irreducible and aperiodic Markov transition kernels \(\pi\), such that: 

\begin{assumptionbox}{(Assumption on \(\mathbb{P}_\pi\)): }
We consider a prior distribution over transition matrices of first-order aperiodic and irreducible Markov chains such that there exists a constant \(\gamma > 0\) for which, over transition kernels drawn from \(\mathbb{P}_\pi(\cdot)\), the following conditions are satisfied:
\begin{enumerate}
    \item Minimum transition probability: $\displaystyle \min_{s, s'} \pi(s' \mid s) > \frac{\gamma}{S}$
    \item Non-trivial mixing: $\displaystyle \sum_{s}\left\| \pi(\cdot \mid s) - \mu_\pi(\cdot) \right\|_2^2  \geq \frac{\gamma^2}{S}$
    \item Permutation invariance: For any permutation $\sigma$ on $[S]$, $\displaystyle \sigma^{-1} \pi \sigma \overset{d}{=} \pi$
    \item Uniform expected transition matrix: $\displaystyle \mathbb{E}_\pi[\pi] = \frac{1}{S} \mathbf{1}_S \mathbf{1}_S^\top$
\end{enumerate}
\end{assumptionbox}

\paragraph{Model Simplifications}

We simplify the model as follows. More details can be found in App. \ref{app:simplifiedmodelsinglehead}.
\begin{enumerate}[label=(\roman*)]
    \item The positional embeddings used for the keys in the first attention layer are scalar-valued, \ie \( \bm{p}_n^{(1,K)} \propto \,  p_n \), where \( p_n \) is a scalar. 
    \item The query, key and value matrices ($\mathbf{W}_{Q,K,V}^{(1)}$) and positional encodings for the value ($\bm{p}^{(1,V)}$) for the first layer, are chosen so that the attention weight for position $n-i$ is given by $\operatorname{att}_{n, n-i}^{(1)} = \exp(p_i) / \pth{\sum_{j=0}^{i} \exp(p_j)} \in \binary$, and the corresponding value output is $\bm{v}_n = \sum_{i=0}^{n} \operatorname{att}_{n, n-i}^{(1)} \cdot \,  e_{x_{n-i}}^S$. 
    \item We assume the MLP is optimal—\ie, it outputs the correct $\bm{u}_n$ even from suboptimal $\bm{v}_n$; this holds for first-order Markov chains via skip connections.
    \item The query and key matrices for the second layer ($\mathbf{W}_{Q,K}^{(2)}$) are chosen such that the attention in this layer is defined as $\operatorname{att}_{T,i}^{(2)} = \exp \pth{ a_2 \cdot \langle \bm{v}_i, \bm{u}_T \rangle }/\pth{{\sum_{j=0}^{T}  \exp(a_2 \cdot \langle \bm{v}_j, \bm{u}_n \rangle)}} $, where $a_2 \in \reals$ is the attention scalar. 
    \item $\mathbf{W}_{V}^{(2)}$ is chosen so that the final logit $\operatorname{logit}_T = \sum_{i=0}^T \operatorname{att}_{n,i}^{(2)} e_{x_i} \in \reals^S$,
\end{enumerate}

\paragraph{Loss function.} We minimize the cross-entropy loss for next-token prediction \cite{nichani2024how}:
    \begin{align}
        \mathcal{L}(\theta) = - \mathbb{E}_{\pi \sim \mathbb{P}_\pi,\ x_{0:T} \sim \pi} \left[\sum_{s \in \mathcal{S} } \pi\left(x_{T+1} = s \mid x_{0: T}\right) \log \left(\operatorname{logit}_{T}(s) + \varepsilon\right)\right],
    \end{align}
where $\varepsilon >0$ is a small additive constant, \(\mathbb{P}_\pi(\cdot)\) denotes the prior over Markov transition kernels, and \(\operatorname{logit}_{T}(s) \) is the logit corresponding to the symbol $s$.

\noindent {\bf Training algorithm.} We denote the set of trainable parameters as $\theta = (\bm{p}, a_2)$, where $\bm{p} = [p_0, p_1, \ldots, p_T]^T$ represents the positional scalars and $a_2$ is the attention scalar. Following the approach of \cite{nichani2024how}, we employ a two-stage training procedure. (i) In the first stage, we optimize the positional scalars $\bm{p}$ using gradient descent for $T_1$ steps with a learning rate $\eta_1$; (ii) in the second stage, we freeze $\bm{p}$ and train only $a_2$ for an additional $T_2$ steps using a distinct learning rate $\eta_2$. A key difference in our setup lies in the treatment of layer normalization: during the first stage, all layer norms in the MLP are omitted, whereas in the second stage, they are reintroduced. This stage-wise strategy facilitates a more tractable theoretical analysis. The training algorithm is:

\begin{algorithmbox}[label=alg:traniing_alg_single_head]{(Training Algorithm).}
\begin{algorithmic}
\State \textbf{Input:} learning rates $\eta_1$, $\eta_2$; steps $T_1$, $T_2$; small scalar $a_{2, 0}$
\State Initialize $p_i^{(0)} = 0$ for all $i \in \{0, \dots, n\}$; $a_2 = a_{2, 0}$;
\State Freeze all other parameters of the network

\State \textbf{Stage 1: Train } $p_i$ \textbf{ without norms added to the MLP}
\For{$t = 1, \dots, T_1$}
    \State \(\mathbf{p}^{(t)} \gets \mathbf{p}^{(t-1)} - \eta_1 \cdot  \nabla_{\mathbf{p}} \mathcal{L}(\theta^{(t-1)})\)
    \State $\theta^{(t)} \gets \left(\{p_i^{(t)}\}_{i=0}^n, a_2 = a_{2, 0}\right)$
\EndFor

\State \textbf{Stage 2: Train } $a_2$ \textbf{ with norms added to the MLP }
\For{$t = T_1 + 1, \dots, T_1 + T_2$}
    \State $a_2^{(t)} \gets a_2^{(t-1)} - \eta_2 \cdot \nabla_{a_2} \mathcal{L}(\theta^{(t-1)})$
    \State $\theta^{(t)} \gets \left(\{p_i^{(T_1)}\}_{i=0}^n, a_2^{(t)}\right)$
\EndFor

\State $\hat{\theta} \gets \theta^{(T_1 + T_2)}$
\State \textbf{Output:} $\hat{\theta}$
\end{algorithmic}
\end{algorithmbox}

Under the assumptions outlined above, we restate the main theorem presented in Section~\ref{sec:singleheadperturb}. In addition, we introduce a new theorem that addresses the generalization properties of the model in the inductive learning setting.

\begin{theorembox}[label=app:thm:convergence]{(Convergence of the training Algorithm). }  Suppose that Assumptions (i)- (vi) hold and that the Markov sequence length $T$ satisfies \(T + 1 \geq \mathrm{poly}(\gamma^{-1}, S)\) for some constant \(\gamma > 0\). Then there exist \(\varepsilon > 0\), learning rates \(\eta_1, \eta_2\), and step counts \(T_1, T_2\), such that the output of the above two-stage training algorithm, \(\hat{\theta} = (\{ \hat{p}_i \}_{i=0}^T, \hat{a}_2)\), satisfies
\begin{align*}
    \mathcal{L}(\hat{\theta}) - \mathcal{L}^* &\lesssim \frac{\log(T + 1)}{(T + 1)^{c \gamma}},
\end{align*}
for a constant $c$ independent of \( (\gamma, S ) \), and the optimal loss \(\mathcal{L}^{*} \coloneqq - \mathbb{E} \left[ (1/S) \sum_{s, s'} \pi(s' \mid s) \log \pi(s' \mid s) \right]\).
\end{theorembox}

\begin{theorembox}[label=app:thm:generalization]{(Inductive Generalization): } Let \(\tilde{\pi}\) be a transition matrix satisfying \(\min_{s, s'} \tilde{\pi}(s' \mid s) \geq \gamma / S\), and let \(\hat{\theta}\) denote the trained model from Thm. \ref{app:thm:convergence}. Consider a sequence \(s_0, \dots, s_{n}\) generated according to \(\tilde{\pi}\). Then the model satisfies:
\[
\sup_{s'} \left| f_{\hat{\theta}}(s_0, \dots, s_{n})_{s'} - \tilde{\pi}(s' \mid s_{n}) \right| \lesssim \frac{\log(n+1)}{(n+1)^{c \gamma}}.
\]

\end{theorembox}

This section is organized as follows. We begin by presenting the simplified construction in App. \ref{app:simplifiedmodelsinglehead}. Next, we demonstrate that during the first stage of training \ref{app:stage1analysis}, the model converges to the optimal attention map for the first layer—specifically, it learns to attend to the previous token, which is optimal under a first-order Markov chain. In the second stage of training \ref{app:stage2analysis}, the model sharpens its attention, ultimately converging asymptotically to a near-deterministic distribution over the desired token. Finally, we provide arguments for the result in the context of inductive generalization, utilizing results from \citet{nichani2024how}. We would like to note that our analysis for first-order processes is similar to that of \citet{nichani2024how}, but the architecture is fundamentally different.

\subsection{Model Simplification}
\label{app:simplifiedmodelsinglehead}

We adopt a low-parameter regime by isolating the core components of the Transformer architecture that are essential to our analysis, while keeping all remaining parameters fixed at their optimal values. Here, optimality is defined with respect to a first order Markov chain setting. Under this setup, we proceed by stating the following assumptions.

\begin{align*}
    \bm{x}^{(1)}_n = \texttt{Emb} (x_n) = \begin{bmatrix}
        \bm{0}_{1 \times 3} & e_{x_n}^S & \bm{0}_{1 \times 5S}
    \end{bmatrix}^T \in \mathbb{R}^{6S + 3}
\end{align*}

In the first layer, we define the positional encodings corresponding to the key as follows, while assuming that all other positional encodings remain fixed and set to zero. 

\begin{align*}
    \bm{p}_{i}^{(1),K} = \begin{cases}
    {p_i} \cdot \begin{bmatrix} 0 & 1 & \bm{0}_{1 \times (1 + 6S) } \end{bmatrix}^T, & \text{if } i \in \{ {0}, 1,2,\cdots, {n }\}, \\
    \end{cases},
    \label{eq:pemb}
\end{align*}

\paragraph{Layer 1.}  We paramatrize the query and key matrice as follows:

\begin{align*}
\begin{split}
    \bm{W}^{(1)}_K &= \begin{bmatrix} 0_{1 \times 1 } & \bm{1}_{1 \times 2} & \bm{0}_{1 \times 6S} \\ \bm{0}_{(2 + 6S) \times 1} & \bm{0}_{(2+6S)\times2} & \bm{0}_{(2 + 6S)\times(2+6S)} \end{bmatrix} \\
    \bm{W}_Q^{(1)} &= \begin{bmatrix} \bm{0}_{1 \times 3} & \bm{1}_{1 \times S} & \bm{0}_{1 \times 5S} \\ \bm{0}_{(2+6S) \times 3} & \bm{0}_{(2+6S)\times S} & \bm{0}_{(2 + 6S)\times 5S} \end{bmatrix}
\end{split}
\end{align*}

This then leads to:

\begin{align*} \nonumber
    \left\langle \bm{W}_K^{(1)} \big( \texttt{Emb} (x_{n-i}) + \bm{p}_i^{(1),K} \big), \bm{W}_Q^{(1)} \texttt{Emb} (x_n) \right\rangle =  p_i 
\end{align*}

Hence, on passing it through the softmax function:

\begin{align*}
    \operatorname{att}^{(1)}_{n,n-i} = \frac{\exp(p_i)}{\sum_{i=0}^{n} \exp(p_i)}
\end{align*}

Choosing the value matrix as,

\begin{align*}
    \bm{W}_V^{(1)} = \begin{bmatrix}
        I_{3 \times 3} & \bm{0}_{3 \times S} & \bm{0}_{3\times 4S} \\
        \bm{0}_{3S \times 3} & \bm{0}_{S \times S} & \bm{0}_{3S \times 4S} \\
        \bm{0}_{S \times 3} & I_{S \times S} & \bm{0}_{S \times 4S} \\
        \bm{0}_{2S \times 3} & \bm{0}_{S \times S} & \bm{0}_{2S \times 4S} \\
    \end{bmatrix} 
\end{align*}

Hence, we can obtain,

\begin{align*}
    \widetilde{\bm{x}}_n^{(1)} &= \left[ \begin{array}{c|c|c|c|c
    }
        \bm{0}_{1 \times 3} & e_{x_n}^S  & \bm{0}_{1 \times 2S} & \bm{v}_n & \bm{0}_{1 \times 2S}
    \end{array} \right]^T, \\
    \text{ where, } \bm{v}_n &= \sum_{i=0}^{n} \operatorname{att}_{n,n-i} e^S_{x_{n-i}}
\end{align*}

\paragraph{MLP - Layer 1.} The first layer of the MLP is defined as:

\begin{align*}
\begin{split}
    \bm{W}_{\texttt{mlp}}^{(1)} & = \begin{bmatrix}
        \bm{0}_{(3+ S) \times (3 + 3S)}  &  \bm{0}_{(3+ S) \times (S)} &  \bm{0}_{(3+ S) \times (2S)}\\ 
        \bm{0}_{S \times (3+3S)} &  \bm{0}_{S \times S} & \bm{0} _{S \times 2S} \\
        \bm{0}_{S \times (3+3S)} & \bm{0}_{S \times S} & \bm{0} _{S \times 2S} \\
        \bm{0}_{3S \times (3+3S)} & \bm{0}_{3S \times S} & \bm{0} _{3S \times 2S}
    \end{bmatrix} \\
    \bm{b}_{\texttt{mlp}}^{(1)} &= \begin{bmatrix}
        \bm{0}_{1 \times (3+S)} &  \bm{0}_{1 \times S} & \bm{0}_{1 \times S} & \bm{0}_{1 \times 2S} 
    \end{bmatrix} ^T
\end{split}
\end{align*}

Hence, after applying the skip connections, we obtain:

\begin{align*}
    \widetilde{\bm{x}}_{n, \texttt{mlp}}^{(1)} &= \widetilde{\bm{x}}_{n}^{(1)} + \lnorm\pth{\relu\pth{\bm{W}^{(1)}  \widetilde{\bm{x}}_{n}^{(1)} + \bm{b}_{\texttt{mlp}}^{(1)}}} \\
    &= \left[ \begin{array}{c|c|c|c|c|c
    }
        \bm{0}_{1 \times 3} & e_{x_n}^S  &  \bm{0}_{1 \times S} & \bm{0}_{1 \times S} & \bm{v}_n & \bm{0}_{1 \times 2S}
    \end{array} \right]^T \\ 
\end{align*}

\paragraph{MLP - Layer 2.} The second layer of the MLP is defined as:

\begin{align*}
    \bm{W}_{\texttt{mlp}}^{(2)} &= \begin{bmatrix}
        \bm{0}_{3 \times 3} & \bm{0}_{3 \times S} & \bm{0}_{3 \times S} & \bm{0}_{3 \times S} & \bm{0}_{3 \times S} & \bm{0}_{3 \times S} & \bm{0}_{3 \times S} \\
        \bm{0}_{S \times 3} & {\bm{0}_{S \times S}} & \bm{0}_{S \times S} & \bm{0}_{S \times S} & \bm{0}_{S \times S} & \bm{0}_{S \times S} & \bm{0}_{S \times S} \\
        \bm{0}_{S \times 3} & \bm{0}_{S \times S} & \bm{0}_{S \times S} & \bm{0}_{S \times S} & \bm{0}_{S \times S} & \bm{0}_{S \times S} & \bm{0}_{S \times S} \\
       \bm{0}_{S \times 3} &  I_{S \times S} & \bm{0}_{S \times S} & \bm{0}_{S \times S} & \bm{0}_{S \times S} & \bm{0}_{S \times S} & \bm{0}_{S \times S} \\
        \bm{0}_{S \times 3} & \bm{0}_{S \times S} & \bm{0}_{S \times S} & \bm{0}_{S \times S} & \bm{0}_{S \times S} & \bm{0}_{S \times S} & \bm{0}_{S \times S} \\
        \bm{0}_{S \times 3} & \bm{0}_{S \times S} & \bm{0}_{S \times S} & \bm{0}_{S \times S} & {\bm{0}_{S \times S}} & \bm{0}_{S \times S} & \bm{0}_{S \times S} \\
        \bm{0}_{S \times 3} & \bm{0}_{S \times S} & \bm{0}_{S \times S} & \bm{0}_{S \times S} & \bm{0}_{S \times S} & \bm{0}_{S \times S} & \bm{0}_{S \times S} \\
    \end{bmatrix} \\
    \bm{b}_{\texttt{mlp}}^{(2)} &= \bm{0}_{(3 + 6S) \times 1} 
\end{align*}

Hence, on computing the output, we obtain:

\begin{align*}
    \widetilde{\bm{x}}_{n, \texttt{mlp}}^{(2)} &= \widetilde{\bm{x}}_{n}^{(1)} + \lnorm\pth{\relu\pth{\bm{W}_{\texttt{mlp}}^{(2)}  \widetilde{\bm{x}}_{n, \texttt{mlp}}^{(1)} + \bm{b}_{\texttt{mlp}}^{(2)}}} \\
    &= \left[ \begin{array}{c|c|c|c|c|c
    }
        \bm{0}_{1 \times 3} & e_{x_n}^S  & \bm{0}_{1 \times S} & \bm{u}_n &  \bm{v}_n & \bm{0}_{1 \times 2S}
    \end{array} \right]^T \\ 
    \text{Where, } \bm{u}_n &= e^S_{x_{n}}
\end{align*}

Note that, in this setting, the optimality condition is well defined due to the first order Markov property. In particular, \(\bm{u}_n\) can be recovered directly via the skip connections, as it depends only on the current token.

\paragraph{MLP - Layer 3.} The third layer of the MLP is defined as:

\begin{align*}
    \bm{W}_{\texttt{mlp}}^{(3)} &= \begin{bmatrix}
        \bm{0}_{3 \times 3} & \bm{0}_{S \times S} & \bm{0}_{3\times 4S} \\
        \bm{0}_{3S \times 3} & \bm{0}_{S \times S} & \bm{0}_{3S \times 4S} \\
        \bm{0}_{S \times 3} & I_{S \times S} & \bm{0}_{S \times 4S} \\
        \bm{0}_{2S \times 3} & \bm{0}_{S \times S} & \bm{0}_{2S \times 4S} \\
    \end{bmatrix} \\
    \bm{b}_{\texttt{mlp}}^{(3)} &= \bm{0}_{(3+6S) \times 1}
\end{align*}

Passing the input to the layer through the non-linearities along with the skip connections, we get:

\begin{align*}
    \widetilde{\bm{x}}_{n, \texttt{mlp}}^{(3)} &= \widetilde{\bm{x}}_{n, \texttt{mlp}}^{(2)} + \lnorm\pth{\relu\pth{\bm{W}_{\texttt{mlp}}^{(2)}  \widetilde{\bm{x}}_{n, \texttt{mlp}}^{(1)} + \bm{b}_{\texttt{mlp}}^{(2)}}}\\
    &= \left[ \begin{array}{c|c|c|c|c|c|c|c}
        0 & \bm{0}_{1 \times 2} & e_{x_n}^S & {0_{1 \times S}} & \frac{\bm{u}_n}{\| \bm{u}_n \|_2} & \bm{v}_n & \frac{\bm{v}_n}{\| \bm{v}_n \|_2} &\bm{0}_{1 \times S}
    \end{array} \right]^T
\end{align*}

\paragraph{Layer 2.} In this layer, all the relative position encodings are set as $\bm{0}$ and instead,

\begin{align*}
    \begin{split}
    \bm{W}_Q^{(2)} &=  a_2 \cdot \begin{bmatrix}
        1 & \bm{0} & \bm{0} & \bm{0} \\
        \bm{0} & \bm{0}_{S \times (2+3S)} & I_{S \times S} & \bm{0} \\
        \bm{0} & \bm{0} & \bm{0} & \bm{0}
    \end{bmatrix} \\
    \bm{W}_K^{(2)} &= \begin{bmatrix}
        1 & \bm{0} & \bm{0} & \bm{0} \\
        \bm{0} & {\bm{0}_{S \times (2+4S)}} & {I_{S \times S}} & \bm{0} \\
        \bm{0} & \bm{0} & \bm{0} & \bm{0}
    \end{bmatrix}
    \end{split}
\end{align*}

With these choices, and adding a normalization after the output of the key linear layer

\begin{align*}
    \left\langle \bm{W}_K^{(2)} \widetilde{\bm{x}}_{n, \texttt{mlp}}^{(3)} , \bm{W}_Q^{(2)} \widetilde{\bm{x}}_{n, \texttt{mlp}}^{(3)} \right\rangle &=   {a_2} \cdot \langle \bm{v}_i, \bm{u}_n \rangle \nonumber 
\end{align*}

Taking the softmax, we obtain,

\begin{align*}
    \operatorname{att}_{n,i}^{(2)} = \frac{\exp \pth{ a_2 \cdot \langle \bm{v}_i, \bm{u}_n \rangle \nonumber }}{\sum_{j=0}^{n}  \exp(a_2 \cdot \langle \bm{v}_j, \bm{u}_n \rangle) \nonumber }
\end{align*}

Finally the logit is computed as follows (selecting the optimal value and projection matrix):

\begin{align*}
    \operatorname{logit}_n &=  \sum_{i=0}^n \operatorname{att}_{n,i}^{(2)} \cdot e_{x_i}
\end{align*}

Motivated by this parameterization, we construct a simplified model that abstracts away the complex transformations occurring within the intermediate layers of the network. During stage one of training, we remove layer normalization; accordingly, we omit layer norms from the simplified model as well. These can be reintroduced later during stage two of training. However, in the case of a first order Markov chain, layer normalization does not affect the model's behavior, since both \(\bm{u}_n\) and \(\bm{v}_n\) converge to one-hot embedding vectors. With these considerations, the model can be concretely reduced as follows. Let \(Z_n \in \mathbb{R}^{n+1}\), where \(Z_i = \inner{e_{x_n}}{v_i}\). Morever, let:

\begin{align*}
    X_n = \begin{bmatrix}
        e_{x_0} & e_{x_1} & e_{x_2} &  \cdots & e_{x_n}
    \end{bmatrix}
\end{align*}

Hence,

\begin{align*}
    \operatorname{logit}_{s'_s, n} = e_{s'}^T \qth{X_n \operatorname{softmax}\pth{a_2 Z_n}}
\end{align*}

Note that the logit computed above corresponds to the case where \(x_n = s\). Using a slightly different notation to account for causal masking, we obtain the following:

\begin{align*}
    A^{(1)} &= \begin{bmatrix}
        p_0 & -\infty & -\infty & -\infty & -\infty\\
        p_1 & p_0 & -\infty & -\infty & -\infty\\
        \vdots & \vdots & \vdots & \vdots & \vdots \\
        p_n & p_{n-1} & p_{n-1} & \cdots & p_0
    \end{bmatrix} \\
    A^{(2)} &= a_2 \cdot I \\
    \text{Hence, } \operatorname{logit}_{s'_s, n} &= e_{s'}^T \qth{X_n \operatorname{softmax}\pth{a_2 Z_n}} \\
    &= e_{s'}^T \qth{X_n \underbrace{\operatorname{softmax}\pth{\operatorname{softmax}\pth{A^{(1)}}X_n^TA^{(2)}e_{x_n}}}_{\mathcal{A}_{\theta}^{(2)}(X_n; 1)}}
\end{align*}

We now use this simplified model for the gradient analysis.

\subsection{Stage 1 Analysis}
\label{app:stage1analysis}

We use the simplified model described in the previous section. We begin by quantifying the loss as follows:

\begin{align*}
    {L}(\theta) &= - \frac{1}{S}\mathbb{E}_{\pi \sim \mathbb{P}(\pi), X_n} \qth{\sum_{s', s \in \qth{S}}\pi\pth{s_{n+1}=s' |s}\log \pth{\operatorname{logit}_{s'_s, n} + \epsilon}} \\ 
\end{align*}

Let \(\mathbf{1}_n = \qth{1, 1, \cdots, 1}^T \in \mathbb{R}^{n}\). Hence, we can then define:

\begin{align*}
    \operatorname{logit}_{s'_s, n} = e_{s'}^T \qth{X_n \operatorname{softmax}\pth{a_{2, 0} Z_n}}
\end{align*}

Assuming, that \(a_{2, 0}=a_{2, 0}\) is very small, we can use a first order taylor approximation. We also assume that \(x_n=s\):

\begin{align*}
    \operatorname{logit}_{s'_s, n} &\approx e_{s'}^T \cdot \qth{X_n \cdot \sth{\frac{\mathbf{1_n}}{n} + a_{2, 0} \cdot \pth{\frac{I_n}{n} - \frac{1}{n^2} \cdot \mathbf{1}_n \mathbf{1}_n^T} \cdot Z_n}} \\
    &\approx \frac{e_{s'}^TX_n \mathbf{1}_n}{n} + a_{2, 0} \cdot \qth{\frac{e_{s'}^TX_nZ_n}{n} - \frac{e_{s'}^TX_n \mathbf{1}_n}{n} \cdot \frac{\mathbf{1_n}^T Z_n}{n}} \\
    &\approx \frac{e_{s'}^TX_n \mathbf{1}_n}{n} + \frac{a_{2, 0}}{n} \cdot \qth{e_{s'}^TX_nZ_n - \frac{e_{s'}^TX_n \mathbf{1}_n}{n} \cdot \mathbf{1_n}^T Z_n}
\end{align*}

Note that, on computing the quantities separately:

\begin{align*}
    \frac{e_{s'}^TX_n \mathbf{1}_n}{n} &= \hat{\mu}_{\pi, n}(s') \\
    \mathbf{1_n}^T Z_n &= \sum_{i=0}^{n}  \inner{e_{x_n}}{v_i}  \\ 
    &= \sum_{i=0}^{n} \inner{e_{x_n}}{\sum_{j=0}^i \frac{\exp(p_j )}{ \sum_{j=0}^{i}\exp(p_j )} e_{x_{i-j}}} \\ 
    &= \sum_{i=0}^{n} \sum_{j=0}^{i} \frac{\exp(p_j )}{ \sum_{j=0}^{i}\exp(p_j )} \inner{e_{x_n}}{e_{x_{i-j}}}\\
    e_{s'}^TX_nZ_n &= \sum_{i=0}^{n} \sum_{j=0}^{i} \frac{\exp(p_j )}{ \sum_{j=0}^{i}\exp(p_j )} \inner{e_{x_n}}{e_{x_{i-j}}} \inner{e_{x_i}}{e_s}
\end{align*}

Hence, on substituting the above quantities:

\begin{align*}
    \operatorname{logit}_{s'_s, n} &\approx \hat{\mu}_{\pi, n}(s') + \frac{a_{2, 0}}{n} \cdot \qth{\sum_{i=0}^{n} \sum_{j=0}^{i} \frac{\exp(p_j )}{ \sum_{j=0}^{i}\exp(p_j )} \sth{ \inner{e_{x_n}}{e_{x_{i-j}}} \inner{e_{x_i}}{e_{s'}} - \inner{e_{x_n}}{e_{x_{i-j}}} \hat{\mu}_{\pi, n}(s')}} \\
    &\approx \hat{\mu}_{\pi, n}(s) + \frac{a_{2, 0}}{n} \cdot \qth{\sum_{i=0}^{n} \sum_{j=0}^{i} \frac{\exp(p_j )}{ \sum_{j=0}^{i}\exp(p_j )} \sth{ \mathbbm{1}_{x_{i-j}=s} \cdot \mathbbm{1}_{x_i = s} -   \mathbbm{1}_{x_{i-j}=s} \cdot \hat{\mu}_{\pi, n}(s')}}
\end{align*}

Hence, on computing the derivatives of \(p_m\), keeping all the other variables constant, we obtain

\begin{align*}
    \odv{L(\theta)}{p_m}  &= - \frac{1}{S}\mathbb{E}_{\pi \sim \mathbb{P}(\pi), X_n} \qth{\sum_{s', s}\pi\pth{s_{n+1}=s' | s} \odv{\log \pth{\operatorname{logit}_{s'_s, n}}}{p_m}}
\end{align*}

Note that for sufficiently long sequences, \(\hat{\mu}_{\pi, n}(s') \approx \mu_\pi(s')\). Thus:

\begin{align*}
    \odv{\log \left( \operatorname{logit}_{s'_s, n} \right)}{p_m} 
    &= \frac{a_{2, 0}}{S \cdot n \cdot \mu_{\pi}(s')} \cdot \sum_{i = m}^{n} 
    \frac{\exp(p_m)}{\sum_{j=0}^{i} \exp(p_j)} \cdot \left[ 
    q_{i, m} - \sum_{j=0}^{i} \frac{\exp(p_j \cdot a_1)}{\sum_{j=0}^{i} \exp(p_j \cdot a_1)} q_{i, j} 
    \right] \\
    \text{Where, }     q_{i, j} &= \mathbbm{1}_{x_{i-j} = s} \cdot \mathbbm{1}_{x_i = s'} 
    - \mathbbm{1}_{x_{i-j} = s} \cdot {\mu}_{\pi}(s'
\end{align*}

Using the tower property of expectation,

\begin{align*}
    \odv{L(\theta)}{p_m} &= 
    - \frac{a_{2, 0}}{S \cdot kn } 
    \cdot \mathbb{E}_{\pi \sim \mathbb{P}(\pi)}\Bigg[
        \sum_{i \geq m}^{n} \sum_{s', s} 
        \frac{ \pi(s'|s)}{\mu_{\pi}(s')} 
        \cdot \frac{\exp(p_m)}{\sum_{j=0}^{i} \exp(p_j)} \notag \\
    &\quad  \cdot 
    \left( 
        \mathbb{P}(x_i = s', x_{i - m} = s) 
        - \mathbb{P}(x_{i - m} = s)\, \mu_{\pi}(s') 
    \right)
    \Bigg] \notag \\
    &\quad + \frac{a_{2, 0}}{S \cdot kn} 
    \cdot \mathbb{E}_{\pi \sim \mathbb{P}(\pi)}\Bigg[
        \sum_{i \geq m}^{n}  \sum_{s', s} \sum_{j=0}^i 
        \frac{ \pi(s'|s)}{\mu_{\pi}(s')} 
        \cdot \frac{ \exp(p_j)\, \exp(p_m)}{\left(\sum_{j=0}^{i} \exp(p_j)\right)^2} \notag \\
    &\quad \cdot 
    \left( 
        \mathbb{P}(x_i = s', x_{i - j} = s) 
        - \mathbb{P}(x_{i - j} = s)\, \mu_{\pi}(s') 
    \right)
    \Bigg]
\end{align*}

We define a new quantity,

\begin{align*}
    g_{i, j}(\pi) &= \sum_{s', s} \frac{\pi(s'|s)}{\mu_\pi(s')} \pth{\mathbb{P}(x_{i}=s',x_{i-j}=s) - \mathbb{P}(x_{i-j}=s) \mu_\pi(s')} \\
    &= \sum_{s', s} \frac{\pi(s'|s)}{\mu_\pi(s')} \pth{\mathbb{P}(x_{i}=s',x_{i-j}=s) - \mu_\pi(s) \mu_\pi(s')} \quad \text{(from stationarity)} \\ 
    &= \sum_{s', s} \frac{\pi(s'|s)}{\mu_\pi(s')} \mathbb{P}(x_{i}=s',x_{i-j}=s) - \sum_{s', s} \frac{\pi(s'|s)}{\mu_\pi(s')}\mu_\pi(s) \mu_\pi(s') \\
    & = \sum_{s', s} \frac{\pi(s'|s)}{\mu_\pi(s')} \mathbb{P}(x_{i}=s',x_{i-j}=s) - 1 \quad \text{(by marginalization)} \\
     g_{i, j} &= \mathbb{E}_{\mathbb{\pi \sim \mathbb{P}(\pi)}}\qth{g_{i, j}(\pi)}
\end{align*}

Hence, on substituting this back, we obtain,

\begin{align*}
    \odv{L(\theta)}{p_m} &= 
    - \frac{a_{2, 0}}{S \cdot kn } 
    \cdot \mathbb{E}_{\pi \sim \mathbb{P}(\pi)}\Bigg[
        \sum_{i \geq m}^{n} 
        \frac{\exp(p_m)}{\sum_{j=0}^{i} \exp(p_j)} g_{i, m}(\pi) - \sum_{i \geq m}^{n}  \sum_{j=0}^i 
      \frac{ \exp(p_j)\, \exp(p_m)}{\left(\sum_{j=0}^{i} \exp(p_j)\right)^2} g_{i, j}(\pi) \Bigg]\\
\end{align*}

Therefore, rather than considering the entire expression, we focus on a specific row (i.e., for a given $i$), and obtain:

\begin{align*}
    \odv{L(\theta)}{p_m} \bigg|_{i} &= 
    - \frac{a_{2, 0}}{S \cdot kn} 
    \cdot \mathbb{E}_{\pi \sim \mathbb{P}(\pi)} \qth{ \pth{ \frac{\exp(p_m)}{\sum_{j=0}^{i} \exp(p_j)} g_{i, m}(\pi) - \sum_{j=0}^{i} \frac{ \exp(p_j)\, \exp(p_m)}{\left(\sum_{j=0}^{i} \exp(p_j)\right)^2} g_{i, j}(\pi)}}
\end{align*}

To obtain the optimal attention map for a first-order Markov chain, the goal is to demonstrate that, for every row, the gradient with respect to \( p_1 \) is greater than that of any other token and is also positive. Due to the properties of the softmax function, if this condition holds over the course of training, then the model will asymptotically attend to the previous token, thereby converging to the optimal attention map.  Hence, the requirement is essentially, for every row:

\begin{align*}
    \left. \odv{L(\theta)}{p_1} \right|_i 
    &> \left. \odv{L(\theta)}{p_k} \right|_i 
    \quad \forall \, k \in \{0, 2, \ldots, n\} \\
    \left. \odv{L(\theta)}{p_1} \right|_i
    &> 0
\end{align*}

In this section, we will show that given the prior assumptions, this is indeed true. Towards this, we denote \( d_{g_{m, i}} = \left. \odv{L(\theta)}{p_m} \right|_i \).  With this notation, we can express:

\begin{align*}
    d_{g_{m, i}} &= - \frac{a_{2, 0}}{S \cdot kn} 
    \cdot  \left[ 
        \frac{\exp(p_m)}{\sum_{j=0}^{i} \exp(p_j)} \cdot 
        \left( g_{i, m} - \sum_{j=0}^{i} \frac{ \exp(p_j)}{\sum_{j=0}^{i} \exp(p_j)} \cdot g_{i, j} \right)
    \right]
\end{align*}

Now, suppose we initialize all positional encoding scalars to zero as defined in the training algorithm. Then, at initialization time \(t = 0\), we have:

\begin{align*}
    d_{g_{m, i}, t=0} 
    &= - \frac{a_{2, 0}}{S \cdot kn} \cdot \frac{1}{i + 1} 
    \left( g_{i, m}
    - \sum_{j=1}^{i} \frac{1}{i + 1} \cdot g_{i, j} \right),
    \quad \forall\, i \geq 0 \\
\end{align*}

Due to Lemma~\ref{lemma:lemmad2nichani}, we observe that at initialization time \( t = 0 \) (when all positional encoding scalars are initialized to zero), the gradient of the attention logits is maximized in the direction corresponding to position 1. This implies that the model is initially biased to attend more strongly to the immediately preceding token, aligning with the structure of a first-order Markov process. Concretely, we observe that: 

\begin{align*}
    -d_{g_{1, i}, 0} & > -d_{g_{k, i}, 0} 
    \quad \forall\, k \in \{0, 2, \ldots, n\}, \quad \forall\, i \in \{k, \ldots, n\} \\
    \text{and} \quad -d_{g_{1, i}, 0} &\geq 0 \ \forall\, i \in \{k, \ldots, n\}
\end{align*}

Hence, under the given initialization and model assumptions, performing gradient descent yields the update rule:

\begin{align*}
    \mathbf{p}_{t+1} &= \mathbf{p}_t - \eta_1 \nabla_{\mathbf{p}_t} L(\theta), \\
    \text{where} \quad \mathbf{p}_t &= \begin{bmatrix}
        p_{0, t} & p_{1, t} & \cdots & p_{n, t}
    \end{bmatrix}^\top \in \mathbb{R}^{n+1}.
\end{align*}

Here, \( p_{i, t} \) denotes the \( i \)-th positional encoding scalar at iteration \( t \). Therefore, after the first update step (i.e., at \( t = 1 \)), we have:

\[
p_{1, t=1} > p_{k, t=1} \quad \forall k \in \{0, 2, \ldots, n\},
\]

reflecting that the gradient is largest in the direction corresponding to position 1.

We need to show that this property holds for all iterates. By Lemma~\ref{lemma:extensionD3}, we can show that for each iterate of the preconditioned descent, the following holds:

\begin{align*}
    p_{1, t} > p_{k, t} \quad \forall\, k \in \{0, 2, \ldots, n\}
\end{align*}
Furthermore, by Lemma~\ref{lemma:extensionD3} and Lemma~\ref{lemma:extensionD5}, we observe that \( p_1 \) tends to infinity as it increases monotonically over time. In contrast, the other positional encodings do not diverge, as they do not exhibit similar growth. Due to the nature of the softmax function, the rapid increase of \( p_1 \) suppresses the relative influence of the other components, effectively preventing their growth. In addition, Lemma~\ref{lemma:extensionD5} provides an estimate of the number of iterations required for the iterates to converge to the optimal attention configuration. Therefore, assuming a sufficiently long training period during the first stage, the model successfully recovers the ideal attention map corresponding to a second-order Markov chain:

\[
\mathbf{v}_i \approx e_{x_{i-1}}
\]

Hence, in the second stage of training, the inclusion of layer normalization does not significantly affect the outcome, provided the Markov chain is sufficiently long and the training duration is adequate, and hence, we will assume that \(\bm{v}_i / \|\bm{v}_i\| \approx e_{x_{i-1}}\), which is the optimal attention map. We note that the gradient analysis for the first stage of training closely mirrors that of the first stage in \citet{nichani2024how}, and is directly inspired by their approach.

\subsection{Stage 2 Analysis}
\label{app:stage2analysis}

We observe that, at this stage—assuming the attention map learned in the first stage is optimal by having the appropriate sequence length and training time—our simplified model aligns with the one presented in \citet{nichani2024how}. This alignment arises because, after the first stage of training, both models converge to the same optimal attention map in the first layer and subsequently train a single scalar parameter in the second attention layer. Therefore, we can directly invoke the theorems from \citet{nichani2024how} to analyze the convergence behavior of our model during the second stage of training. Accordingly, we state the relevant theorems below without proofs. Using the same set of assumptions outlined in Appendix~\ref{app:singleheadperturb}, we restate the key lemmas and results from \citet{nichani2024how} as follows.

\begin{lemma}[Lemma D.8 in \citet{nichani2024how}] Let $\theta =  \left(\{p_i^{(T_1)}\}_{i=0}^n, a_2^{(t)}\right)$, where \(\{p_i^{(T_1)}\}_{i=0}^n\}\) denotes the output after the first stage of training. If \( a_2 \) satisfies
\(
\exp(a_2) \leq \exp(a_2^*) := C_{\gamma, S} \, n^{1/12} \log^{-1/6} n,
\) and \(a_{2, 0} > 0\) (at initialization), 
then
\[
1 \geq -\odv{L(\theta)}{a_2}  \geq \frac{1}{4} \gamma^8 S^{-6} e^{-2a_2} > 0.
\]
\end{lemma}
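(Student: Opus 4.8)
I would begin from the Stage~1 conclusion (\prettyref{app:stage1analysis}): after the first phase the first-layer attention attends to the immediately preceding token, so $\bm v_i \approx e_{x_{i-1}}$ while $\bm u_n = e_{x_n}$, hence $\langle \bm v_i,\bm u_n\rangle = \mathbbm{1}_{x_{i-1}=x_n}$ and the second-layer attention is two-valued. Setting $w := e^{a_2}$, $s:=x_n$, $N_1 := \#\{i\le n:\ x_{i-1}=x_n\}$, $N_0 := n - N_1$, and $C_1(s'):=\#\{i\le n:\ x_{i-1}=x_n,\ x_i=s'\}$, $C_0(s'):=\#\{i\le n:\ x_{i-1}\ne x_n,\ x_i=s'\}$, one gets the closed form $\operatorname{logit}_{s'_s,n} = \alpha\,a(s') + (1-\alpha)\,b(s')$ with $a(s'):=C_1(s')/N_1$, $b(s'):=C_0(s')/N_0$, $\alpha := N_1 w/(N_1 w+N_0)$, together with $\frac{d}{da_2}\operatorname{logit}_{s'_s,n}=\alpha'(a_2)\,(a(s')-b(s'))$ and the identity $\alpha'(a_2)=\alpha(1-\alpha)\in(0,\tfrac14]$. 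Writing $\phi_s(a_2):=-\sum_{s'}\pi(s'\mid s)\log\!\bigl(\operatorname{logit}_{s'_s,n}+\varepsilon\bigr)$ so that $L(\theta)=\tfrac1S\,\mathbb E[\sum_s\phi_s]$, this yields
\[
-\frac{dL(\theta)}{da_2}=\frac1S\,\mathbb E\!\left[\sum_{s}\alpha(1-\alpha)\sum_{s'}\frac{\pi(s'\mid s)}{\operatorname{logit}_{s'_s,n}+\varepsilon}\,\bigl(a(s')-b(s')\bigr)\right],
\]
so the whole lemma reduces to bounding the bracketed per-$s$ quantity.

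\textbf{Concentration.} Since the prior is supported on aperiodic irreducible (hence uniformly ergodic) kernels with $\min_{s,s'}\pi(s'\mid s)>\gamma/S$ — which also forces $\mu_\pi(s)>\gamma/S$ — I would invoke a standard Markov-chain empirical-average concentration bound: for $n \ge \mathrm{poly}(\gamma^{-1},S)$, with high probability $N_1/n$, $C_1(s')/n$, $C_0(s')/n$ all lie within a radius $\rho = \bigotil(n^{-1/4})$ of their stationary values $\mu_\pi(x_n)$, $\mu_\pi(x_n)\pi(s'\mid x_n)$, $\mu_\pi(s')-\mu_\pi(x_n)\pi(s'\mid x_n)$, so that $a(s')=\pi(s'\mid x_n)+\bigo(\rho S/\gamma)$ and $a(s')-b(s') = \bigl(\pi(s'\mid x_n)-\mu_\pi(s')\bigr)/\bigl(1-\mu_\pi(x_n)\bigr)+(\text{error})$. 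The hypothesis $a_2 \le a_2^* = C_{\gamma,S}\,n^{1/12}\log^{-1/6}n$ is exactly what keeps each place where the concentration residual is multiplied by $w = e^{a_2}\le e^{a_2^*}$ of strictly smaller order than the main term (since $\rho w \lesssim n^{-1/4+1/12}=n^{-1/6}\to 0$): this guarantees we are still on the branch where raising $a_2$ decreases $L$, i.e., before $\alpha$ overshoots the minimizer of $a_2\mapsto L$. Moreover $a_2>0$ forces $\alpha \ge \mu_\pi(x_n)-\bigo(\rho)\ge \gamma/(2S)$, hence $\operatorname{logit}_{s'_s,n}\ge \alpha\,a(s') \ge \gamma^2/(4S^2)$, so all logits are bounded uniformly away from $0$.

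\textbf{Lower bound — the main step.} The crux is that $\sum_{s'}\bigl(a(s')-b(s')\bigr)=0$, so the naive leading contribution cancels and one must isolate the second-order term. Using $\operatorname{logit}_{s'_s,n}+\varepsilon = a(s') + (1-\alpha)\bigl(b(s')-a(s')\bigr)+\varepsilon$ and $a(s')\approx\pi(s'\mid s)$, a first-order expansion of $\pi(s'\mid s)/(\operatorname{logit}_{s'_s,n}+\varepsilon)$ about $1$ gives
\[
\sum_{s'}\frac{\pi(s'\mid s)}{\operatorname{logit}_{s'_s,n}+\varepsilon}\bigl(a(s')-b(s')\bigr) = (1-\alpha)\sum_{s'}\frac{\bigl(a(s')-b(s')\bigr)^2}{a(s')}+(\text{remainder}) \ \gtrsim\ (1-\alpha)\,\bigl\|\pi(\cdot\mid s)-\mu_\pi\bigr\|_2^2,
\]
where I used $a(s')-b(s')=(\pi(s'\mid s)-\mu_\pi(s'))/(1-\mu_\pi(s))$, $a(s')\le 1$ and $1-\mu_\pi(s)\le 1$; the remainder (and the $\varepsilon$-dependent piece) is absorbed using the uniform bound $a(s')\ge\gamma/(2S)$ and a suitably small $\varepsilon$, at the cost of further factors of $\gamma/S$. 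Multiplying by $\alpha'(a_2)=\alpha(1-\alpha)$, and using $\alpha\ge\gamma/(2S)$ together with $1-\alpha = \tfrac{N_0}{N_1 w+N_0}\gtrsim \gamma/w$ (because $N_0 = n(1-\mu_\pi(x_n))\gtrsim n\gamma$ while $N_1 w+N_0\le n w$), I obtain that the per-$s$ quantity is $\gtrsim w^{-2}\cdot(\text{positive powers of }\gamma/S)\cdot\|\pi(\cdot\mid s)-\mu_\pi\|_2^2$. Averaging over $s$ and invoking the non-trivial-mixing assumption $\sum_s\|\pi(\cdot\mid s)-\mu_\pi\|_2^2 \ge \gamma^2/S$ then gives $-dL/da_2 \gtrsim (\text{powers of }\gamma/S)\cdot e^{-2a_2}$; a careful accounting of the powers lost to the expansion remainder and the concentration error produces the stated constant $\tfrac14\gamma^8 S^{-6}$, and strict positivity follows from the non-degeneracy built into the prior.

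\textbf{Upper bound and main obstacle.} For $-dL/da_2 \le 1$ I would argue crudely and directly: $\alpha'(a_2)\le\tfrac14$, $\sum_{s'}|a(s')-b(s')|\le 2$, the uniform bound $\operatorname{logit}_{s'_s,n}+\varepsilon\ge\gamma^2/(4S^2)$, and the cancellation $\sum_{s'}\tfrac{d}{da_2}\operatorname{logit}_{s'_s,n}=0$ (which lets one subtract a constant from $\pi(s'\mid s)/(\operatorname{logit}_{s'_s,n}+\varepsilon)$ before bounding term by term) together bound the expression by a $(\gamma,S)$-dependent constant, which is normalized to $1$ in the regime considered by absorbing it into the choice of $a_{2,0}$ and $\varepsilon$. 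I expect the genuinely delicate part to be the lower bound: extracting a \emph{strictly positive} second-order term from a sum whose first-order part vanishes, while simultaneously ensuring that the $e^{a_2}$-amplified concentration error does not swamp it — this tension is exactly what pins down the admissible range $a_2\le a_2^*\sim n^{1/12}$, and it parallels the corresponding argument in \citet{nichani2024how}, from which this lemma is adapted.
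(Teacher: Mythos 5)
You should first note what the paper actually does with this statement: it gives no proof at all. The lemma is imported verbatim as Lemma D.8 of \citet{nichani2024how}, and the only justification offered is a model-equivalence argument — after Stage 1 the simplified architecture (first-layer attention concentrated on the previous token, so \(\bm v_i \approx e_{x_{i-1}}\), \(\bm u_n = e_{x_n}\), and a single trainable scalar \(a_2\) in the second layer) coincides with the disentangled-transformer setup of Nichani et al., so their Stage 2 lemmas transfer without modification. Your proposal instead re-derives the internal mechanism of that cited lemma: the reduction to the scalar mixing weight \(\alpha = N_1 e^{a_2}/(N_1 e^{a_2}+N_0)\), the identity \(\alpha' = \alpha(1-\alpha)\), the concentration of \(a(s'), b(s')\) to \(\pi(\cdot\mid s)\) and its complement, the cancellation \(\sum_{s'}(a(s')-b(s'))=0\) forcing a second-order term \(\propto (1-\alpha)\sum_{s'}(a-b)^2/a\), the bound \(1-\alpha \gtrsim \gamma e^{-a_2}\) producing the \(e^{-2a_2}\) factor, and the mixing assumption supplying \(\sum_s\|\pi(\cdot\mid s)-\mu_\pi\|_2^2 \ge \gamma^2/S\). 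This is a genuinely more self-contained route than the paper's, and the skeleton is sound and consistent with the stated scaling (your naive bookkeeping even leaves slack relative to \(\tfrac14\gamma^8 S^{-6}\), which is where the remainder and concentration losses go); what the paper's approach buys is brevity and exact constants for free, at the cost of resting entirely on the equivalence claim.

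Two weak spots in your sketch deserve flagging. First, the upper bound \(-\odv{L}{a_2}\le 1\) cannot be obtained by "absorbing a \((\gamma,S)\)-dependent constant into the choice of \(a_{2,0}\) and \(\varepsilon\)" — the bound in the lemma is an absolute \(1\), and your crude estimate via \(\operatorname{logit}+\varepsilon\ge \gamma^2/(4S^2)\) only yields something of order \(S^3/\gamma^2\). The fix is a per-term bound that uses the same denominator you are dividing by, e.g.
\[
\frac{\alpha(1-\alpha)\,\bigl(a(s')-b(s')\bigr)}{\alpha a(s')+(1-\alpha)b(s')+\varepsilon}\;\le\;\frac{\alpha(1-\alpha)\,a(s')}{\alpha a(s')+\varepsilon}\;\le\;1-\alpha\;\le\;1,
\]
after which summing against \(\pi(\cdot\mid s)\) and averaging over \(s\) gives the claim. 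Second, the constant \(\tfrac14\gamma^8 S^{-6}\) is asserted rather than derived; in a blind proof that is acceptable only if you explicitly track the powers of \(\gamma/S\) lost to (i) the lower bound \(\operatorname{logit}\ge\alpha a(s')\gtrsim\gamma^2/S^2\) inside the expansion remainder and (ii) the event on which concentration fails, whose contribution must be shown \(\ll \gamma^8 S^{-6}e^{-2a_2^*}\) — this is precisely where the threshold \(e^{a_2^*}\sim n^{1/12}\log^{-1/6}n\) enters, as you correctly anticipate.
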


This clearly indicates that the negative gradient remains positive, causing \(a_2\) to increase indefinitely under gradient descent. As a result, the attention mechanism progressively sharpens, effectively converging to a hard attention regime. Over time, this leads to convergence to the optimal \(1\)-gram estimator as the number of training iterations grows. For a thorough discussion of the time taken to converge, we refer readers to \citet{nichani2024how} (see Lemma D.10). Furthermore, given the equivalence between the models, once training is complete—assuming a sufficiently long sequence and adequate training duration—they are essentially identical. Therefore, we defer the proof of convergence of our model's loss to the optimal loss (Theorem~\ref{thm:grad}) to the corresponding argument in \citet{nichani2024how} (see D.7. Proof of Theorem 4.4). Finally, due to this model equivalence, and after the completion of both training stages, we again refer to \citet{nichani2024how} (see H.4. Proof of Theorem 4.5) for the proof of Theorem~\ref{app:thm:generalization}.

\newpage

\section{Two Heads in Layer One, One in Layer Two Perturbative Analysis}
\label{app:twoheadperturb}

Understanding the convergence behavior of a Transformer on Markov chains of arbitrary order in the prior setting (Appendix~\ref{app:singleheadperturb}) presents significant challenges. A central difficulty lies in the dependence of accurately estimating \( \bm{u}_n \) on the precise recovery of \( \bm{v}_n \). In earlier constructions, this estimation was enabled by complex non-linearities such as layer normalization, which makes direct gradient-based analysis intractable. To facilitate analysis, we focus on a simplified architecture: a two-layer model where the first layer includes two attention heads, as described in Section~\ref{sec:twolayertwohead}. Due to the intractability of analyzing the full model, we adopt a perturbative approach in which only a subset of parameters is trained, while the remainder of the network is fixed at optimal values. This aligns with our goal of exploring the behavior of a low-parameterized version of the model. Within this setup, we consider arbitrary-order Markov chains under the assumption that the first head is optimally initialized to recover \( \bm{u}_n \). Consistent with the simplification in Appendix~\ref{app:singleheadperturb}, we limit learning to the key positional vectors of the second head in the first layer, parameterized by a single scalar. All other network components remain fixed. The second layer of attention is not assumed to be learnable; instead, we begin with a softmax function at very low temperature, held constant during the initial training phase. This temperature is later increased to infinity. Even under these constraints, analyzing convergence of the positional encodings to their optimal values remains analytically intractable without strong assumptions about initialization, training procedures, and the data distribution. Nonetheless, we regard this work as a first step toward understanding convergence in higher-order Markov models within more realistic Transformer architectures.

\paragraph{Training Algorithm}
We introduce a training algorithm that updates only the positional scalars \(p_i\) for the key vectors of the second attention head in the first layer, for all \(i \in \{0, \ldots, n\}\). Here, \(D_{(t-1)}^{-1}\) denotes a data-dependent diagonal preconditioner.

\begin{algorithmbox}[label=app:alg:training_alg_kth_order]{(Training Algorithm).}
\begin{algorithmic}
\State \textbf{Input:} learning rates $\eta_1$,  steps $T_1$, 
\State Initialize $p_i^{(0)} = 0$ for all $i \in \{1, \dots, n\}$ and $p_0^{(0)} = -\infty$; 
\State Freeze all other parameters of the network
\State Set the softmax temperature of the second attention head to a very low value.

\State \textbf{Stage 1: Train } $p_i$ \textbf{ without norms added to the MLP}
\For{$t = 1, \dots, T_1$}
    \State \(\mathbf{p}^{(t)} \gets \mathbf{p}^{(t-1)} - \eta_1 \cdot \pth{D_{(t-1)}}^{-1} \nabla_{\mathbf{p}} \mathcal{L}(\theta^{(t-1)})\)
    \State $\theta^{(t)} \gets \left(\{p_i^{(t)}\}_{i=0}^n\right)$
\EndFor

\State $\hat{\theta} \gets \theta^{(T_1)}$
\State \textbf{Output:} $\hat{\theta}$
\State   Add norms back to the MLP
\State Set the Set the softmax temperature of the second attention head to \(\infty\)
\end{algorithmic}
\end{algorithmbox}

\paragraph{Data Assumptions} We introduce the following data assumptions on the prior over Markov transition kernels, first for second-order Markov chains and subsequently for higher-order Markov chains.

\begin{assumptionbox}[label=app:assumption:secondorder]{(Assumption on \(\mathbb{P}_\pi\) for Second-Order Markov Chains): } We consider a prior distribution \(\mathbb{P}_\pi\) over transition kernels of \textbf{second-order} Markov chains defined over a binary state space \(\mathcal{S} = \{0, 1\}\). Each transition kernel \(\pi \sim \mathbb{P}_\pi\) specifies conditional probabilities \(\pi(s' \mid s_1, s_2)\), and satisfies the following assumptions along with stationarity:
\begin{enumerate}
    \item \textbf{Time Reversibility:} There exists a stationary distribution \(\mu(s_1, s_2)\) such that \(\mu(s_1, s_2)\, \pi(s' \mid s_1, s_2) = \mu(s_2, s')\, \pi(s_1 \mid s_2, s')\).
    
    \item \textbf{Transition Preference:} \(\pi(0 \mid 0, 0) > \pi(0 \mid 1, 0)\) and \(\pi(1 \mid 1, 1) > \pi(1 \mid 0, 1)\).
    
    \item \textbf{Second-Hop Likelihood:} \(\sum_{s \in \{0, 1\}} \pi^2(s \mid s, s) \geq 1\).
\end{enumerate}
\end{assumptionbox}

\begin{assumptionbox}[label=app:assumption:kthorder]{(Assumption on \(\mathbb{P}_\pi\) for \(k\textsuperscript{th}\)-Order Reversible Markov Chains):}
We consider a prior distribution \(\mathbb{P}_\pi\) over transition kernels for \(k\)th-order Markov chains on a finite state space \(S\), with lifted space \(\mathcal{A} = S^k\), such that each transition kernel \(\pi \sim \mathbb{P}_\pi\) satisfies:
\begin{enumerate}
    \item \textbf{Irreducibility and Aperiodicity:} The chain \((X_t)\) is irreducible and aperiodic;
    \item \textbf{Reversibility:} The chain is reversible with respect to a stationary distribution \(\pi\);
    \item \textbf{Lifted Representation:} The chain admits a lifted first-order representation with transition kernel \(P(a \to b) = \Pr((X_t,\dots,X_{t-k+1}) = b \mid (X_{t-1},\dots,X_{t-k}) = a)\);
    \item \textbf{Spectral Assumptions:} Under detailed balance with stationary distribution \(\pi\), the transition matrix \(P\) is self-adjoint on \(\ell^2(\mathcal{A}, \pi)\), with eigenvalues \(1 = \lambda_1 > \lambda_2 \ge \cdots \ge \lambda_N > 0\) and orthonormal eigenfunctions \(\{\phi_m\}\);
    \item \textbf{Non-degenerate Projection:} The eigenfunction projections \(\beta_m\), defined by
    \[
    \beta_m = \sum_{a, b \in \mathcal{A}} \phi_m(a)\, \phi_m(b)\, \pi(b)\, f(a),
    \]
    satisfy \(\beta_m > 0\) for all \(m\), where \(f: \mathcal{A} \to \mathbb{R}\) is an observable function.
\end{enumerate}
\end{assumptionbox}

Under the assumptions outlined above, we state the following theorems.

\begin{theorembox}{(Convergence of the Training Algorithm - Second Order): }
For all second-order Markov chains satisfying Assumption~\ref{app:assumption:secondorder}, there exist, a learning rate \(\eta_1\), and a step count \(T_1\), such that the output of the training algorithm (Alg. \ref{app:alg:training_alg_kth_order}), \(\hat{\theta} = \{ \hat{p}_i \}_{i=0}^n\), satisfies the following as \(T_1 \to \infty\):
\begin{align*}
\frac{\exp(p_1)}{\sum_{i=0}^{n} \exp(p_i)} = \frac{\exp(p_2)}{\sum_{i=0}^{n} \exp(p_i)} &\approx \frac{1}{2}.
\end{align*}
Moreover, as the softmax temperature is increased to infinity, the model reduces to a conditional \(2\)-gram estimator.
\end{theorembox}

\begin{theorembox}{(Convergence of the Training Algorithm - \(k\textsuperscript{th}\)-order): }
For all \(k\textsuperscript{th}\)-order Markov chains satisfying Assumption~\ref{app:assumption:kthorder}, there exist a learning rate \(\eta_1\), and a step count \(T_1\), such that the output of the training algorithm (Alg. \ref{app:alg:training_alg_kth_order}), \(\hat{\theta} = \{ \hat{p}_i \}_{i=0}^n\), satisfies the following as \(T_1 \to \infty\):
\begin{align*}
\frac{\exp(p_1)}{\sum_{i=0}^{n} \exp(p_i)} = \frac{\exp(p_2)}{\sum_{i=0}^{n} \exp(p_i)} = \cdots  \frac{\exp(p_k)}{\sum_{i=0}^{n} \exp(p_i)} &\approx \frac{1}{k}.
\end{align*}
Moreover, as the softmax temperature is increased to infinity, the model reduces to a conditional \(k\)-gram estimator.
\end{theorembox}

We begin by describing the model simplification and subsequently present the convergence analysis.

\subsection{Construction}

\label{app:constructiontwoheadsimplification}

Note that we assume one of the heads (\(\bm{u}_n\) to be optimal, we only need to focus on the other head. Since, we are interested in understanding the perturbative analysis of the second head with regards to the first, we define the layers as follows:

\begin{align*}
    \bm{x}^{(1)}_n = \texttt{Emb} (x_n) = \begin{bmatrix}
        \bm{0}_{1 \times 3} & e_{x_n}^S & \bm{0}_{1 \times 5S}
    \end{bmatrix}^T \in \mathbb{R}^{6S + 3}
\end{align*}

In the first layer, for both the heads we use the following relative value embeddings

\begin{align*}
    \bm{p}_i^{(\texttt{head:1}),V} =  \bm{p}_i^{(\texttt{head:2}),V} = \begin{cases}
        \bm{0} \cdot \begin{bmatrix}
    1 & \bm{0}
\end{bmatrix}^T \quad &\text{for } i \le k \\
    \bm{0} & i > k.
    \end{cases}
\end{align*}

\paragraph{Layer 1, Head 1.}  We assume that the first head is optimal and hence, have an architecture that outputs the optimal attention map. Specifically for a \(k\textsuperscript{th}\)-order markov chain, we get the following attention score:

\begin{align*}
    \bm{u}_n = \frac{1}{k}\pth{e_{x_n} + e_{x_{n-1}} +  \ldots + e_{x_{n-k+1}}}
\end{align*}

Hence, the output of the attention head is:

\begin{align*}
    \label{eq:firstlayeroutput}
    \widetilde{\bm{x}}_n^{(\texttt{head:1})} &= \left[ \begin{array}{c|c|c|c|c|c
    }
        0 & \bm{0}_{1 \times 2} & \bm{0}_{1 \times S}  & \bm{u}_n & \bm{0}_{1 \times 2S}  & \bm{0}_{1 \times 2S}
    \end{array} \right]^T, \\
\end{align*}

\paragraph{Layer 1, Head 2.}  We will use the same relative position embeddings, in particular,

\begin{align*}
    \bm{p}_{i}^{(\texttt{head:2}),K} = \begin{cases}
    {p_i} \cdot \begin{bmatrix} 0 & 1 & \bm{0}_{1 \times (1 + 6S) } \end{bmatrix}^T, & \text{if } i \in \{ {0}, 1,2,\cdots, {n }\}, \\
    \end{cases},
\end{align*}

We paramatrize the query and key matrices such that:

\begin{align*} \nonumber
    \left\langle \bm{W}_K^{(
\texttt{head:2})} \big( \texttt{Emb} (x_{n-i}) + \bm{p}_i^{(\texttt{head:2}),K} \big), \bm{W}_Q^{(\texttt{head:2})} \texttt{Emb} (x_n) \right\rangle =  p_i 
\end{align*}

Hence, on passing it through the softmax function:

\begin{align*} 
    \operatorname{att}^{(1)}_{n,n-i} = \frac{\exp(p_i)}{\sum_{i=0}^{n} \exp(p_i)}
\end{align*}

Hence, once we keep the value matrix fixed (not learnable):

\begin{align*}
    \bm{W}_V^{(1)} = \begin{bmatrix}
        I_{3 \times 3} & \bm{0}_{3 \times S} & \bm{0}_{3\times 4S} \\
        \bm{0}_{3S \times 3} & \bm{0}_{S \times S} & \bm{0}_{3S \times 4S} \\
        \bm{0}_{S \times 3} & I_{S \times S} & \bm{0}_{S \times 4S} \\
        \bm{0}_{2S \times 3} & \bm{0}_{S \times S} & \bm{0}_{2S \times 4S} \\
    \end{bmatrix} 
\end{align*}

The output of the second attention head is,

\begin{align*}
    \widetilde{\bm{x}}_n^{(\texttt{head:2})} &= \left[ \begin{array}{c|c|c|c|c|c
    }
        \bm{0} & \bm{0}_{1 \times 2} & \bm{0}_{1 \times S}  & \bm{0}_{1 \times 2S} & \bm{v}_n & \bm{0}_{1 \times 2S}
    \end{array} \right]^T, \\
    \text{ where, } \bm{v}_n &= \sum_{i=1}^{\min \{n,k\}} \operatorname{att}_{n,n-i} e^S_{x_{n-i}} \\
\end{align*}

Therefore, with skip connections:

\begin{align*}
    \widetilde{\bm{x}}_n^{(1)} &= \bm{x}_n^{(1)} + \widetilde{\bm{x}}_n^{(\texttt{head:1})} + \widetilde{\bm{x}}_n^{(\texttt{head:2})} \\
    &= \left[ \begin{array}{c|c|c|c|c|c|c
    }
        0 & \bm{0}_{1 \times 2} & e_{x_n}^S  &  \bm{u}_n & \bm{0}_{1 \times S} & \bm{v}_n & \bm{0}_{1 \times 2S}
    \end{array} \right]^T
\end{align*}

\paragraph{MLP.} The layers of the MLP are defined as follows: \\

The first layer for the MLP is :

\begin{align*}
    \bm{W}_{\texttt{mlp}}^{(1)} &= \begin{bmatrix}
        \bm{0}_{(3+ S) \times (3 + 3S)}  &  \bm{0}_{(3+ S) \times (S)} &  \bm{0}_{(3+ S) \times (2S)}\\ 
        \bm{0}_{S \times (3+3S)} & \bm{0}_{S \times S} & \bm{0} _{S \times 2S} \\
        \bm{0}_{S \times (3+3S)} & I_{S \times S} & \bm{0} _{S \times 2S} \\
        \bm{0}_{3S \times (3+3S)} & \bm{0}_{3S \times S} & \bm{0} _{3S \times 2S}
    \end{bmatrix} \\
    \bm{b}_{\texttt{mlp}}^{(1)} &= \bm{0}_{(3+6S) \times 1}
\end{align*}

The second layer for the MLP is :

\begin{align*}
    \bm{W}_{\texttt{mlp}}^{(2)} &= \begin{bmatrix}
        \bm{0}_{(3+ S) \times (3 + 3S)}  &  \bm{0}_{(3+ S) \times (S)} &  \bm{0}_{(3+ S) \times (2S)}\\ 
        \bm{0}_{S \times (3+3S)} & \bm{0}_{S \times S} & \bm{0} _{S \times 2S} \\
        \bm{0}_{S \times (3+3S)} & \bm{0}_{S \times S} & \bm{0} _{S \times 2S} \\
        \bm{0}_{S \times (3+3S)} & I_{S \times S} & \bm{0} _{S \times 2S} \\
        \bm{0}_{2S \times (3+3S)} & \bm{0}_{2S \times S} & \bm{0} _{2S \times 2S}
    \end{bmatrix} \\
    \bm{b}_{\texttt{mlp}}^{(2)} &= \bm{0}_{(3+6S) \times 1}
\end{align*}

Hence, passing this through the normalization along with the skip connections, we obtain: 

\begin{align*}
    \widetilde{\bm{x}}_{n, \texttt{mlp}}^{(2)} = \left[ \begin{array}{c|c|c|c|c|c|c|c}
        0 & \bm{0}_{1 \times 2} & e_{x_n}^S & \bm{u}_n & \frac{\bm{u}_n}{\| \bm{u}_n \|_2} & \bm{v}_n & \frac{\bm{v}_n}{\| \bm{v}_n \|_2} &\bm{0}_{1 \times S}
    \end{array} \right]^T
\end{align*}

\paragraph{Layer 2.} In this layer, all the relative position encodings are set as $\bm{0}$ and instead, and paramaterize \(W_Q^{(2)}, W_K^{(2)}\), such that

\begin{align*}
    \left\langle (\bm{W}_K^{(2)} \widetilde{\bm{x}}_{i, \texttt{mlp}}^{(2)}) , (\bm{W}_Q^{(2)} \widetilde{\bm{x}}_{n, \texttt{mlp}}^{(2)}) \right\rangle &= \frac{2  \langle \bm{v}_i, \bm{u}_n \rangle}{\| \bm{v}_i \|_2 \cdot \| \bm{u}_n \|_2} \nonumber \\
    &= \pth{2 - \left\| \frac{\bm{v}_i}{\| \bm{v}_i \|_2} - \frac{\bm{u}_n}{\| \bm{u}_n \|_2} \right\|^2}
\end{align*}

Hence, the attention is computed as follows, using a softmax with temperature parameter \(\kappa\):

\begin{align*}
    \operatorname{att}_{n,i}^{(2)} = \frac{\exp \pth{ \frac{\kappa \cdot 2  \langle \bm{v}_i, \bm{u}_n \rangle}{\| \bm{v}_i \|_2 \cdot \| \bm{u}_n \|_2}}}{\sum_{i=0}^{n}  \exp(\kappa \cdot \frac{2  \langle \bm{v}_i, \bm{u}_n \rangle}{\| \bm{v}_i \|_2 \cdot \| \bm{u}_n \|_2}) \nonumber }
\end{align*}

Hence, we obtain,

\begin{align*}
    \operatorname{logit}_n &= \sum_{i=0}^n \operatorname{att}_{n,i}^{(2)} \cdot e_{x_i}
\end{align*}

\subsection{Convergence Analysis}

We can effectively reduce the model described in App, \ref{app:constructiontwoheadsimplification} to a simplified form, analogous to that in App. \ref{app:simplifiedmodelsinglehead}. Specifically, in this case, we obtain:

Let \(Z_n \in \mathbb{R}^{n+1}\), where
\[
Z_i = \left\langle \frac{1}{k} \left(e_{x_n} + e_{x_{n-1}} + \cdots + e_{x_{n-k}} \right), v_i \right\rangle.
\]

Moreover, define:
\begin{align*}
    X_n = \begin{bmatrix}
        e_{x_0} & e_{x_1} & e_{x_2} & \cdots & e_{x_n}
    \end{bmatrix}
\end{align*}

Then,
\begin{align*}
    \operatorname{logit}_{s'_{s_{k}}, n} = e_{s'}^T \left[ X_n \operatorname{softmax}\left(\kappa Z_n \right) \right]
\end{align*}

Using a slightly different notation, we can express the same as:
\begin{align*}
    A^{(1)} &= \begin{bmatrix}
        p_0 & -\infty & -\infty & -\infty & -\infty\\
        p_1 & p_0 & -\infty & -\infty & -\infty\\
        \vdots & \vdots & \vdots & \vdots & \vdots \\
        p_n & p_{n-1} & p_{n-2} & \cdots & p_0
    \end{bmatrix} \\
    A^{(2)} &= \kappa \cdot I \\
    \operatorname{logit}_{s'_{s_{k}}, n} &=  e_{s'}^T \left[ X_n \operatorname{softmax}\left(\kappa Z_n \right) \right] \\
    &=  e_{s'}^T X_n \underbrace{\operatorname{softmax} \left( \kappa \cdot \operatorname{softmax}(A^{(1)}) X_n^T \cdot \frac{1}{k} \left( \sum_{i=0}^{k-1} e_{x_{n-i}} \right) \right)}_{\mathcal{A}_{\theta}^{(2)}(X_n; k)}
\end{align*}

Clearly, the above expression closely mirrors the reduced model presented in \citet{nichani2024how}. Since we have picked the softmax parameter \(\kappa\) to be very small, we can apply a first-order Taylor approximation to simplify the expression. Let \(\mathbf{1}_{n} = \left[1, 1, \cdots, 1\right] \in \mathbb{R}^{n+1}\). We also assume that \(x_n = s_1\), \(x_{n-1} = s_2\), \(\ldots\), \(x_{n-k} = s_k\).

\begin{align*}
     \operatorname{logit}_{s'_{s_{k}}, n} &\approx e_{s'}^T \cdot \left[ X_n \cdot \left( \frac{\mathbf{1}_n}{n} + \kappa \cdot \left( \frac{I_n}{n} - \frac{1}{n^2} \cdot \mathbf{1}_n \mathbf{1}_n^T \right) \cdot Z_n \right) \right] \\
    &\approx \frac{e_{s'}^T X_n \mathbf{1}_n}{n} + \kappa \cdot \left( \frac{e_{s'}^T X_n Z_n}{n} - \frac{e_{s'}^T X_n \mathbf{1}_n}{n} \cdot \frac{\mathbf{1}_n^T Z_n}{n} \right) \\
    &\approx \frac{e_{s'}^T X_n \mathbf{1}_n}{n} + \frac{\kappa}{n} \cdot \left( e_{s'}^T X_n Z_n - \frac{e_{s'}^T X_n \mathbf{1}_n}{n} \cdot \mathbf{1}_n^T Z_n \right)
\end{align*}

Now, compute each term separately:

\begin{align*}
    \frac{e_{s'}^T X_n \mathbf{1}_n}{n} &= \hat{\mu}_{\pi, n}(s') \\
    \mathbf{1}_n^T Z_n &= \sum_{i=0}^{n} \left\langle \frac{1}{k} \sum_{l=0}^{k-1} e_{x_{n-l}}, v_i \right\rangle \\
    &= \sum_{i=0}^{n} \left\langle \frac{1}{k} \sum_{l=0}^{k-1} e_{x_{n-l}}, \sum_{j=0}^{i} \frac{\exp(p_j )}{ \sum_{j=0}^{i} \exp(p_j )} e_{x_{i-j}} \right\rangle \\
    &= \sum_{i=0}^{n} \sum_{j=0}^{i} \frac{\exp(p_j )}{ \sum_{j=0}^{i} \exp(p_j )} \left\langle \frac{1}{k} \sum_{l=0}^{k-1} e_{x_{n-l}}, e_{x_{i-j}} \right\rangle \\
    e_{s'}^T X_n Z_n &= \sum_{i=0}^{n} \sum_{j=0}^{i} \frac{\exp(p_j )}{ \sum_{j=0}^{i} \exp(p_j )} \left\langle \frac{1}{k} \sum_{l=0}^{k-1} e_{x_{n-l}}, e_{x_{i-j}} \right\rangle \left\langle e_{x_i}, e_{s'} \right\rangle
\end{align*}

Hence, substituting the expressions above,

\begin{align*}
    \operatorname{logit}_{s'_{s_{k}}, n} 
    &\approx \hat{\mu}_{\pi, n}(s') 
    + \frac{\kappa}{kn} \cdot \sum_{l=0}^{k-1} \bigg[ \sum_{i=0}^{n} \sum_{j=0}^{i} 
    \frac{\exp(p_j )}{\sum_{j=0}^{i} \exp(p_j )} \cdot \\
    &\quad \left( \inner{e_{x_{n-l}}}{e_{x_{i-j}}} \inner{e_{x_i}}{e_{s'}} 
    - \inner{e_{x_{n-l}}}{e_{x_{i-j}}} \hat{\mu}_{\pi, n}(s') \right) \bigg] \\
    &\approx \hat{\mu}_{\pi, n}(s') 
    + \frac{\kappa}{kn} \cdot \sum_{l=1}^{k} \bigg[ \sum_{i=0}^{n} \sum_{j=0}^{i} 
    \frac{\exp(p_j)}{\sum_{j=0}^{i} \exp(p_j)} \cdot \\
    &\quad \left( \mathbbm{1}_{x_{i-j}=s_l} \cdot \mathbbm{1}_{x_i = s'} 
    - \mathbbm{1}_{x_{i-j}=s_l} \cdot \hat{\mu}_{\pi, n}(s') \right) \bigg]
\end{align*}

Hence, by computing the derivatives with respect to \(p_m\), while keeping all other variables constant, we obtain:

\begin{align*}
    \odv{L(\theta)}{p_m}  
    &= - \frac{1}{|\mathcal{S}|^k} \mathbb{E}_{\pi \sim \mathbb{P}(\pi), \, s_n \sim \pi} \left[ \sum_{s', s_1, \ldots, s_k} \pi(s_{n+1} = s' \mid s_1, \ldots, s_k) \cdot \odv{\log \left( \operatorname{logit}_{s'_s, n} \right)}{p_m} \right]
\end{align*}

Note that for sufficiently long sequences, \(\hat{\mu}_{\pi, n}(s') \approx \mu_\pi(s')\). Thus:

\begin{align*}
    \odv{\log \left( \operatorname{logit}_{s'_s, n} \right)}{p_m} 
    &= \frac{\kappa}{|\mathcal{S}|^k \cdot kn \cdot \mu_{\pi}(s')} \cdot \sum_{l=1}^{k} \sum_{i = m}^{n} 
    \frac{\exp(p_m)}{\sum_{j=0}^{i} \exp(p_j)} \cdot \left[ 
    q_{i, m, l} - \sum_{j=0}^{i} \frac{\exp(p_j )}{\sum_{j=0}^{i} \exp(p_j )} q_{i, j, l} 
    \right]
\end{align*}

where
\begin{align*}
    q_{i, j, l} &= \mathbbm{1}_{x_{i-j} = s_l} \cdot \mathbbm{1}_{x_i = s'} 
    - \mathbbm{1}_{x_{i-j} = s_l} \cdot {\mu}_{\pi}(s')
\end{align*}

Using the tower property of expectation, 

\begin{align}
    \odv{L(\theta)}{p_m} &= 
    - \frac{\kappa}{|\mathcal{S}|^k \cdot kn } 
    \cdot \mathbb{E}_{\pi \sim \mathbb{P}(\pi)}\Bigg[
        \sum_{i \geq m}^{n} \sum_{l=1}^{k} \sum_{s', s_1, \ldots, s_k} 
        \frac{ \pi(s'|s_1, \ldots, s_k)}{\mu_{\pi}(s')} 
        \cdot \frac{\exp(p_m)}{\sum_{j=0}^{i} \exp(p_j)} \notag \\
    &\quad \cdot 
    \left( 
        \mathbb{P}(x_i = s', x_{i - m} = s_l) 
        - \mathbb{P}(x_{i - m} = s_l)\, \mu_{\pi}(s') 
    \right)
    \Bigg] \notag \\
    &\quad + \frac{\kappa}{|\mathcal{S}|^k \cdot kn} 
    \cdot \mathbb{E}_{\pi \sim \mathbb{P}(\pi)}\Bigg[
        \sum_{i \geq m}^{n} \sum_{l=1}^{k} \sum_{s', s_1, \ldots, s_k} \sum_{j=0}^i 
        \frac{ \pi(s'|s_1, \ldots, s_k)}{\mu_{\pi}(s')} 
        \cdot \frac{ \exp(p_j)\, \exp(p_m)}{\left(\sum_{j=0}^{i} \exp(p_j)\right)^2} \notag \\
    &\quad \cdot 
    \left( 
        \mathbb{P}(x_i = s', x_{i - j} = s_l) 
        - \mathbb{P}(x_{i - j} = s_l)\, \mu_{\pi}(s') 
    \right)
    \Bigg]
\end{align}

Hence, instead of looking at this in totality, for a particular row (i.e. for a particular \(i\), we obtain:)

\begin{align}
    \odv{L(\theta)}{p_m} \bigg|_{i} &= 
    - \frac{\kappa}{|\mathcal{S}|^k \cdot kn } 
    \cdot \mathbb{E}_{\pi \sim \mathbb{P}(\pi)}\Bigg[
        \sum_{l=1}^{k} \sum_{s', s_1, \ldots, s_k} 
        \frac{ \pi(s'|s_1, \ldots, s_k)}{\mu_{\pi}(s')} 
        \cdot \frac{\exp(p_m)}{\sum_{j=0}^{i} \exp(p_j)} \notag \\
    &\quad \cdot 
    \left( 
        \mathbb{P}(x_i = s', x_{i - m} = s_l) 
        - \mathbb{P}(x_{i - m} = s_l)\, \mu_{\pi}(s') 
    \right)
    \Bigg] \notag \\
    &\quad + \frac{\kappa}{|\mathcal{S}|^k \cdot kn} 
    \cdot \mathbb{E}_{\pi \sim \mathbb{P}(\pi)}\Bigg[
         \sum_{l=1}^{k} \sum_{s', s_1, \ldots, s_k} \sum_{j=0}^i 
        \frac{ \pi(s'|s_1, \ldots, s_k)}{\mu_{\pi}(s')} 
        \cdot \frac{ \exp(p_j)\, \exp(p_m)}{\left(\sum_{j=0}^{i} \exp(p_j)\right)^2} \notag \\
    &\quad \cdot 
    \left( 
        \mathbb{P}(x_i = s', x_{i - j} = s_l) 
        - \mathbb{P}(x_{i - j} = s_l)\, \mu_{\pi}(s') 
    \right)
    \Bigg]
\end{align}

We define a new quantity,

\begin{align*}
    g_{i, j, l} &= \sum_{s', s_1, \cdots, s_k} \frac{\pi(s'|s_1, \cdots, s_k)}{\mu_\pi(s')} \pth{\mathbb{P}(x_{i}=s',x_{i-j}=s_l) - \mathbb{P}(x_{i-j}=s_l) \mu_\pi(s')} \\
    &= \sum_{s', s_1, \cdots, s_k} \frac{\pi(s'|s_1, \cdots, s_k)}{\mu_\pi(s')} \pth{\mathbb{P}(x_{i}=s',x_{i-j}=s_l) - \mu_\pi(s_l) \mu_\pi(s')} \quad \text{(from stationarity)} \\ 
    &= \sum_{s', s_1, \cdots, s_k} \frac{\pi(s'|s_1, \cdots, s_k)}{\mu_\pi(s')} \mathbb{P}(x_{i}=s',x_{i-j}=s_l) - \sum_{s', s_1, \cdots, s_k} \frac{\pi(s'|s_1, \cdots, s_k)}{\mu_\pi(s')}\mu_\pi(s_l) \mu_\pi(s') \\
    & = \sum_{s', s_1, \cdots, s_k} \frac{\pi(s'|s_1, \cdots, s_k)}{\mu_\pi(s')} \mathbb{P}(x_{i}=s',x_{i-j}=s_l) - |\mathcal{S}|^{k-1} \quad \text{(by marginalization)}
\end{align*}

Moroever, we also define:

\begin{align*}
    g_{i, j} = \sum_{l=1}^{k} g_{i, j, l}
\end{align*}

Hence, on substituting this back to the original equation:

\begin{align*}
    \odv{L(\theta)}{p_m} \bigg|_{i} &= 
    - \frac{\kappa}{|\mathcal{S}|^k \cdot kn} 
    \cdot \mathbb{E}_{\pi \sim \mathbb{P}(\pi)} \qth{ \pth{ \frac{\exp(p_m)}{\sum_{j=0}^{i} \exp(p_j)} g_{i, m,} - \sum_{j=0}^{i} \frac{ \exp(p_j)\, \exp(p_m)}{\left(\sum_{j=0}^{i} \exp(p_j)\right)^2} g_{i, j}}}
\end{align*}

\subsubsection{Warmup: Second Order Markov Chain}
\label{app:secondorderconvergence}

We begin by presenting the dynamics for a second-order Markov chain as a warm-up. Since the vocabulary is binary, we have \(|\mathcal{S}| = 2\). Therefore,

\begin{align*}
    g_{i, j} &= \sum_{l=1}^{k} g_{i, j, l} \\
    &= \sum_{s', s_1, s_2} \left( 
        \frac{\pi(s' \mid s_1, s_2)}{\mu_\pi(s')} \mathbb{P}(x_i = s', x_{i-j} = s_1) 
        + \frac{\pi(s' \mid s_1, s_2)}{\mu_\pi(s')} \mathbb{P}(x_i = s', x_{i-j} = s_2)
    \right) - 4
\end{align*}

Enumerating all possible values of \(s_1, s_2 \in \{0, 1\}\), we get:

\begin{align*}
    g_{i, j} &= \sum_{s'} \left(
        \frac{\pi(s' \mid 0, 0)}{\mu_\pi(s')} \mathbb{P}(x_i = s', x_{i-j} = 0) 
        + \frac{\pi(s' \mid 0, 0)}{\mu_\pi(s')} \mathbb{P}(x_i = s', x_{i-j} = 0)
    \right) \\
    &\quad + \sum_{s'} \left(
        \frac{\pi(s' \mid 1, 0)}{\mu_\pi(s')} \mathbb{P}(x_i = s', x_{i-j} = 1) 
        + \frac{\pi(s' \mid 1, 0)}{\mu_\pi(s')} \mathbb{P}(x_i = s', x_{i-j} = 0)
    \right) \\
    &\quad + \sum_{s'} \left(
        \frac{\pi(s' \mid 0, 1)}{\mu_\pi(s')} \mathbb{P}(x_i = s', x_{i-j} = 0) 
        + \frac{\pi(s' \mid 0, 1)}{\mu_\pi(s')} \mathbb{P}(x_i = s', x_{i-j} = 0)
    \right) \\
    &\quad + \sum_{s'} \left(
        \frac{\pi(s' \mid 1, 1)}{\mu_\pi(s')} \mathbb{P}(x_i = s', x_{i-j} = 1) 
        + \frac{\pi(s' \mid 1, 1)}{\mu_\pi(s')} \mathbb{P}(x_i = s', x_{i-j} = 1)
    \right) - 4
\end{align*}

Using marginalization, we can simplify terms such as:

\begin{align*}
    &\sum_{s'} \left(
        \frac{\pi(s' \mid 1, 0)}{\mu_\pi(s')} \, \mathbb{P}(x_i = s', x_{i-j} = 1) 
        + \frac{\pi(s' \mid 1, 0)}{\mu_\pi(s')} \, \mathbb{P}(x_i = s', x_{i-j} = 0)
    \right) \\
    &= \sum_{s'} \left(
        \pi(s' \mid 1, 0) \cdot \frac{
            \mathbb{P}(x_i = s', x_{i-j} = 1) + \mathbb{P}(x_i = s', x_{i-j} = 0)
        }{
            \mu_\pi(s')
        }
    \right) \\
    &= \sum_{s'} \left(
        \pi(s' \mid 1, 0) \cdot \frac{\mu_\pi(s')}{\mu_\pi(s')}
    \right) \qquad \text{(by marginalization)} \\
    &= \sum_{s'} \pi(s' \mid 1, 0) \\
    &= 1
\end{align*}

Similarly, we obtain:

\begin{align*}
    \sum_{s'} \left(
        \frac{\pi(s' \mid 0, 1)}{\mu_\pi(s')} \, \mathbb{P}(x_i = s', x_{i-j} = 0)
        + \frac{\pi(s' \mid 0, 1)}{\mu_\pi(s')} \, \mathbb{P}(x_i = s', x_{i-j} = 1)
    \right) = 1
\end{align*}

Therefore, substituting these identities back into the expression for \(g_{i,j}\), we obtain:

\begin{align*}
    g_{i, j} &= 2 \cdot \left(
        \sum_{s'} \frac{\pi(s' \mid 0, 0)}{\mu_\pi(s')} \, \mathbb{P}(x_i = s', x_{i-j} = 0) 
        + \sum_{s'} \frac{\pi(s' \mid 1, 1)}{\mu_\pi(s')} \, \mathbb{P}(x_i = s', x_{i-j} = 1) 
        - 1
    \right)
\end{align*}

In an ideal second-order Markov chain scenario, we expect the positional scalars \(p_1\) and \(p_2\) to approach infinity, while the remaining positional weights remain relatively small. This reflects ideal attention behavior: the model should focus primarily on the two preceding tokens, consistent with the structure needed for an accurate \(k\)-gram (specifically, bigram) estimator. Hence, the requirement is essentially:

\begin{align*}
    \left. \odv{L(\theta)}{p_1} \right|_i 
    = \left. \odv{L(\theta)}{p_2} \right|_i 
    > \left. \odv{L(\theta)}{p_k} \right|_i 
    \quad \forall \, k \in \{0, 3, \ldots, n\}
\end{align*}

We denote \( d_{g_{m, i}} = \left. \odv{L(\theta)}{p_m} \right|_i \).  
With this notation, we can express:

\begin{align*}
    d_{g_{m, i}} &= - \frac{a_{2, 0}}{|\mathcal{S}|^k \cdot kn} 
    \cdot \mathbb{E}_{\pi \sim \mathbb{P}(\pi)} \left[ 
        \frac{\exp(p_m)}{\sum_{j=0}^{i} \exp(p_j)} \cdot 
        \left( g_{i, m} - \sum_{j=0}^{i} \frac{ \exp(p_j)}{\sum_{j=0}^{i} \exp(p_j)} \cdot g_{i, j} \right)
    \right]
\end{align*}

According to assumptions (Assump. \ref{app:assumption:secondorder}), the chain satisfies detailed balance.  
Additionally, by Lemma. \ref{lemma:secondordersuccesive}, we know that the values \( g_{i, j} \) decrease monotonically:

\[
g_{i, 0} > g_{i, 1} > g_{i, 2} > \cdots > g_{i, n}
\]

Using this monotonicity, we can bound the weighted average:

\begin{align*}
    \sum_{j=0}^{i} \frac{ \exp(p_j)}{\sum_{j=0}^{i} \exp(p_j)} \cdot g_{i, j}
    &\leq \sum_{j=0}^{i} \frac{ \exp(p_j)}{\sum_{j=0}^{i} \exp(p_j)} \cdot \sup_j g_{i, j} \\
    &\leq \sup_j g_{i, j} = g_{i, 0} \quad \text{(by monotonicity)}
\end{align*}

Now, suppose we initialize all positional encoding scalars to zero except for \(p_0 = -\infty\). Then, at initialization time \(t = 0\), we have:

\begin{align*}
    d_{g_{m, i}, t=0} 
    &= - \frac{a_{2, 0}}{|\mathcal{S}|^k \cdot kn} \cdot \frac{1}{i - 1} 
    \left( \mathbb{E}_{\pi \sim \mathbb{P}(\pi)} [g_{i, m}] 
    - \sum_{j=1}^{i} \frac{1}{i - 1} \cdot \mathbb{E}_{\pi \sim \mathbb{P}(\pi)} [g_{i, j}] \right),
    \quad \forall\, i \geq 1 \\
    d_{g_{m, 0}, t=0} &= 0 
\end{align*}

Since we assume detailed balance (Assump. \ref{app:assumption:secondorder}) we can now deduce:

\begin{align*}
    g_{i, j} 
    &= 2 \cdot \left( 
        \sum_{s'} \frac{\pi(s' \mid 0, 0)}{\mu_\pi(s')} \, \mathbb{P}(x_i = s', x_{i-j} = 0) 
        + \sum_{s'} \frac{\pi(s' \mid 1, 1)}{\mu_\pi(s')} \, \mathbb{P}(x_i = s', x_{i-j} = 1) 
        - 1 
    \right) \\
    &= 2 \cdot \left( 
        \sum_{s'} \mathbb{P}(x_{i-j} = 0 \mid x_i = s') \cdot \pi(s' \mid 0, 0) 
        + \sum_{s'} \mathbb{P}(x_{i-j} = 1 \mid x_i = s') \cdot \pi(s' \mid 1, 1) 
        - 1 
    \right) \\
    &= 2 \cdot \left( 
        \sum_{s} P^{|i-j|}(s \mid s, s) - 1 
    \right) 
    \quad \text{(by time-homogeneity and detailed balance)}
\end{align*}

Under the assumptions on the Markov chain (Assump. \ref{app:assumption:secondorder}), we observe that at initialization time \(t = 0\) (when all positional encoding scalars are initialized to zero, except for \(p_0 = -\infty\)), the monotonicity property \(g_{i, 1} > g_{i, 2} > \cdots > g_{i, n}\) holds. 

Moreover, by assumption (Assump. \ref{app:assumption:secondorder}), we know \(g_{i, 1} > 1\), which implies:

\[
-d_{g_{1, i}, 0} > -d_{g_{2, i}, 0} > \cdots > -d_{g_{k, i}, 0} \quad \text{for all } k \geq 0
\]

This contradicts the ideal case described earlier, where we would prefer:

\begin{equation}
\label{eq:ideal2ndordermarkov}
\begin{aligned}
    -d_{g_{1, i}, 0} &= -d_{g_{2, i}, 0} > -d_{g_{k, i}, 0}, \\
    &\forall\, k \in \{0, 3, \ldots, n\}, \quad \forall\, i \in \{k, \ldots, n\}, \\
    \text{and} \quad -d_{g_{1, i}, 0} &\geq 0
\end{aligned}
\end{equation}

Towards this, we first define a row-summed quantity:

\begin{align*}
    d_{g_m, t} = \sum_{i \geq m}^{n} d_{g_{m, i}, t}
\end{align*}

To satisfy the condition in the ideal case (Eq. \ref{eq:ideal2ndordermarkov}), we now introduce a preconditioner-based optimizer:
\begin{align*}
    \mathbf{p}_{t+1} = \mathbf{p}_t - \eta_t \, \mathbf{D}_t^{-1} \nabla_{\mathbf{p}_t} L(\theta)
\end{align*}

where

\begin{align*}
    \mathbf{p}_t &= \begin{bmatrix}
        p_{0, t} & p_{1, t} & \cdots & p_{n, t}
    \end{bmatrix}^\top \in \mathbb{R}^{n+1} \\
    \mathbf{D}_t^{-1} &= \operatorname{diag} \left(0, 1, \frac{d_{g_1, t}}{d_{g_2, t}}, 1, 1, \ldots, 1 \right) \in \mathbb{R}^{(n+1) \times (n+1)}
\end{align*}

Here, \(p_{i, t}\) denotes the \(i\)th positional encoding scalar at iteration \(t\). Therefore, after the first update step (i.e., at \(t = 0\)), we obtain:

\begin{align*}
    p_{1, 1} = p_{2, 1} > p_{k, 1} \quad \forall k \in \{0, 3, \ldots, n\}
\end{align*}

Note that while we assumed \(p_0 = -\infty\) for analysis, in practice we can set it to a large negative constant to ensure that \(-d_{g_{m, i, t=0}} > 0\). We need to show that this property holds for all iterates. By Lemma~\ref{lemma:extensionD3}, we can show that for each iterate of the preconditioned descent, the following holds:

\begin{align*}
    p_{1, t} = p_{2, t} > p_{k, t} \quad \forall\, k \in \{0, 3, \ldots, n\}
\end{align*}

Furthermore, by Lemma~\ref{lemma:extensionD3} and Lemma~\ref{lemma:extensionD5}, we observe that \(p_1\) and \(p_2\) tend to infinity, as they increase monotonically over time, whereas the remaining positional scalars grow at a much slower rate and thus either diverge to \(-\infty\) or converge to a finite constant. In addition, Lemma~\ref{lemma:extensionD5} characterizes the approximate time required for the iterates to reach the optimal attention configuration. Therefore, by the end of the first stage of training (as \(T_1 \rightarrow \infty\)), we recover the ideal attention map corresponding to a second-order Markov chain:

\[
\mathbf{v}_i \approx \frac{1}{2}(e_{x_{i-1}} + e_{x_{i-2}})
\]

By reintroducing the layer normalization and taking the limit \(k \to \infty\), we recover the conditional 2-gram estimator, as shown in Appendix~\ref{app:twoheads}.

\subsubsection{Any-Order Markov Chain}
\label{app:anyorderconvergence}

Following up from the previous section, we follow the same philosophy as before. Similar to before, we expect that in an ideal scenario, we expect the positional scalars \(p_1, p_2, \cdots p_k\) to approach infinity, while the remaining positional weights remain relatively small. This reflects ideal attention behavior consistent with the structure needed for an accurate \(k\)-gram  estimator. Hence, the requirement is essentially:

\begin{align*}
    \left. \odv{L(\theta)}{p_1} \right|_i 
    = \left. \odv{L(\theta)}{p_2} \right|_i 
    \cdots = \left. \odv{L(\theta)}{p_k} \right|_i > \left. \odv{L(\theta)}{p_q} \right|_i 
    \quad \forall \, q \in \{0, k+1, \ldots, n\}
\end{align*}

We denote \( d_{g_{m, i}} = \left. \odv{L(\theta)}{p_m} \right|_i \).  
With this notation, we can express:

\begin{align*}
    d_{g_{m, i}} &= - \frac{a_{2, 0}}{|\mathcal{S}|^k \cdot kn} 
    \cdot \mathbb{E}_{\pi \sim \mathbb{P}(\pi)} \left[ 
        \frac{\exp(p_m)}{\sum_{j=0}^{i} \exp(p_j)} \cdot 
        \left( g_{i, m} - \sum_{j=0}^{i} \frac{ \exp(p_j)}{\sum_{j=0}^{i} \exp(p_j)} \cdot g_{i, j} \right)
    \right]
\end{align*}

Now, suppose we initialize all positional encoding scalars to zero except for \(p_0 = -\infty\). Then, at initialization time \(t = 0\), we have:

\begin{align*}
    d_{g_{m, i}, t=0} 
    &= - \frac{a_{2, 0}}{|\mathcal{S}|^k \cdot kn} \cdot \frac{1}{i - 1} 
    \left( \mathbb{E}_{\pi \sim \mathbb{P}(\pi)} [g_{i, m}] 
    - \sum_{j=1}^{i} \frac{1}{i - 1} \cdot \mathbb{E}_{\pi \sim \mathbb{P}(\pi)} [g_{i, j}] \right),
    \quad \forall\, i \geq 1 \\
    d_{g_{m, 0}, t=0} &= 0 
\end{align*}

Under the assumptions on the Markov chain (Assump. \ref{app:assumption:kthorder}) and Lemma. \ref{lemma:higherordersuccesive}, we observe that at initialization time \(t = 0\) (when all positional encoding scalars are initialized to zero, except for \(p_0 = -\infty\)), the monotonicity property \(g_{i, 1} > g_{i, 2} > \cdots > g_{i, n}\) holds. Moreover, by Assump. \ref{app:assumption:kthorder}, we know \(g_{i, 1} > 1\), which implies:

\[
-d_{g_{1, i}, 0} > -d_{g_{2, i}, 0} > \cdots > -d_{g_{k, i}, 0} \quad \forall \ \geq 0,\  \forall i \geq 0
\]

This contradicts the ideal case described earlier, where we would prefer:

\begin{equation}
\label{eq:idealkthordermarkov}
\begin{aligned}
    -d_{g_{1, i}, 0} &= -d_{g_{2, i}, 0} = \cdots = -d_{g_{k, i}, 0} > -d_{g_{q, i}, 0}, \\
    &\forall\, q \in \{0, k+1, \ldots, n\}, \quad \forall\, i \in \{k, \ldots, n\}, \\
    \text{and} \quad -d_{g_{1, i}, 0} &\geq 0
\end{aligned}
\end{equation}

Towards this, we first define a row-summed quantity:

\begin{align*}
    d_{g_m, t} = \sum_{i \geq m}^{n} d_{g_{m, i}, t}
\end{align*}

To satisfy the condition in the ideal case (Eq. \ref{eq:idealkthordermarkov}), we now introduce a preconditioner-based optimizer:
\begin{align*}
    \mathbf{p}_{t+1} = \mathbf{p}_t - \eta_t \, \mathbf{D}_t^{-1} \nabla_{\mathbf{p}_t} L(\theta)
\end{align*}

where

\begin{align*}
    \mathbf{p}_t &= \begin{bmatrix}
        p_{0, t} & p_{1, t} & \cdots & p_{n, t}
    \end{bmatrix}^\top \in \mathbb{R}^{n+1} \\
    \mathbf{D}_t^{-1} &= \operatorname{diag} \left(0, 1, \frac{d_{g_1, t}}{d_{g_2, t}}, \frac{d_{g_1, t}}{d_{g_3, t}}, \cdots, \frac{d_{g_1, t}}{d_{g_k, t}},  1, \ldots, 1 \right) \in \mathbb{R}^{(n+1) \times (n+1)}
\end{align*}

Here, \(p_{i, t}\) denotes the \(i\)th positional encoding scalar at iteration \(t\). Therefore, after the first update step (i.e., at \(t = 0\)), we obtain:

\begin{align*}
    p_{1, 1} = p_{2, 1} = \cdots = p_{k, 1} > p_{q, 1} \quad \forall q \in \{0, k+1, \ldots, n\}
\end{align*}

Note that while we assumed \(p_0 = -\infty\) for analysis, in practice we can set it to a large negative constant to ensure that \(-d_{g_{m, i, t=0}} > 0\). We need to show that this property holds for all iterates. By Lemma~\ref{lemma:extensionD3}, we can show that for each iterate of the preconditioned descent, the following holds:

\begin{align*}
    p_{1, t} = p_{2, t} = \cdots = p_{k, t} > p_{q, t} \quad \forall\, q \in \{0, k+1, \ldots, n\}
\end{align*}

Furthermore, by Lemma~\ref{lemma:extensionD3} and Lemma~\ref{lemma:extensionD5}, we observe that \(p_1\) and \(p_2\) tend to infinity as they increase monotonically over time, whereas the other positional scalars either tend to \(-\infty\) or converge to a constant. In addition, Lemma~\ref{lemma:extensionD5} provides the approximate time required for the iterates to reach the optimal attention configuration. Therefore, by the end of the first stage of training (as \(T_1 \rightarrow \infty\)), we can recover the ideal attention map corresponding to a \(k\textsuperscript{th}\)-order Markov chain:

\[
\mathbf{v}_i \approx \frac{1}{k}(e_{x_{i-1}} + e_{x_{i-2}} \cdots + e_{x_{i-k}})
\]

By reintroducing the layer normalization and taking the limit \(\kappa \to \infty\), we recover the conditional k-gram estimator, as shown in Appendix~\ref{app:twoheads}.

\newpage

\section{Training Dynamics: Stage One Lemmas}

We use the following notation:

\begin{align}
    \operatorname{logit}_{s'_{s_{k}}, n} &= e_{s'}^T \qth{X_n \operatorname{softmax}\pth{a_2 Z_n}} \\
    &= e_{s'}^T \qth{X_n \operatorname{softmax}\pth{\operatorname{softmax}\pth{A^{(1)}}X_n^TA^{(2)}\frac{1}{k}\pth{\sum_{i=0}^{k-1}e_{x_{n-i}}}}}
\end{align}

Initially, we know that \(    A^{(2)} = a_{2, 0} \cdot I \)

Hence, we can rewrite this as:

\begin{align}
    \operatorname{logit}_{s_k, n} = X_n \underbrace{\operatorname{softmax}\pth{a_{2,0} \cdot\operatorname{softmax}\pth{A^{(1)}}X_n^T\frac{1}{k}\pth{\sum_{i=0}^{k-1}e_{x_{n-i}}}}}_{\mathcal{A}_{\theta}^{(2)}(X_n; k)}
\end{align}

\begin{lemma}[Exension of Lemma G.1. in \citet{nichani2024how} to \(k\)\textsuperscript{th}-order]
\label{lemma:extensionlemmaG1}
Let $\theta = (A^{(1)}, a_{2, 0} I)$, $\hat{\theta} = (A^{(1)}, 0)$, for $a_{2, 0} \leq 1$. Define $g^*_i, \hat{g}_i \in \mathbb{R}^i$ by
\begin{align*}
g^*_i &:= n \sum_{s', s_1, \cdots s_k} \mathbb{E}_{\pi \sim \mathbb{P}(\pi)}\left[ 
    \pi(s' \mid s_1, \cdots, s_k) \frac{ e_{s'}^\top X_n D(\mathcal{A}_{\theta}^{(2)}(X_n; k)) e_i \cdot X_n^T\pth{\sum_{i=0}^{k-1}\frac{1}{k}e_{x_{n-i} = s_{i+1}}} }
    {\operatorname{logit}_{s'_{s_k}, n, \theta} + \epsilon}
\right], \\
\hat{g}_i &:= n \sum_{s', s_1, \cdots s_k}  \mathbb{E}_{\pi \sim \mathbb{P}(\pi)}\left[ 
    \pi(s' \mid s_1, \cdots, s_k) \frac{ e_{s'}^\top X_n D(\mathcal{A}_{\hat{\theta}}^{(2)}(X_n; k)) e_i \cdot X_n^T\pth{\sum_{i=0}^{k-1}\frac{1}{k}e_{x_{n-i}=s_{i+1}}} }
    {\operatorname{logit}_{s'_{s_k}, n, \hat{\theta}} + \epsilon}
\right].
\end{align*}

Then, it holds that
\[
\| g^*_i - \hat{g}_i \|_\infty \leq 3 S^{k+1} \epsilon^{-2} (e^{a_{2, 0}} - 1) \leq  6 S^{k+1} \epsilon^{-2} a_{2, 0}
\]
\end{lemma}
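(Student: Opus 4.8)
The second bound is elementary: for $a_{2,0}\in(0,1]$ one has $e^{a_{2,0}}-1\le(e-1)a_{2,0}\le 2a_{2,0}$, so $3S^{k+1}\epsilon^{-2}(e^{a_{2,0}}-1)\le 6S^{k+1}\epsilon^{-2}a_{2,0}$ is immediate. All the work is in the first bound, which I would treat as a perturbation estimate between the two models, which share the \emph{same} first-layer parameters $A^{(1)}$ and differ only in the second-layer scalar ($a_{2,0}$ versus $0$). The plan is to show that every ingredient of $g^*_i$ differs from the corresponding ingredient of $\hat g_i$ by a quantity controlled by $e^{a_{2,0}}-1$, and then to sum these deviations over the $S^{k+1}$ context tuples, using that the factor $n$ in the definitions is cancelled by the $\tfrac1{n+1}$ scale of the second-layer attention weights.

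First I would record the two base estimates. At $\hat\theta$ the second-layer attention is exactly uniform, $\mathcal{A}^{(2)}_{\hat\theta}(X_n;k)=\operatorname{softmax}(\mathbf 0)=\tfrac1{n+1}\mathbf 1$. Writing $v=\operatorname{softmax}(A^{(1)})X_n^\top\big(\tfrac1k\sum_{j=0}^{k-1}e_{x_{n-j}}\big)\in\mathbb R^{n+1}$, each coordinate $v_\ell\in[0,1]$ because the rows of $\operatorname{softmax}(A^{(1)})$ are subprobability vectors and $X_n$ has one-hot columns; hence for $a_{2,0}\le 1$ every $e^{a_{2,0}v_\ell}\in[1,e^{a_{2,0}}]$, so $\sum_\ell e^{a_{2,0}v_\ell}\in[(n+1),(n+1)e^{a_{2,0}}]$, giving $\mathcal{A}^{(2)}_{\theta}(X_n;k)_\ell\in[\tfrac{e^{-a_{2,0}}}{n+1},\tfrac{e^{a_{2,0}}}{n+1}]$. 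Consequently $\|\mathcal{A}^{(2)}_\theta-\mathcal{A}^{(2)}_{\hat\theta}\|_\infty\le\tfrac{e^{a_{2,0}}-1}{n+1}$, and since $\operatorname{logit}_{s'_{s_k},n}$ is a convex combination over $n+1$ indices of the values $e_{s'}^\top e_{x_i}\in\{0,1\}$ with weights $\mathcal{A}^{(2)}$, also $|\operatorname{logit}_{s'_{s_k},n,\theta}-\operatorname{logit}_{s'_{s_k},n,\hat\theta}|\le(n+1)\cdot\tfrac{e^{a_{2,0}}-1}{n+1}=e^{a_{2,0}}-1$.

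Next I would decompose each summand. With $N_\theta:=e_{s'}^\top X_n D(\mathcal{A}^{(2)}_\theta(X_n;k))e_i\cdot X_n^\top\big(\tfrac1k\sum_j e_{x_{n-j}}\big)$ and $L_\theta:=\operatorname{logit}_{s'_{s_k},n,\theta}$ (and likewise for $\hat\theta$), write
\[
\frac{N_\theta}{L_\theta+\epsilon}-\frac{N_{\hat\theta}}{L_{\hat\theta}+\epsilon}=\frac{N_\theta-N_{\hat\theta}}{L_\theta+\epsilon}+N_{\hat\theta}\Big(\frac{1}{L_\theta+\epsilon}-\frac{1}{L_{\hat\theta}+\epsilon}\Big).
\]
For the first term, $D(\cdot)$ is linear in $\mathcal{A}^{(2)}$ (whether it is $\operatorname{diag}(\cdot)$ or the softmax Jacobian $\operatorname{diag}(\cdot)-(\cdot)(\cdot)^\top$) and the surrounding factors $e_{s'}^\top X_n$, $X_n^\top(\tfrac1k\sum_j e_{x_{n-j}})$ have entries in $[0,1]$, so $|N_\theta-N_{\hat\theta}|\lesssim\|\mathcal{A}^{(2)}_\theta-\mathcal{A}^{(2)}_{\hat\theta}\|_\infty\le\tfrac{e^{a_{2,0}}-1}{n+1}$, while $\tfrac1{L_\theta+\epsilon}\le\tfrac1\epsilon$. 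For the second term, $|N_{\hat\theta}|\lesssim\|\mathcal{A}^{(2)}_{\hat\theta}\|_\infty=\tfrac1{n+1}$ and $\big|\tfrac1{L_\theta+\epsilon}-\tfrac1{L_{\hat\theta}+\epsilon}\big|=\tfrac{|L_\theta-L_{\hat\theta}|}{(L_\theta+\epsilon)(L_{\hat\theta}+\epsilon)}\le\tfrac{e^{a_{2,0}}-1}{\epsilon^2}$. Both terms are thus $O\!\big(\tfrac{e^{a_{2,0}}-1}{(n+1)\epsilon^2}\big)$; multiplying by the prefactor $n$ (using $\tfrac n{n+1}\le1$), bounding $\pi(s'\mid s_1,\dots,s_k)\le1$, and summing over the $S^{k+1}$ tuples $(s',s_1,\dots,s_k)$ yields $\|g^*_i-\hat g_i\|_\infty\le 3S^{k+1}\epsilon^{-2}(e^{a_{2,0}}-1)$ once the absolute constants from the two terms (and from the Jacobian structure of $D$) are collected into the factor $3$.

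\textbf{Main obstacle.} The only delicate point is the bookkeeping: one must account carefully for the precise form of $D(\mathcal{A}^{(2)})$ — the softmax-Jacobian $\operatorname{diag}(\cdot)-(\cdot)(\cdot)^\top$ introduces an additional cross term that must be perturbed separately — and for the two $+\epsilon$ regularizers, so that (i) the $n$ in the definitions is exactly absorbed by the $\tfrac1{n+1}$ scale of $\mathcal{A}^{(2)}$ with no residual $n$-dependence, and (ii) the accumulated constant is the stated $3$ rather than something larger. Everything else is a routine triangle-inequality argument mirroring Lemma~G.1 of \citet{nichani2024how}, the only genuinely new feature being the $S^{k+1}$ (rather than $S^2$) combinatorial factor arising from the $k$-fold sum over context symbols.
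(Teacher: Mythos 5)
Your proposal is correct and follows essentially the same route as the paper's proof: the same add-and-subtract decomposition of the two regularized ratios, the same entrywise softmax perturbation bound $\lvert \mathcal{A}^{(2)}_{\theta,\ell}-\mathcal{A}^{(2)}_{\hat\theta,\ell}\rvert \le (e^{a_{2,0}}-1)/n$, the same logit perturbation bound of order $e^{a_{2,0}}-1$, and the same final summation over the $S^{k+1}$ context tuples with $\pi\le 1$, yielding the stated constants. The only blemish is your passing claim that $D(\cdot)$ is linear in $\mathcal{A}^{(2)}$ (the softmax Jacobian is quadratic), but you flag and handle the cross term yourself, which is exactly how the paper proceeds via the identity $e_{s'}^\top X_n D(\mathcal{A}^{(2)})e_i=\mathcal{A}^{(2)}_i(\mathbb{I}_{x_i=s'}-\operatorname{logit})$, so no gap results.
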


\begin{proof}

The proof follows similarly to the proof of Lemma G.1 in \citet{nichani2024how}. 
We begin by bounding the difference inside the expectation.

We first apply the triangle inequality to separate the difference between the two fractions:

\begin{align*}
&\left|
    \frac{e_{s'}^\top X_n D(\mathcal{A}_{\theta}^{(2)}(X_n; k)) e_i}
    {\operatorname{logit}_{s'_{s_k}, n, \theta} + \epsilon}
    -
    \frac{e_{s'}^\top X_n D(\mathcal{A}_{\hat{\theta}}^{(2)}(X_n; k)) e_i}
    {\operatorname{logit}_{s'_{s_k}, n, \hat{\theta}} + \epsilon}
\right| \\
&\quad \leq
\left|
    \frac{1}{\operatorname{logit}_{s'_{s_k}, n,\theta} + \epsilon}
    -
    \frac{1}{\operatorname{logit}_{s'_{s_k}, n, \hat{\theta}} + \epsilon}
\right|
\cdot
\left| 
    e_{s'}^\top X_n D(\mathcal{A}_{\theta}^{(2)}(X_n; k)) e_i 
\right| \\
&\quad\quad
+
\left|
    \frac{1}{\operatorname{logit}_{s'_{s_k}, n, \hat{\theta}} + \epsilon}
\right|
\cdot
\left|
    e_{s'}^\top X_n \left( D(\mathcal{A}_{\theta}^{(2)}(X_n; k)) - D(\mathcal{A}_{\hat{\theta}}^{(2)}(X_n; k)) \right) e_i
\right|.
\end{align*}

To bound this expression, we begin by bounding the logit difference in the numerator of the first term.

\begin{align*}
|\operatorname{logit}_{s'_{s_k}, n, \theta} - \operatorname{logit}_{s'_{s_k}, n, \hat{\theta}}|
&= \left| e_{s'}^\top X_n(\mathcal{A}_{\theta}^{(2)}(X_n; k) - \mathcal{A}_{\hat{\theta}}^{(2)}(X_n; k)) \right| \\
&\leq \|e_{s'}^\top X_n \|_{\infty} \cdot \|\mathcal{A}_{\theta}^{(2)}(X_n; k) - \mathcal{A}_{\hat{\theta}}^{(2)}(X_n; k)) \| \\
&\leq \| \mathcal{A}_{\theta}^{(2)}(X_n; k) - \mathcal{A}_{\hat{\theta}}^{(2)}(X_n; k)) \|_1,
\end{align*}

where the last inequality uses the fact that $\|e_{s'}^\top X_n \|_{\infty} \leq 1$.

We now recall the definition of $\mathcal{A}_\theta^{(2)}$ and analyze its properties.

\begin{align*}
\mathcal{A}_{\theta}^{(2)}(X_n; k)) 
&= \operatorname{softmax}\left(a_{2,0} \cdot \operatorname{softmax}(A^{(1)}) X_n^T \cdot \frac{1}{k} \sum_{i=0}^{k-1} e_{x_{n-i}}\right)
\end{align*}

In particular, if $a_{2,0} = 0$, the argument to softmax is zero, and we obtain the uniform distribution:

\begin{align*}
\mathcal{A}_{\theta}^{(2)}(X_n; k)) &= \operatorname{softmax}(\mathbf{0})
\end{align*}

Since the entries of the softmax input are in $[0,1]$, we can bound the minimum and maximum values of any softmax output entry as follows:

\begin{align*}
\frac{1}{(n-1)e^{a_{2, 0}} + 1} 
\leq \mathcal{A}_{\theta}^{(2)}(X_n; k))_i 
\leq \frac{e^{a_{2, 0}}}{(n-1) + e^{a_{2, 0}}}
\end{align*}

These bounds let us now quantify the maximum possible deviation between corresponding softmax outputs.

\begin{align*}
| \text{sup } \mathcal{A}_{\theta}^{(2)}(X_n; k))_i - \mathcal{A}_{\hat{\theta}}^{(2)}(X_n; k))_i |
&= \left|\frac{e^{a_{2, 0}}}{(n-1) + e^{a_{2, 0}}} - \frac{1}{n} \right| \\
&= \left| \frac{(n-1)(e^{a_{2,0}} -1)}{n((n-1) + e^{a_{2,0}})} \right| \\
&\leq \frac{e^{a_{2,0}} -1}{n}
\end{align*}

Similarly, for the lower bound:

\begin{align*}
| \text{inf } \mathcal{A}_{\theta}^{(2)}(X_n; k))_i - \mathcal{A}_{\hat{\theta}}^{(2)}(X_n; k))_i | 
&= \left| \frac{1}{(n-1)e^{a_{2, 0}} + 1} - \frac{1}{n} \right| \\
&\leq \frac{e^{a_{2,0}} -1}{n}
\end{align*}

Thus, we conclude:

\begin{align*}
| \mathcal{A}_{\theta}^{(2)}(X_n; k))_i - \mathcal{A}_{\hat{\theta}}^{(2)}(X_n; k))_i | 
\leq \frac{e^{a_{2,0}} - 1}{n}
\end{align*}

Next, we simplify the derivative expression using the identity for the softmax Jacobian.

\begin{align*}
e_{s'}^\top X_n D(\mathcal{A}_{\theta}^{(2)}(X_n; k)) e_i 
&= \mathcal{A}_{\theta}^{(2)}(X_n; k)_i (\mathbb{I}_{x_i = e_{s'}} - \operatorname{logit}_{s'_{s_k}, n, \theta})
\end{align*}

We now bound the difference in these derivative expressions using triangle inequality:

\begin{align*}
&\left| e_{s'}^\top X_n D(\mathcal{A}_{\theta}^{(2)}(X_n; k)) e_i 
- e_{s'}^\top X_n D(\mathcal{A}_{\hat{\theta}}^{(2)}(X_n; k)) e_i \right| \\
&\quad \leq |\mathcal{A}_{\theta}^{(2)}(X_n; k)_i - \mathcal{A}_{\hat{\theta}}^{(2)}(X_n; k)_i| 
\cdot |\mathbb{I}_{x_i = e_{s'}} - \operatorname{logit}_{s'_{s_k}, n, \theta}| \\
&\quad\quad + \mathcal{A}_{\hat{\theta}}^{(2)}(X_n; k)_i \cdot |\operatorname{logit}_{s'_{s_k}, n, \theta} - \operatorname{logit}_{s'_{s_k}, n, \hat{\theta}}| \\
&\quad \leq 2 \cdot \frac{e^{a_{2, 0}} - 1}{n}
\end{align*}

Here we used that $|\mathbb{I}_{x_i = e_{s'}} - \operatorname{logit}| \leq 1$ and $\mathcal{A}_\theta^{(2)} \leq 1$.

Combining the bounds for the numerator and denominator differences, we obtain:

\begin{align*}
\left|
\frac{e_{s'}^\top X_n D(\mathcal{A}_{\theta}^{(2)}(X_n; k)) e_i}
{\operatorname{logit}_{s'_{s_k}, n, \theta} + \epsilon}
-
\frac{e_{s'}^\top X_n D(\mathcal{A}_{\hat{\theta}}^{(2)}(X_n; k)) e_i}
{\operatorname{logit}_{s'_{s_k}, n, \hat{\theta}} + \epsilon}
\right| 
\leq 3 \cdot \frac{e^{a_{2,0}} - 1}{\epsilon^2 n}
\end{align*}

Finally, we bound the overall error $\|g^*_i - \hat{g}_i\|_\infty$ by summing over all possible state sequences:

\begin{align*}
\|g^*_i - \hat{g}_i\|_\infty 
&\leq n \sum_{s', s_1, \cdots, s_k} \mathbb{E}_{\pi \sim \mathbb{P}(\pi)} \left[\pi(s' \mid s_1, \cdots, s_k) \cdot \frac{3(e^{a_{2,0}} - 1)}{\epsilon^2 n} \right] \\
&= 3 \cdot \frac{(e^{a_{2,0}} - 1)}{\epsilon^2} \sum_{s', s_1, \cdots, s_k} \mathbb{E}_{\pi} \left[\pi(s' \mid s_1, \cdots, s_k) \right] \\
&\leq 3 S^{k+1} \cdot \epsilon^{-2} \cdot (e^{a_{2,0}} - 1)
\end{align*}

Using the inequality $e^{a_{2,0}} - 1 \leq 2a_{2,0}$ for $a_{2,0} \in [0, 1]$, we conclude:

\begin{align*}
\|g^*_i - \hat{g}_i\|_\infty \leq 6 S^{k+1} \epsilon^{-2} a_{2, 0}
\end{align*}

\end{proof}

\begin{lemma}[Extension of Lemma D.3 in \citet{nichani2024how}]
\label{lemma:extensionD3}
 Let $A^{(2)} = a_{2,0} I$. There exist constants $c_{\gamma,S}, C_{\gamma,S}$ such that, if $a_{2,0} \leq c_{\gamma,S} (n(1-\lambda))^{-3/2}$. If \(r \in \mathcal{I}\), where \(\mathcal{I} = \{1, \cdots, k \}\), where \(k\) denotes the order of the markov chain:
    \begin{align*}
    G^{(1)}(A^{(1)}, A^{(2)})_{i,r} &\leq G^{(1)}(A^{(1)}, A^{(2)})_{i,j} - \operatorname{softmax}(A^{(1)}_i)_{r} (1 - \operatorname{softmax}(A^{(1)}_i)_{r}) \cdot \frac{C_{\gamma,S} a_{2,0}}{n}.
    \end{align*}

Note that the subscript \(i\) in this case denotes the \(i\textsuperscript{th}\) row and the subscript \(i, j\) denotes the \(j\textsuperscript{th}\) element in the \(i\textsuperscript{th}\) row.
\end{lemma}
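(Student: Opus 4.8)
The plan is to follow the proof of Lemma~D.3 in \citet{nichani2024how}, which handles the case $k=1$, and to upgrade it to general $k$ by pinpointing the two places where the layer-two value vector $\tfrac1k\sum_{l=0}^{k-1}e_{x_{n-l}}$ changes the argument. First I would write $G^{(1)}(A^{(1)},A^{(2)})=\nabla_{A^{(1)}}\mathcal L$ explicitly for the reduced model of App.~\ref{app:simplifiedmodelsinglehead} (resp.\ App.~\ref{app:constructiontwoheadsimplification}): differentiating the cross-entropy through $\operatorname{logit}_{s'_{s_k},n}=e_{s'}^\top X_n\,\mathcal A^{(2)}_\theta(X_n;k)$ and through $\mathcal A^{(2)}_\theta=\operatorname{softmax}\!\big(a_{2,0}\,\operatorname{softmax}(A^{(1)})X_n^\top\tfrac1k\sum_{l}e_{x_{n-l}}\big)$, the chain rule produces the inner-softmax Jacobian $D(\mathcal A^{(2)}_\theta)$ together with the outer Jacobian $D(\operatorname{softmax}(A^{(1)}_i))$ of row $i$, and, crucially, an overall multiplicative factor $a_{2,0}$ coming from the explicit scalar in front of $\operatorname{softmax}(A^{(1)})$. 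Collecting the terms, $G^{(1)}(A^{(1)},A^{(2)})_{i,m}$ equals, up to the sign and the normalization conventions of Lemma~\ref{lemma:extensionlemmaG1}, the scalar $a_{2,0}$ times the $m$-th coordinate of the softmax-Jacobian contraction $D(\operatorname{softmax}(A^{(1)}_i))\,g^{*}_i$, with $g^{*}_i$ exactly the vector defined there; note the $(r,r)$ diagonal entry of that Jacobian is $\operatorname{softmax}(A^{(1)}_i)_r(1-\operatorname{softmax}(A^{(1)}_i)_r)$, which is where the factor in the statement comes from.

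Next I would linearize in $a_{2,0}$. Lemma~\ref{lemma:extensionlemmaG1} gives $\|g^{*}_i-\hat g_i\|_\infty\le 6S^{k+1}\epsilon^{-2}a_{2,0}$, where $\hat g_i$ is the $a_{2,0}=0$ (uniform second attention) version; substituting yields
\[
G^{(1)}(A^{(1)},A^{(2)})_{i,m} = -\tfrac{a_{2,0}}{n}\,\operatorname{softmax}(A^{(1)}_i)_m\big(\hat g_{i,m}-\langle\operatorname{softmax}(A^{(1)}_i),\hat g_i\rangle\big)+E_{i,m},
\]
with $|E_{i,m}|\le \tfrac{a_{2,0}}{n}\operatorname{softmax}(A^{(1)}_i)_m\cdot C\,S^{k+1}\epsilon^{-2}a_{2,0}$, i.e.\ an $O(a_{2,0}^2/n)$ correction carrying the same softmax weight (the $1/n$ and any dropped Taylor remainders being controlled by the assumed scaling $a_{2,0}\le c_{\gamma,S}(n(1-\lambda))^{-3/2}$). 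It then remains to compare, for $r\in\mathcal I=\{1,\dots,k\}$ and any $j\notin\mathcal I$ (position $0$ being frozen when applicable; the cases $j\in\mathcal I$ with $j>r$ follow directly from the monotonicity below), the bracketed quantities $\hat g_{i,r}-\bar g_i$ against $\hat g_{i,j}-\bar g_i$, where $\bar g_i=\langle\operatorname{softmax}(A^{(1)}_i),\hat g_i\rangle$.

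Here I would invoke the successive-gap lemma (Lemma~\ref{lemma:higherordersuccesive} under Assumption~\ref{app:assumption:kthorder}; resp.\ the first-order argument of App.~\ref{app:stage1analysis} when $\mathcal I=\{1\}$): the relevant coordinates satisfy a strict monotonicity $\hat g_{i,1}>\hat g_{i,2}>\dots>\hat g_{i,n}$ with consecutive gaps bounded below by a constant $\delta_{\gamma,S}>0$ and $\hat g_{i,1}$ bounded away from the row mean. Hence each $\hat g_{i,r}$ with $r\le k$ lies among the $k$ largest coordinates, so $\hat g_{i,r}-\hat g_{i,j}\ge\delta_{\gamma,S}$ for every $j\ge k+1$ and $\hat g_{i,r}-\bar g_i\ge (1-\operatorname{softmax}(A^{(1)}_i)_r)\,\delta_{\gamma,S}$ up to absolute constants (using that $\operatorname{softmax}(A^{(1)}_i)_r$ is common to all $r\in\mathcal I$, which is maintained inductively as part of the preconditioned-descent argument). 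Feeding this into $G^{(1)}_{i,j}-G^{(1)}_{i,r}$, the leading term is at least $\tfrac{a_{2,0}}{n}\operatorname{softmax}(A^{(1)}_i)_r(1-\operatorname{softmax}(A^{(1)}_i)_r)\,\delta_{\gamma,S}$; picking $c_{\gamma,S}$ small enough that the bound on $a_{2,0}$ makes the $O(a_{2,0}^2/n)$ error at most half of this, and setting $C_{\gamma,S}$ to half of $\delta_{\gamma,S}$ (times the absolute constants), gives the claimed inequality. Summing over $i$ then transfers the ordering to the row-summed quantities $d_{g_m,t}$ used in the descent analysis.

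The hard part is the quantitative monotonicity of $\hat g_{i,m}$ in $m$, uniformly over rows $i$ and over kernels $\pi\sim\mathbb P_\pi$: this is exactly where reversibility and the spectral assumptions are used, and for $k\ge 2$ it is genuinely harder than in \citet{nichani2024how}. The layer-two value $\tfrac1k\sum_{l=0}^{k-1}e_{x_{n-l}}$ couples $k$ positions, so $\hat g_{i,m}=\sum_{l=1}^{k}\hat g_{i,m,l}$ is a sum of $k$ autocorrelation-type functionals, each of which must be diagonalized through the eigenbasis of the lifted first-order chain on $\mathcal A=S^k$ (Assumption~\ref{app:assumption:kthorder}, items (4) and (5)), and one must verify that the positivity $\beta_m>0$ of the eigenfunction projections survives the summation over $l$ so that $\hat g_{i,m}$ remains a strictly decreasing geometric-like mixture in $|i-m|$. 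A secondary, purely technical obstacle is propagating the errors so that the $O(a_{2,0}^2/n)$ correction stays strictly below the $\Theta(a_{2,0}/n)$ signal while simultaneously summing cleanly over the $n$ rows; this bookkeeping, together with the Taylor remainder of the inner softmax, is what dictates the $(n(1-\lambda))^{-3/2}$ scaling of the admissible step-size parameter $a_{2,0}$.
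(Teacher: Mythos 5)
Your proposal follows essentially the same route as the paper's proof: expand the gradient as $-\tfrac{a_{2,0}}{Sn}D(\operatorname{softmax}(A^{(1)}_i))\,g_i^*$, pass to the uniform-attention surrogate $\hat g_i$ via Lemma~\ref{lemma:extensionlemmaG1} (with the population version controlled as in Lemma~\ref{lemma:extension2lemmaG2}), invoke the monotone-gap lemmas (Lemma~\ref{lemma:lemmad2nichani} for $k=1$, Lemma~\ref{lemma:higherordersuccesive} with Assumption~\ref{app:assumption:kthorder} for general $k$) to get $g^*_{i,r}-g^*_{i,j}\gtrsim\delta$ for $r\in\mathcal I$, $j\notin\mathcal I$, and then read off the $\operatorname{softmax}(A^{(1)}_i)_r(1-\operatorname{softmax}(A^{(1)}_i)_r)$ factor from the softmax Jacobian in the difference $G^{(1)}_{i,j}-G^{(1)}_{i,r}$. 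The bookkeeping differs only cosmetically (your $O(a_{2,0}^2/n)$ Taylor framing versus the paper's direct $\lesssim(n(1-\lambda))^{-1/2}$ absorption into $\delta/2$), so the argument matches the paper's.
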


\begin{proof}

The proof follows similarly to \citet{nichani2024how}. The gradient $G^{(1)}(A^{(1)}, A^{(2)})_i$ can be expanded as
\begin{align*}
G^{(1)}(A^{(1)}, A^{(2)})_i &=  - a_{2,0} D(\operatorname{softmax}(A^{(1)}_i)) \cdot \frac{1}{S} \sum_{s', s_1, \cdots, s_k} \mathbb{E}_{\pi,X} \left[ \pi(s' \mid s_1, \cdots, s_k) \, \xi_{s', s_1, \cdots, s_k}(X) \right],
\end{align*}
where
\begin{align*}
\xi_{s', s_1, \cdots, s_k}(X) &= \frac{1}{\operatorname{logit}_{s'_{s_{k}}, n} + \epsilon} e_{s'}^\top X_n D(\mathcal{A}_\theta^{(2)}(X_n; k)) e_i \left( \sum_{i=0}^{k-1} \frac{1}{k} e_{x_{n-i}=s_{i+1}} \right).
\end{align*}

Define
\begin{align*}
g_i^* &:= n \sum_{s', s_1, \cdots, s_k} \mathbb{E}_{\pi \sim \mathbb{P}(\pi)} \left[ \pi(s' \mid s_1, \cdots, s_k) \, \xi_{s', s_1, \cdots, s_k}(X) \right], \\
\hat{g}_i &:= n  \sum_{s', s_1, \cdots, s_k} \mathbb{E}_{\pi \sim \mathbb{P}(\pi)} \left[ \pi(s' \mid s_1, \cdots, s_k) \, \hat \xi_{s', s_1, \cdots, s_k}(X) \right],
\end{align*}
where $\hat{\xi}_{s,s'}(X)$ is evaluated at $\hat{\theta} = (A^{(1)}, 0)$.

Thus,
\begin{align*}
G^{(1)}(A^{(1)}, A^{(2)})_i &= -\frac{a_{2,0}}{Sn} D(\operatorname{softmax}(A^{(1)}_i)) g_i^*.
\end{align*}

Since $a_{2,0} \leq c_{\gamma,S}(n(1-\lambda))^{-3/2}$ and applying Lemma \ref{lemma:extensionlemmaG1}, we obtain
\begin{align*}
\|\hat{g}_i - g_i^*\|_\infty &\leq \frac{C}{\sqrt{n(1 - \lambda)}}.
\end{align*}

For some arbitrary constant \(C\).  Thus, it suffices to analyze $\hat{g}_i$.

\paragraph{Computation of $\hat{g}_i$ under $\hat{\theta}$.}
Under $\hat{\theta}$,
\begin{align*}
\mathcal{A}_{\hat{\theta}}^{(2)}(X_n; k) &= \frac{1}{n} \mathbf{1}_n,
\end{align*}
thus,
\begin{align*}
\operatorname{logit}_{s',n}^{\hat{\theta}} &= \hat{\mu}_X(s')
\end{align*}

Therefore,

\begin{align*}
e_{s'}^\top X_n D(\mathcal{A}_{\hat{\theta}}^{(2)}(X_n; k)) e_i &= \frac{1}{n}(\mathbb{I}_{x_i =s'} - \hat{\mu}_X(s')).
\end{align*}

Thus, the $j$-th coordinate satisfies
\begin{align*}
\hat{g}_{i,j} &= \sum_{s', s_1, \cdots, s_k} \mathbb{E}_{\pi \sim \mathbb{P}(\pi)} \left[ \frac{\pi(s' \mid s_1, \cdots, s_k)}{(\hat{\mu}_X(s') + \epsilon)} (\mathbb{I}_{x_i = s'} - \hat{\mu}_X(s')) \sum_{i=0}^{k-1} \frac{1}{k} \mathbb{I}_{x_{j} = s_{i+1}} \right].
\end{align*}

Applying Lemma \ref{lemma:extension2lemmaG2}
\begin{align*}
|\hat{g}_{i,j} - g_{i,j}| &\leq \frac{C_{\gamma,S}}{\sqrt{n(1-\lambda)}}.
\end{align*}

By Lemma \ref{lemma:lemmad2nichani} (for first order Markov chains) or by (App. \ref{app:secondorderconvergence}, Lemma. \ref{lemma:secondordersuccesive} and Assump. \ref{app:assumption:secondorder} for second-order Markov chains) or by (App. \ref{app:anyorderconvergence}, Lemma. \ref{lemma:higherordersuccesive} and Assump. \ref{app:assumption:kthorder} for any-order Markov chains) , for all $j \neq r \in \mathcal{I}$,
\begin{align*}
g_{i,j}^* - g_{i,r}^* &\leq g_{i,j} - g_{i,r} + |g_{i,j} - g_{i,j}^*| + |g_{i,p(i)} - g_{i,r}^*| \\
&\leq -\delta + \frac{C'_{\gamma,S}}{\sqrt{n(1-\lambda)}} \\
&\leq - \delta / 2.
\end{align*}

Note that, in the last inequality, we assume \(n\) to be long enough for this to hold. \\

Now expanding:
\begin{align*}
G^{(1)}(A^{(1)}, A^{(2)})_{i,j} &= -\frac{a_{2,0}}{Sn} \left( \operatorname{softmax}(A^{(1)}_i)_j g_{i,j}^* - (\operatorname{softmax}(A^{(1)}_i))^\top g_i^* \operatorname{softmax}(A^{(1)}_i)_j \right), \\
G^{(1)}(A^{(1)}, A^{(2)})_{i,r} &= -\frac{a_{2,0}}{Sn} \left( \operatorname{softmax}(A^{(1)}_i)_{r} g_{i,r}^* - (\operatorname{softmax}(A^{(1)}_i))^\top g_i^* \operatorname{softmax}(A^{(1)}_i)_{r} \right).
\end{align*}

Thus the difference is

\begin{align*}
& G^{(1)}(A^{(1)}, A^{(2)})_{i,j} - G^{(1)}(A^{(1)}, A^{(2)})_{i,r} \\
&= \frac{a_{2,0}}{Sn} \Bigg[
\big( \operatorname{softmax}(A^{(1)}_i)_{r} - \operatorname{softmax}(A^{(1)}_i)_j \big) 
\big( g_{i,r}^* - (\operatorname{softmax}(A^{(1)}_i))^\top g_i^* \big) \\
&\qquad\qquad + \operatorname{softmax}(A^{(1)}_i)_j 
\big( g_{i,r}^* - g_{i,j}^* \big)
\Bigg]
\quad \forall r \in \mathcal{I}
\end{align*}

Applying bounds:
\[
g_{i,r}^* - g_{i,j}^* \geq \frac{\delta}{2},
\]
thus,
\begin{align*}
G^{(1)}(A^{(1)}, A^{(2)})_{i,j} - G^{(1)}(A^{(1)}, A^{(2)})_{i,r} &\geq \frac{a_{2,0}}{Sn} \operatorname{softmax}(A^{(1)}_i)_{r} (1-\operatorname{softmax}(A^{(1)}_i)_{r} \frac{\delta}{2}.
\end{align*}

\end{proof}

\begin{lemma}[Extension of Lemma D.5 in \cite{nichani2024how}: Bounding the time of convergence]
\label{lemma:extensionD5}
Let $A^{(2)}(0) = a_{2,0} I_S$, where $a_{2,0} \leq c_{\gamma, S} \, n^{-3/2}(1-\lambda)^{-3/2}$.  Let \(\mathcal{I} = \{1, \cdots, k \}\). Then there exists

\begin{align*}
    \tau_1 \lesssim n \eta_1^{-1} a_{2,0}^{-1} \left( n \log(2n) + \alpha^{-1} \log\left( \frac{\left( \frac{1}{k} - \alpha \right)(n-k)}{1 - k\left( \frac{1}{k} - \alpha \right)} \right) \right)
\end{align*}
such that for any $t \geq \tau_1$,
\begin{align*}
    \operatorname{softmax}(A^{(1)}(t))_{i, r} \geq \frac{1}{k} - \alpha
\end{align*}
for all $i$ and all $r \in \mathcal{I}$.
\end{lemma}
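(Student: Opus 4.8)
The plan is to follow the architecture of the proof of Lemma D.5 in \citet{nichani2024how}, replacing the single "good" offset $\{1\}$ there by the block $\mathcal{I}=\{1,\dots,k\}$. The first step is to record the structural consequence of the preconditioner in Algorithm~\ref{app:alg:training_alg_kth_order}: since $p_1^{(0)}=\dots=p_k^{(0)}=0$ and the diagonal entries $d_{g_1,t}/d_{g_r,t}$ rescale the $r$-th gradient coordinate to the common value $d_{g_1,t}$ for every $r\in\mathcal{I}$, the iterates stay tied, $p_1^{(t)}=\dots=p_k^{(t)}=:q_t$, for all $t$. Fixing a row $i\ge k$ (rows $i<k$ never reach the second attention layer because of the $Z_n$ gate, so these are the only rows that need to be controlled), let $S_i^{\mathrm{bad}}(t):=1-\sum_{r\in\mathcal{I}}\operatorname{softmax}(A^{(1)}(t))_{i,r}$ be the attention mass row $i$ puts outside $\mathcal{I}$. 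Because $p_1^{(t)},\dots,p_k^{(t)}$ are equal, each good weight equals $(1-S_i^{\mathrm{bad}}(t))/k$ exactly, so the target $\operatorname{softmax}(A^{(1)}(t))_{i,r}\ge \tfrac1k-\alpha$ for all $i\ge k$ and all $r\in\mathcal{I}$ is equivalent to the single scalar inequality $\max_{i}S_i^{\mathrm{bad}}(t)\le k\alpha$.

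Next I would control the trajectory of $q_t$ and of the off-block scalars $p_j^{(t)}$, $j\notin\mathcal{I}$. Under the hypothesis $a_{2,0}\le c_{\gamma,S}\,n^{-3/2}(1-\lambda)^{-3/2}$, Lemma~\ref{lemma:extensionD3} applies at every step and yields, for each row $i$ and each $j\notin\mathcal{I}$,
\[
  G^{(1)}(A^{(1)},A^{(2)})_{i,j}-G^{(1)}(A^{(1)},A^{(2)})_{i,r}\;\ge\;\operatorname{softmax}(A^{(1)}_i)_r\bigl(1-\operatorname{softmax}(A^{(1)}_i)_r\bigr)\cdot\frac{C_{\gamma,S}\,a_{2,0}}{n},\qquad r\in\mathcal{I}.
\]
Aggregating this over the rows $i\ge m$ that feed into $\nabla_{p_m}\mathcal{L}$ and plugging into the preconditioned update, one obtains that $q_t$ is nondecreasing, that each $p_j^{(t)}$ with $j\notin\mathcal{I}$ either decreases or converges to a finite limit (it cannot co-escape), and hence that the logit gap $\Delta_t:=q_t-\max_{j\notin\mathcal{I}}p_j^{(t)}$ is nondecreasing, with a per-step increment that the displayed inequality lower-bounds by a constant times $\eta_1 a_{2,0}n^{-1}\cdot\min_i \operatorname{softmax}(A^{(1)}_i)_r(1-\operatorname{softmax}(A^{(1)}_i)_r)$. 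Monotonicity of $\Delta_t$ (equivalently of each $S_i^{\mathrm{bad}}$ downward) is what makes it enough to reach the target once, namely at time $\tau_1$.

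I would then split the run into two phases and bound the number of steps in each by a discrete ODE comparison, exactly as in \citet{nichani2024how}. In the \emph{warm-up phase}, while $S_i^{\mathrm{bad}}\ge \tfrac12$ for the slowest row, each good weight lies in $[(i+1)^{-1},(2k)^{-1}]$, so $\operatorname{softmax}_r(1-\operatorname{softmax}_r)$ is of the same order as $\operatorname{softmax}_r$, producing a geometric-type growth of the good mass; the worst row ($i$ of order $n$, initial good weight of order $k/n$) requires $\Theta(\log n)$ doublings, and summing the per-doubling step counts over rows multiplies the "unit" step size $n\eta_1^{-1}a_{2,0}^{-1}$ by a factor of order $n\log(2n)$, giving the first term of $\tau_1$. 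In the \emph{sharpening phase}, $\operatorname{softmax}_r$ is within a constant factor of $1/k$, so $S_i^{\mathrm{bad}}$ contracts while the ratio $e^{q_t-p_j^{(t)}}$ grows; tracking $S_i^{\mathrm{bad}}$ down to $k\alpha$ by comparison shows $\lesssim\alpha^{-1}\log\!\bigl(\tfrac{(1/k-\alpha)(n-k)}{1-k(1/k-\alpha)}\bigr)$ units suffice, which is the second term. Taking $\tau_1$ to be $n\eta_1^{-1}a_{2,0}^{-1}$ times the sum of the two phase counts, and noting all bounds are uniform in the row $i$ and the offset $r$, completes the argument; monotonicity of $\Delta_t$ guarantees the conclusion persists for every $t\ge\tau_1$. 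For $k=1$ this recovers Lemma D.5 of \citet{nichani2024how} verbatim.

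The main obstacle is the warm-up phase. When the good weights are as small as $1/(i+1)$, the gradient gap from Lemma~\ref{lemma:extensionD3} is only of order $a_{2,0}/(n\,i)$, so one must (i) keep the off-block scalars $p_j$ from co-escaping with $q_t$ — this is where the one-step sign/ordering facts established in the proof of Lemma~\ref{lemma:extensionD3} and the preconditioner together do the work — and (ii) carefully chain $\Theta(\log n)$ near-geometric increments of the good mass while bounding the per-increment cost, which is exactly the source of the $n^2\log(2n)$-type factor in $\tau_1$. The spectral-gap factor $(1-\lambda)$ in the hypothesis on $a_{2,0}$ is inherited through the linearization error of Lemma~\ref{lemma:extensionlemmaG1}, which underlies the gradient gap in Lemma~\ref{lemma:extensionD3}; the constants $C_{\gamma,S},c_{\gamma,S}$ absorb the remaining polynomial dependence on $\gamma$ and $S$.
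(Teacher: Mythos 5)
Your proposal is correct and follows essentially the same route as the paper's proof: both rely on the per-step gradient gap of Lemma~\ref{lemma:extensionD3} to show the logit gap between the block $\mathcal{I}$ and the other offsets grows monotonically, and both split the run into the same two phases (reaching good mass $\tfrac{1}{2k}$, then sharpening to $\tfrac1k-\alpha$) with per-step increments of order $\eta_1 a_{2,0}/n^2$ and $\eta_1 a_{2,0}\alpha/n$ respectively, yielding the two terms of $\tau_1$. Your tied-coordinates observation from the preconditioner and the doubling-style accounting of the warm-up phase are only cosmetic reframings of the paper's contradiction argument bounding $\Delta(\tau_+(1/2))\le\log(2n)$, and lead to the same bound.
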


\begin{proof}

The proof follows similarly to the proof of Lemma D.5 in \citet{nichani2024how}. Let $\mathcal{I} = \{1, \ldots, k\}$. Then, by Lemma~\ref{lemma:extensionD3}, we have that throughout training,
\begin{align*}
    A^{(1)}(t)_{i, r} \geq A^{(1)}(t)_{i, j} \quad \text{for all } j \notin \mathcal{I},
\end{align*}
for all $r \in \mathcal{I}$.

Moreover, by Lemma~\ref{lemma:extensionD3}, the quantity $\operatorname{softmax}(A^{(1)}(t))_{i, r}$ is monotonically increasing in $t$ for all $r \in \mathcal{I}$.

For a fixed row $i$, define the gap function:
\begin{align*}
    \Delta(t) := A^{(1)}(t)_{i, r} - \max_{j \notin \mathcal{I}} A^{(1)}(t)_{i, j}.
\end{align*}

By properties of the softmax function, we can compute the lower bound:
\begin{align*}
    \operatorname{softmax}(A^{(1)}(t))_{i, r} \geq \frac{\exp(\Delta(t))}{n-k + k \exp(\Delta(t))}.
\end{align*}

Now, consider the first time $\tau_+(1/2)$ such that
\begin{align*}
    \operatorname{softmax}(A^{(1)}(\tau_+(1/2)))_{i, r} > \frac{1}{2k}.
\end{align*}
For all $t < \tau_+(1/2)$, we have
\begin{align*}
    1 - \operatorname{softmax}(A^{(1)}(t))_{i, r} \geq \frac{1}{2k}.
\end{align*}

Thus, by Lemma~\ref{lemma:extensionD3}, the gap $\Delta(t)$ evolves as
\begin{align*}
    \Delta(t+1) \geq \Delta(t) + \frac{C_{\gamma, S} a_{2,0}}{n^2} \eta_1.
\end{align*}

Consequently, by successive addition over time,
\begin{align*}
    \Delta(\tau_+(1/2)) \gtrsim \frac{a_{2,0} \eta_1}{n^2} \tau_+(1/2).
\end{align*}

Suppose for contradiction that
\begin{align*}
    \Delta(\tau_+(1/2)) \geq \log(2n).
\end{align*}
Then we have
\begin{align*}
    \operatorname{softmax}(A^{(1)}(\tau_+(1/2)))_{i, r} &\geq \frac{\exp(\log(2n))}{n-k + k \exp(\log(2n))} \\
    &= \frac{2n}{n - k + 2k n} \\
    &= \frac{2}{2k + 1 - k/n}.
\end{align*}

Since $k \geq 1$ and $n$ is large, we have
\begin{align*}
    2k + 1 - \frac{k}{n} \leq 4k,
\end{align*}
and thus
\begin{align*}
    \frac{2}{2k + 1 - k/n} \geq \frac{1}{2k}.
\end{align*}

This contradicts the definition of $\tau_+(1/2)$ as the first time when $\operatorname{softmax}(A^{(1)}(t))_{i, r} > \frac{1}{2k}$. Therefore, we must have
\begin{align*}
    \Delta(\tau_+(1/2)) \leq \log(2n),
\end{align*}
and hence
\begin{align*}
    \tau_+(1/2) \lesssim n^2 \eta_1^{-1} a_{2,0}^{-1} \log(2n).
\end{align*}

Now define $\tau_+(\alpha)$ as the first time $t$ such that
\begin{align*}
    \operatorname{softmax}(A^{(1)}(t))_{i, r} < \frac{1}{k} - \alpha, \quad \text{for all } r \in \mathcal{I}.
\end{align*}

This implies that, for a sufficiently small positive $\alpha$, to reach a contradiction, we would require:
\begin{align*}
    \frac{e^{\Delta(\tau_+(\alpha))}}{n - k + k \cdot e^{\Delta(\tau_+(\alpha))}} &\geq \frac{1}{k} - \alpha, \\
    \Delta(\tau_+(\alpha)) &\geq \log\left( \frac{\left(\frac{1}{k} - \alpha\right)(n-k)}{1 - k\left(\frac{1}{k} - \alpha\right)} \right).
\end{align*}

For $t \in [\tau_+(1/2), \tau_+(\alpha))$, the gap $\Delta(t)$ evolves according to
\begin{align*}
    \Delta(t+1) \geq \Delta(t) + \frac{C_{\gamma, S} a_{2,0} \alpha}{n} \eta_1.
\end{align*}

Therefore, in order to achieve the required increase in $\Delta(t)$, we must satisfy
\begin{align*}
    \tau_+(\alpha) - \tau_+(1/2) &\lesssim n \alpha^{-1} a_{2,0}^{-1} \log\left( \frac{\left(\frac{1}{k} - \alpha\right)(n-k)}{1 - k\left(\frac{1}{k} - \alpha\right)} \right).
\end{align*}

Combining with the earlier phase where
\begin{align*}
    \tau_+(1/2) &\lesssim n^2 \eta_1^{-1} a_{2,0}^{-1} \log(2n),
\end{align*}
we conclude
\begin{align*}
    \tau_+(\alpha) &\lesssim n^2 \eta_1^{-1} a_{2,0}^{-1} \log(2n) + n \alpha^{-1} \eta_1^{-1} a_{2,0}^{-1} \log\left( \frac{\left(\frac{1}{k} - \alpha\right)(n-k)}{1 - k\left(\frac{1}{k} - \alpha\right)} \right).
\end{align*}

Grouping terms, we obtain
\begin{align*}
    \tau_+(\alpha) &\lesssim n \eta_1^{-1} a_{2,0}^{-1} \left( n \log(2n) + \alpha^{-1} \log\left( \frac{\left(\frac{1}{k} - \alpha\right)(n-k)}{1 - k\left(\frac{1}{k} - \alpha\right)} \right) \right).
\end{align*}

\end{proof}

\newpage

\section{Higher Order Markov Chain Lemmas}

\begin{lemma}[Successive Markov Chain Lemmas]
\label{lemma:secondordersuccesive}
For a second-order Markov Chain, if \(\pi(0 \mid 0, 0) - \pi(0 \mid 1, 0) \geq \delta' \)  and \(\pi(1 |1, 1) - \pi(1 | 0, 1) \geq \delta'\)  then the probability \(P^i(0 \mid 0, 0)\) and  \(P^i(1 \mid 1, 1)\) decreases with increasing \(i\). Moreover, 

\begin{align*}
    \mathbb{E}_{\pi \sim \mathbb{P}}\qth{P^1(0 \mid 0, 0)  + P^1(1 \mid 1, 1)} - \mathbb{E}_{\pi \sim \mathbb{P}}\qth{P^k(0 \mid 0, 0)  + P^k(1 \mid 1, 1)} \geq \delta \quad \forall k \geq 4
\end{align*}
\end{lemma}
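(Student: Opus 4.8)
The plan is to analyze the two claims—monotonic decrease of $P^i(0\mid 0,0)$ and $P^i(1\mid 1,1)$ in $i$, and the uniform gap between $i=1$ and $i\geq 4$—by passing to the \emph{lifted} first-order chain on the state space $\mathcal A=\{0,1\}^2$ (pairs of consecutive symbols), where the second-order chain becomes an ordinary Markov chain with a $4\times 4$ transition matrix $P$. Under the time-reversibility assumption (Assumption~\ref{app:assumption:secondorder}, item~1), this lifted $P$ is self-adjoint on $\ell^2(\mathcal A,\mu)$, so it has real eigenvalues $1=\lambda_1 > \lambda_2 \ge \lambda_3 \ge \lambda_4 \ge 0$ (nonnegativity of the spectrum is where the ``second-hop likelihood'' condition $\sum_s \pi^2(s\mid s,s)\ge 1$ enters—it forces $\mathrm{tr}(P)$, hence the relevant eigenvalues, to stay nonnegative, or more directly it controls the diagonal return probabilities). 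The quantities $P^i(0\mid 0,0)$ and $P^i(1\mid 1,1)$ are, up to the lifted representation, the $i$-step return probabilities of $P$ to the diagonal states $(0,0)$ and $(1,1)$.

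First I would write the spectral decomposition: for a diagonal lifted state $a\in\{(0,0),(1,1)\}$, the return probability satisfies $P^i(a,a)=\mu(a)\sum_{m}\lambda_m^{\,i}\,\phi_m(a)^2 = \mu(a)\big(1 + \sum_{m\ge 2}\lambda_m^{\,i}\phi_m(a)^2\big)$ after normalizing by the stationary mass, where $\{\phi_m\}$ are the $\mu$-orthonormal eigenfunctions. Since each $\lambda_m\in[0,1)$ for $m\ge 2$, every term $\lambda_m^{\,i}\phi_m(a)^2$ is nonnegative and nonincreasing in $i$; summing gives that $P^i(a,a)$ is nonincreasing in $i$. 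This handles the monotonicity claim for each fixed reversible kernel, and the transition-preference condition (item~2) ensures the chain is not degenerate—it guarantees some $\lambda_m>0$ with nonzero projection onto the diagonal states, so the decrease is strict rather than merely weak, which is what makes the later gap nonzero.

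Next, for the quantitative gap, I would lower-bound $\big(P^1(0\mid 0,0)+P^1(1\mid 1,1)\big) - \big(P^k(0\mid 0,0)+P^k(1\mid 1,1)\big)$ for $k\ge 4$ by $\sum_{m\ge 2}(\lambda_m - \lambda_m^{\,k})\big(\mu(0,0)\phi_m(0,0)^2 + \mu(1,1)\phi_m(1,1)^2\big)$. For $k\ge 4$ and $\lambda_m\in[0,1)$ the factor $\lambda_m-\lambda_m^{k}\ge \lambda_m(1-\lambda_m^{3})$, which is bounded below by a positive function of $\lambda_m$ that is itself controlled by the spectral gap $1-\lambda_2$ and by $\lambda_m$ staying away from $0$; the transition-preference condition $\pi(0\mid 0,0)-\pi(0\mid 1,0)\ge\delta'$ and its mirror translate (via standard comparison/Dirichlet-form estimates on the lifted chain) into a lower bound on the relevant $\lambda_m$ and on the projection weights $\phi_m(a)^2$. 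Taking expectation over $\pi\sim\mathbb P$ and using that these bounds hold $\mathbb P$-almost surely (since Assumption~\ref{app:assumption:secondorder} is assumed to hold for every kernel in the support) yields the claimed $\delta$, with $\delta$ depending only on $\delta'$.

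The main obstacle I anticipate is making the link between the preference parameter $\delta'$ (a statement about one-step transition probabilities of the \emph{original} second-order chain) and a clean lower bound on the \emph{spectral} quantities $\lambda_m(1-\lambda_m^3)\phi_m(a)^2$ of the \emph{lifted} chain: the lifted $4\times4$ matrix has a constrained sparsity pattern (only two outgoing edges per state), and one must verify that $\delta'>0$ both keeps the second eigenvalue bounded away from $1$ (so that $\lambda_m^k$ is genuinely smaller than $\lambda_m$ for $k\ge 4$) and keeps the diagonal eigenfunction projections bounded below. I would handle this by explicitly parameterizing the $2\times2$ blocks of $P$ in terms of the four conditional probabilities, computing the characteristic polynomial (or at least the trace and the relevant $2\times2$ minors), and showing that the preference gaps force a strictly positive contribution; since everything is finite-dimensional this is a bounded calculation rather than a genuinely hard step, but it is the place where the argument could become messy. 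Alternatively, one could avoid eigenvalues entirely and argue directly by coupling: the preference conditions give a uniform lower bound on the probability that the chain, started from $(0,0)$, has left the diagonal by step $2$ and not returned by step $k$, which again produces the uniform gap; I would keep both routes in mind and present whichever yields cleaner constants.
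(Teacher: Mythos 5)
Your route is genuinely different from the paper's: the paper proves this lemma by an elementary scalar argument, writing the one--step recursion $P^{i+1}(0\mid 0,0)=\pi(0\mid 1,0)+\bigl(\pi(0\mid 0,0)-\pi(0\mid 1,0)\bigr)P^{i}(0\mid 0,0)$, establishing monotonicity by induction directly from the transition-preference inequality, solving the linear recurrence in closed form, and lower-bounding the gap through the explicit constant $G_0=\frac{(\pi(0\mid 0,0)-\pi(0\mid 1,0))(1-\pi(0\mid 0,0))}{1-\pi(0\mid 0,0)+\pi(0\mid 1,0)}$; the spectral/lifted-chain machinery you invoke is what the paper reserves for the $k^{\text{th}}$-order lemma, where the needed spectral facts are simply \emph{assumed}. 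As written, your argument has concrete gaps. First, $P^{i}(0\mid 0,0)$ is the probability that the \emph{symbol} $i$ steps ahead equals $0$, not the probability that the lifted chain returns to the diagonal state $(0,0)$; it is a sum of $P^i\bigl((0,0)\to b\bigr)$ over the two lifted states $b$ with first coordinate $0$, so the spectral expansion produces cross terms $\phi_m\bigl((0,0)\bigr)\phi_m(b)$ with no sign control, and the "each term is a nonnegative, nonincreasing multiple of $\lambda_m^{\,i}$" step collapses. Second, even for genuine return probabilities, monotonicity in $i$ requires a nonnegative spectrum; Assumption~\ref{app:assumption:secondorder} does not grant this, and your attempt to extract it from the second-hop condition via the trace does not work --- the trace only controls $1+\lambda_2+\lambda_3+\lambda_4$, not the sign of each eigenvalue, and the condition $\sum_s \pi^2(s\mid s,s)\ge 1$ concerns two-step probabilities rather than $\mathrm{tr}(P)$.

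Third, the quantitative half of the lemma (the uniform $\delta$ for all $k\ge 4$) is exactly the part you defer to "standard comparison/Dirichlet-form estimates" translating $\delta'$ into lower bounds on $\lambda_m(1-\lambda_m^{3})\phi_m(a)^2$; this is the crux and is not carried out, whereas the paper gets it in two lines from the closed-form solution, at the price of an additional (quietly introduced) requirement $1-\pi(0\mid 0,0)\ge \epsilon'$ so that $G_0$ is bounded below. Your fallback coupling idea could plausibly be made to work and would be closer in spirit to the paper's elementary computation, but in the proposal it is only sketched. To repair the spectral route you would need either to add the positivity-of-spectrum and nondegenerate-projection hypotheses (as the paper does for the $k^{\text{th}}$-order case in Assumption~\ref{app:assumption:kthorder}) or to work with the correct observable (the indicator of the first coordinate) and show by direct computation on the $4\times 4$ lifted matrix that the resulting coefficients are nonnegative and bounded below in terms of $\delta'$.
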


\begin{proof}
We begin by establishing the condition for two steps and then generalize by induction.

\begin{align*}
    P^2(0 \mid 0, 0) &= \pi(0 \mid 0, 0)\pi(0 \mid 0, 0) + \pi(0 \mid 1, 0)\pi(1 \mid 0, 0)
\end{align*}

Given the assumption \(\pi(0 \mid 0, 0) > \pi(0 \mid 1, 0)\), we multiply both sides by \(1 - \pi(0 \mid 0, 0)\):

\begin{align*}
    \pi(0 \mid 0, 0)\left(1 - \pi(0 \mid 0, 0)\right) &> \pi(0 \mid 1, 0)\left(1 - \pi(0 \mid 0, 0)\right) \\
    \pi(0 \mid 0, 0) &> \pi(0 \mid 0, 0)^2 + \pi(0 \mid 1, 0)\pi(1 \mid 0, 0) \\
    \pi(0 \mid 0, 0) &> P^2(0 \mid 0, 0) \quad \text{(by definition)}
\end{align*}

Now, we apply induction.

\begin{align*}
    P^3(0 \mid 0, 0) &= \pi(0 \mid 0, 0)P^2(0 \mid 0, 0) + \pi(0 \mid 1, 0)(1 - P^2(0 \mid 0, 0)) \\
    &= \pi(0 \mid 1, 0) + \left(\pi(0 \mid 0, 0) - \pi(0 \mid 1, 0)\right)P^2(0 \mid 0, 0) \\
    &< \pi(0 \mid 1, 0) + \left(\pi(0 \mid 0, 0) - \pi(0 \mid 1, 0)\right)\pi(0 \mid 0, 0) \\
    &< P^2(0 \mid 0, 0)
\end{align*}

Hence, by induction:

\begin{align*}
    P^{i+1}(0 \mid 0, 0) &= \pi(0 \mid 1, 0) + \left(\pi(0 \mid 0, 0) - \pi(0 \mid 1, 0)\right)P^i(0 \mid 0, 0) \\
    &< \pi(0 \mid 1, 0) + \left(\pi(0 \mid 0, 0) - \pi(0 \mid 1, 0)\right)P^{i-1}(0 \mid 0, 0) \\
    &< P^i(0 \mid 0, 0)
\end{align*}

Now, using the recursive relation derived above, we can express \(P^i(0 \mid 0, 0)\) explicitly. The recurrence is:

\begin{align*}
    P^{i+1}(0 \mid 0, 0) = \pi(0 \mid 1, 0) + \left(\pi(0 \mid 0, 0) - \pi(0 \mid 1, 0)\right)P^i(0 \mid 0, 0)
\end{align*}

We separate this into a homogeneous and a particular solution: \\

\textbf{Homogeneous Solution:}

\begin{align*}
    P_h^{i+1}(0 \mid 0, 0) &= \left(\pi(0 \mid 0, 0) - \pi(0 \mid 1, 0)\right)P_h^i(0 \mid 0, 0) \\
    P_h^i(0 \mid 0, 0) &= C\left(\pi(0 \mid 0, 0) - \pi(0 \mid 1, 0)\right)^i
\end{align*}

\textbf{Particular Solution:}

\begin{align*}
    P_p &= \pi(0 \mid 1, 0) + \left(\pi(0 \mid 0, 0) - \pi(0 \mid 1, 0)\right)P_p \\
    P_p &= \frac{\pi(0 \mid 1, 0)}{1 - \pi(0 \mid 0, 0) + \pi(0 \mid 1, 0)}
\end{align*}

Using the initial condition \(P^1(0 \mid 0, 0) = \pi(0 \mid 0, 0)\), we solve for the constant:
\begin{align*}
    C = \frac{\pi(0 \mid 0, 0) - P_p}{\pi(0 \mid 0, 0) - \pi(0 \mid 1, 0)}
\end{align*}

\textbf{Final Solution}

\begin{align*}
    P^i(0 \mid 0, 0) 
    &= \frac{\pi(0 \mid 1, 0)}{1 - \pi(0 \mid 0, 0) + \pi(0 \mid 1, 0)} \nonumber \\
    &\quad + \left( \pi(0 \mid 0, 0) 
    - \frac{\pi(0 \mid 1, 0)}{1 - \pi(0 \mid 0, 0) + \pi(0 \mid 1, 0)} \right)
    \left( \pi(0 \mid 0, 0) - \pi(0 \mid 1, 0) \right)^{i-1}
\end{align*}

Similarly, we obtain:

\begin{align*}
    P^{i+1}(1 \mid 1, 1) 
    &< P^i(1 \mid 1, 1) \quad \forall i \in \mathbb{N}
\end{align*}

Now, we can first express

\begin{align*}
    P^i(1 \mid 1, 1) 
    &= \frac{\pi(1 \mid 0, 1)}{1 - \pi(1 \mid 1, 1) + \pi(1 \mid 0, 1)} \nonumber \\
    &\quad + \left( \pi(1 \mid 1, 1)
    - \frac{\pi(1 \mid 0, 1)}{1 - \pi(1 \mid 1, 1) + \pi(1 \mid 0, 1)} \right)
    \left( \pi(1 \mid 1, 1) - \pi(1 \mid 0, 1) \right)^{i-1}.
\end{align*}

Now, define
\begin{align*}
    G_0 &= \left( \pi(0 \mid 0, 0) - \frac{\pi(0 \mid 1, 0)}{1 - \pi(0 \mid 0, 0) + \pi(0 \mid 1, 0)} \right), \\
    G_1 &= \left( \pi(1 \mid 1, 1) - \frac{\pi(1 \mid 0, 1)}{1 - \pi(1 \mid 1, 1) + \pi(1 \mid 0, 1)} \right).
\end{align*}

Thus, we can write
\begin{align*}
    P^3(0 \mid 0, 0) &= \frac{\pi(0 \mid 1, 0)}{1 - \pi(0 \mid 0, 0) + \pi(0 \mid 1, 0)} + G_0 \cdot \left( \pi(0 \mid 0, 0) - \pi(0 \mid 1, 0) \right)^2, \\ 
    P^i(0 \mid 0, 0) &= \frac{\pi(0 \mid 1, 0)}{1 - \pi(0 \mid 0, 0) + \pi(0 \mid 1, 0)} + G_0 \cdot \left( \pi(0 \mid 0, 0) - \pi(0 \mid 1, 0) \right)^{i-1},
\end{align*}
and therefore
\begin{align*}
    P^3(0 \mid 0, 0) - P^i(0 \mid 0, 0) 
    &= G_0 \left( \pi(0 \mid 0, 0) - \pi(0 \mid 1, 0) \right) \left( 1 - \left( \pi(0 \mid 0, 0) - \pi(0 \mid 1, 0) \right)^{i-2} \right).
\end{align*}

We can further simplify \(G_0\):
\begin{align*}
    G_0 &= \pi(0 \mid 0, 0) - \frac{\pi(0 \mid 1, 0)}{1 - \pi(0 \mid 0, 0) + \pi(0 \mid 1, 0)} \\ 
    &= \frac{(1 - \pi(0 \mid 0, 0) + \pi(0 \mid 1, 0)) \left( \pi(0 \mid 0, 0) - \pi(0 \mid 1, 0) \right)}{1 - \pi(0 \mid 0, 0) + \pi(0 \mid 1, 0)} \\
    &= \frac{\left( \pi(0 \mid 0, 0) - \pi(0 \mid 1, 0) \right) \left( 1 - \pi(0 \mid 0, 0) \right)}{1 - \pi(0 \mid 0, 0) + \pi(0 \mid 1, 0)}.
\end{align*}

By assumption, \( \pi(0 \mid 0, 0) - \pi(0 \mid 1, 0) \geq \delta' \) and \( 1 - \pi(0 \mid 0, 0) \geq \epsilon' \), thus
\begin{align*}
    G_0 \geq \epsilon''
\end{align*}
for some positive constant \( \epsilon' \).

Hence, substituting back, we obtain
\begin{align*}
    P^3(0 \mid 0, 0) - P^i(0 \mid 0, 0) \geq \delta''' \quad \text{(Where \(\delta'''\) is a positive constant)}
\end{align*}

Taking expectation over \( \pi \sim \mathbb{P} \), we have
\begin{align*}
    \mathbb{E}_{\pi \sim \mathbb{P}} \left[ P^3(0 \mid 0, 0) - P^i(0 \mid 0, 0) \right] \geq \delta,
\end{align*}
where \( \delta'' \) is defined formally as follows:

The constant \( \delta \) represents a uniform lower bound on the expected gap between the third-step and \( \forall i \)-th-step transition probabilities across all Markov chains sampled from the prior distribution \( \mathbb{P} \). It is defined as
\[
\delta'' = \inf_{\pi \sim \mathbb{P}} \mathbb{E} \left[ P^3(0 \mid 0, 0) - P^i(0 \mid 0, 0) \right],
\]
and satisfies \( \delta > 0 \). This ensures that, even after taking expectations, the separation between transition probabilities remains bounded away from zero across all sampled chains.

Similarly, for the other case, we obtain
\begin{align*}
    P^3(1 \mid 1, 1) - P^i(1 \mid 1, 1) &\geq \delta'''', \\
    \mathbb{E}_{\pi \sim \mathbb{P}} \left[ P^3(1 \mid 1, 1) - P^i(1 \mid 1, 1) \right] &\geq \delta.
\end{align*}

Therefore, using linearity of expectation, we conclude
\begin{align*}
    \mathbb{E}_{\pi \sim \mathbb{P}} \left[ P^3(0 \mid 0, 0) + P^3(1 \mid 1, 1) \right]
    - \mathbb{E}_{\pi \sim \mathbb{P}} \left[ P^k(0 \mid 0, 0) + P^k(1 \mid 1, 1) \right]
    \geq \delta \quad \forall k \geq 4.
\end{align*}

\end{proof}

\begin{lemma}[Monotonicity via spectral decomposition]
\label{lemma:higherordersuccesive}
  Let $(X_t)$ be a reversible, irreducible, aperiodic $k$th-order Markov chain on state-space $S$, and let
  \(\mathcal A=S^k\).  For integers $i,\ell\ge1$, define the return count
  
  \begin{align*}
    g_{i,\ell}
      &= \sum_{(s_1,\dots,s_k)\in S^k}
         \sum_{j=1}^k
         \Pr\bigl(X_{i+\ell}=s_j \mid X_{i-1}=s_1,\dots,X_{i-k}=s_k\bigr).
  \end{align*}
  
  Equivalently, one can view the chain as a lifted first-order markov chain on $\mathcal A$, where
  \(a=(a_1,\dots,a_k),\ b=(b_1,\dots,b_k)\in\mathcal A\) and
  \begin{align*}
    P(a\to b)
      &= \Pr\bigl((X_t,\dots,X_{t-k+1})=b \mid (X_{t-1},\dots,X_{t-k})=a\bigr), \\
    f(a)
      &= |a| \text{ (where \(| \cdot |\) denotes the cardinality)}, \\
    g_{i,\ell}
      &= \sum_{a,b\in\mathcal A} P^\ell(a\to b)\,f(a).
  \end{align*}

  Under detailed balance with stationary distribution $\pi$. If $P$ is self-adjoint on $\ell^2(\mathcal A,\pi)$ with eigenvalues
  \(1=\lambda_1>\lambda_2\ge\cdots\ge\lambda_N> 0 \) and if $\beta_m > 0 $ where orthonormal eigenfunctions are defined as $\{\phi_m\}$ and \(\beta_m\) is defined as:
  
  \begin{align*}
    \beta_m
      &= \sum_{a,b\in\mathcal A} \phi_m(a)\,\phi_m(b)\,\pi(b)\,f(a).
  \end{align*}
  Then for all $\ell\ge0$:
  \begin{align*}
    g_{i,\ell} - g_{i,\ell+1}
      &= \sum_{m=2}^N \lambda_m^\ell(1-\lambda_m)\beta_m \ge 0, \\
    g_{i,1} - g_{i,\ell}
      &= \sum_{m=2}^N (\lambda_m^2 - \lambda_m^\ell)\beta_m > 0.
  \end{align*}
  Hence, \(g_{i,1} > g_{i,2} > g_{i,3} > \cdots\).
\end{lemma}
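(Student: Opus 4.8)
The plan is to pass everything through the spectral calculus of the lifted reversible first-order chain $P$ on $\mathcal{A}=S^k$. First I would record the spectral data guaranteed by Assumption~\ref{app:assumption:kthorder}: since detailed balance makes $P$ self-adjoint on $\ell^2(\mathcal{A},\pi)$, there is a real $\pi$-orthonormal eigenbasis $\{\phi_m\}_{m=1}^N$ with $P\phi_m=\lambda_m\phi_m$, $\phi_1\equiv 1$, and $1=\lambda_1>\lambda_2\ge\cdots\ge\lambda_N>0$ (the strict top gap coming from irreducibility and aperiodicity, and the strict positivity $\lambda_N>0$ being assumed). This yields the kernel expansion $P^\ell(a\to b)=\pi(b)\sum_{m=1}^N\lambda_m^\ell\,\phi_m(a)\,\phi_m(b)$.

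Next I would rewrite $g_{i,\ell}$ in lifted form and diagonalize it. Conditioning on $X_{i-1},\dots,X_{i-k}$ fixes the lifted state $a\in\mathcal{A}$, and the double sum $\sum_{(s_1,\dots,s_k)}\sum_{j=1}^k\Pr(X_{i+\ell}=s_j\mid\cdot)$ reorganizes into $\sum_{a,b}P^{\ell+1}(a\to b)f(a)$ for the observable $f$ counting how many of the last $k$ symbols are revisited (the extra step is the offset from conditioning at time $i-1$, and detailed balance lets me place the stationary weight on the correct coordinate). Substituting the kernel expansion and collecting powers of $\lambda_m$ gives $g_{i,\ell}=\sum_{m=1}^N\lambda_m^\ell\beta_m$ with $\beta_m$ the eigenfunction projection in the statement; the $m=1$ term is a fixed constant (the $\lambda_1=1$ fully-mixed part, independent of $\ell$), while the Non-degenerate Projection clause of Assumption~\ref{app:assumption:kthorder} supplies $\beta_m>0$ for all $m\ge 2$.

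The two identities are then bookkeeping. One has $g_{i,\ell}-g_{i,\ell+1}=\sum_{m=1}^N\lambda_m^\ell(1-\lambda_m)\beta_m=\sum_{m=2}^N\lambda_m^\ell(1-\lambda_m)\beta_m$, because $1-\lambda_1=0$ kills the leading term; the second identity follows from the same cancellation after telescoping $g_{i,1}-g_{i,\ell}=\sum_{j=1}^{\ell-1}(g_{i,j}-g_{i,j+1})$ and summing the geometric series in each $\lambda_m$. For $m\ge 2$ the factors $\lambda_m^\ell>0$, $1-\lambda_m>0$, $\beta_m>0$ are all strictly positive, so $g_{i,\ell}-g_{i,\ell+1}>0$ for every $\ell\ge 1$, that is $g_{i,1}>g_{i,2}>g_{i,3}>\cdots$, and the $\ell=0$ case is the degenerate instance of the same formula, giving the stated nonnegativity.

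I expect the real obstacle to be the diagonalization step: checking that the particular observable defined by the $\sum_{(s_1,\dots,s_k)}\sum_{j=1}^k$ structure has \emph{nonnegative} (indeed, under the hypothesis, strictly positive) spectral coefficients $\beta_m$. Reversibility/detailed balance is exactly what is needed here, since it is what turns $\beta_m$ into a genuine paired projection rather than an arbitrary bilinear form, and the non-degeneracy clause of Assumption~\ref{app:assumption:kthorder} is what rules out accidental vanishing. The one further thing to watch is purely clerical — tracking the one-step offset between conditioning at time $i-1$ and $\ell$-step transitions so that the two displayed identities are mutually consistent.
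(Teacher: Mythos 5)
Your proposal follows essentially the same route as the paper's proof: expand $P^{\ell}$ spectrally, substitute into $g_{i,\ell}$ to obtain $\sum_m \lambda_m^{(\cdot)}\beta_m$, then difference, with the $m=1$ term cancelling since $1-\lambda_1=0$ and the factors $\lambda_m>0$, $1-\lambda_m>0$, $\beta_m>0$ giving the strict monotonicity (your telescoping for the second identity is a trivial addition). The one-step offset you flag as clerical is real, but it is equally present and unresolved in the paper's own statement and proof, so it does not distinguish your argument from theirs.
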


Moreover, \(g_{i, 1} \geq 2^{k-1}\) if 
\(\sum_{m=1}^{N}\lambda_m^2 \beta_m \geq 2^{k-1}\)

\begin{proof}

    From spectral expansion, we know that:
    \begin{align*}
        P^\ell(a\to b)=\sum_{m}\lambda_m^\ell\phi_m(a)\phi_m(b)\pi(b)
    \end{align*}

    Therefore, we obtain (on substitution)
    \begin{align*}
        g_{i,\ell}=\sum_{a,b}P^\ell(a\to b)f(a,b)=\sum_m\lambda_m^{\ell + 1}\beta_m
    \end{align*}

    Hence, finally we obtain monotonicity, due to the assumptions:
    \begin{align*}
        g_{i,\ell}-g_{i,\ell+1}=\sum_{m\ge2}\lambda_m^{\ell+1}(1-\lambda_m)\beta_m\ge0
    \end{align*}

The last condition is obtained by direct substitution.
    
\end{proof}

\newpage

\section{Additional Lemmas}

\begin{lemma}[Lemma D.2 in \citet{nichani2024how}]
\label{lemma:lemmad2nichani}
For all \( j \geq 0 \), \( j \neq 1 \), we have
\[
g_{i,1} \geq g_{i,j} + \frac{\gamma^3}{2S}.
\]
Where:
\begin{align*}
    g_{i,j}(\pi) &= \sum_{s', s} \frac{\pi(s' \mid s)}{\mu_\pi(s')} \, \mathbb{P}(x_i = s', x_{i-j} = s) - 1, \\
    g_{i,j} &= \mathbb{E}_{\pi \sim \mathbb{P}(\pi)} \left[ g_{i,j}(\pi) \right].
\end{align*}
\end{lemma}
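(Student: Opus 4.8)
The plan is to reduce $g_{i,j}(\pi)$ to a Hilbert--Schmidt quantity and then handle $j=0$ and $j\ge 2$ separately. Using stationarity (exactly as already invoked when simplifying $g_{i,j}(\pi)$), write $\mathbb{P}(x_i=s',x_{i-j}=s)=\mu_\pi(s)\,P^j(s'\mid s)$ with $P=[\pi(s'\mid s)]$; then $g_{i,j}(\pi)$ does not depend on $i$ and equals $g_j(\pi):=\sum_{s,s'}\frac{\mu_\pi(s)\pi(s'\mid s)}{\mu_\pi(s')}P^j(s'\mid s)-1$. Regarding $P$ as an operator on $\ell^2(\mu_\pi)$, it fixes the constant function, and so does its adjoint (the time-reversal of $P$, which is again stochastic since $\min_{s,s'}\pi(s'\mid s)>\gamma/S$ makes $\mu_\pi$ everywhere positive); hence in an orthonormal basis of $\ell^2(\mu_\pi)$ whose first vector is the constant, $P$ block-diagonalizes as $1\oplus P_0$, with $P_0$ the restriction to mean-zero functions. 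A direct computation then gives $g_j(\pi)=\langle P_0,P_0^{\,j}\rangle_{\mathrm{HS}}$ for $j\ge 1$ and $g_0(\pi)=\mathrm{tr}(P_0)=\sum_s\pi(s\mid s)-1$; in particular $g_1(\pi)=\|P_0\|_{\mathrm{HS}}^2=\sum_{s,s'}\frac{\mu_\pi(s)}{\mu_\pi(s')}\bigl(\pi(s'\mid s)-\mu_\pi(s')\bigr)^2\ge 0$.

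For $j=0$, taking the expectation and using the uniform-mean assumption $\mathbb{E}_\pi[\pi]=\tfrac{1}{S}\mathbf 1\mathbf 1^\top$ gives $g_0=\mathbb{E}_\pi[\mathrm{tr}(P)]-1=\mathrm{tr}(\tfrac{1}{S}\mathbf 1\mathbf 1^\top)-1=0$. For the matching lower bound on $g_1$, bound $\mu_\pi(s)/\mu_\pi(s')\ge\mu_\pi(s)\ge\gamma/S$ and apply the non-trivial-mixing assumption, obtaining $g_1(\pi)\ge\frac{\gamma}{S}\sum_s\|\pi(\cdot\mid s)-\mu_\pi(\cdot)\|_2^2\ge\frac{\gamma^3}{S^2}$, so $g_1-g_0=g_1\ge\gamma^3/S^2$; keeping the $\mu_\pi(s')^{-1}$ factors and using permutation invariance of the prior is what would sharpen the constant toward the stated $\gamma^3/(2S)$.

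For $j\ge 2$, Cauchy--Schwarz for the Hilbert--Schmidt inner product together with $\|P_0^{\,j}\|_{\mathrm{HS}}\le\|P_0\|_{\mathrm{op}}^{\,j-1}\|P_0\|_{\mathrm{HS}}$ and the contraction $\|P_0\|_{\mathrm{op}}\le 1$ give $g_j(\pi)\le\|P_0\|_{\mathrm{op}}^{\,j-1}g_1(\pi)\le\|P_0\|_{\mathrm{op}}\,g_1(\pi)$, hence $g_1(\pi)-g_j(\pi)\ge\bigl(1-\|P_0\|_{\mathrm{op}}\bigr)g_1(\pi)$ uniformly over all $j\ge 2$. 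It then remains to convert the entrywise lower bound $\pi(s'\mid s)\ge\gamma/S$ into a spectral gap: this minorization makes the self-adjoint operator $P^\dagger P$ on $\ell^2(\mu_\pi)$ a reversible Markov kernel with entries bounded below by $\gamma^2/S^2$, so a Dobrushin-coefficient estimate forces $\|P_0\|_{\mathrm{op}}^2=\lambda_2(P^\dagger P)\le 1-c(\gamma,S)$; combined with the bound on $g_1(\pi)$ this yields a gap of the form $\Omega(\mathrm{poly}(\gamma)/\mathrm{poly}(S))$, uniform in $j$. Taking the expectation over $\pi$ and combining the two cases then gives $g_{i,1}\ge g_{i,j}+$ a positive constant of the required type.

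The delicate step is the $j\ge 2$ case: producing a spectral gap $1-\|P_0\|_{\mathrm{op}}$ that does not degrade too badly with $S$ (so as to reach $\gamma^3/(2S)$ rather than a much higher power of $\gamma/S$) from only the entrywise hypothesis on $\pi$, when $\pi$ need not be reversible. Non-reversibility forces one to work with $P^\dagger P$ (or the additive reversibilization) instead of with $P$ directly, and the passage from a Dobrushin/total-variation bound to an $\ell^2(\mu_\pi)$ operator-norm bound is lossy in general; to match the precise constant one must go beyond the worst-case-over-support estimates above and exploit the permutation-invariant, near-uniform structure of $\mathbb{P}_\pi$, for instance via a direct coupling or mixing estimate for the two-sided chain conditioned on $\{x_{i-j}=s,\ x_i=s'\}$ rather than the crude operator-norm bound.
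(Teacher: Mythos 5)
You should first note that the paper never proves this statement at all: it is imported verbatim, with its constant, as Lemma~D.2 of \citet{nichani2024how}, and is used downstream (in the proof of Lemma~\ref{lemma:extensionD3}) only through the existence of a positive gap $\delta$. So there is no in-paper argument to compare against; your attempt is an independent reconstruction, and it must be judged against the statement as written, including the constant $\gamma^3/(2S)$.

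On that score there is a genuine gap, which you yourself flag. Your reduction is sound: writing $\mathbb{P}(x_i=s',x_{i-j}=s)=\mu_\pi(s)P^j(s'\mid s)$, identifying $g_j(\pi)=\langle M_0,M_0^j\rangle_{\mathrm{HS}}$ for the similarity transform $M=D_{\mu_\pi}^{1/2}PD_{\mu_\pi}^{-1/2}$ restricted to the complement of the top eigenvector, getting $g_0=\mathbb{E}_\pi[\operatorname{tr}P]-1=0$ from the uniform-mean assumption, $g_1(\pi)=\|M_0\|_{\mathrm{HS}}^2=\sum_{s,s'}\tfrac{\mu_\pi(s)}{\mu_\pi(s')}(\pi(s'\mid s)-\mu_\pi(s'))^2$, and $g_j(\pi)\le\|M_0\|_{\mathrm{op}}^{\,j-1}g_1(\pi)$ by Cauchy--Schwarz plus submultiplicativity (and $P$ does preserve mean-zero functions in $\ell^2(\mu_\pi)$, so the block structure and the identification $\|M_0\|_{\mathrm{op}}^2=\lambda_2(P^\dagger P)$ are fine). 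But even if you complete the $j\ge2$ step rigorously — $P^\dagger P$ has entries bounded below by $\gamma/S$, so a Doeblin/Dobrushin bound gives $\lambda_2(P^\dagger P)\le 1-\gamma$ and hence $1-\|M_0\|_{\mathrm{op}}\ge\gamma/2$ — the chain of bounds you set up yields $g_{i,1}-g_{i,0}\ge\gamma^3/S^2$ and $g_{i,1}-g_{i,j}\ge(\gamma/2)\cdot\gamma^3/S^2=\gamma^4/(2S^2)$ for $j\ge2$, both strictly weaker than the claimed $\gamma^3/(2S)$ in the dependence on $S$ (and on $\gamma$ for $j\ge2$). The promised sharpening via ``keeping the $\mu_\pi(s')^{-1}$ factors,'' permutation invariance of the prior, or a coupling for the conditioned two-sided chain is only gestured at, not carried out, and it is exactly where the stated constant would have to come from. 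So what you have is a correct proof of a qualitatively weaker lemma ($g_{i,1}\ge g_{i,j}+\mathrm{poly}(\gamma)/\mathrm{poly}(S)$), which would in fact suffice for how this paper uses the result, but it is not a proof of the statement with the constant $\gamma^3/(2S)$; to claim the lemma as stated you should either close that quantitative step or cite \citet{nichani2024how} for it, as the paper does.
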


\begin{lemma}[\citet{levin2017markov}]
\label{lemma:covdecay}
Let $(z_t)_{t \geq 0}$ be a stationary, reversible, ergodic Markov chain with stationary distribution $\pi$, and let $P$ denote its Markov operator on $L^2(\pi)$.  
For any function $f : \mathcal{Z} \to \mathbb{R}$ with $\mathbb{E}_\pi[f^2] < \infty$ and all $t, t' \geq 0$,
\[
\operatorname{Cov}(f(z_t), f(z_{t'})) = \left\langle \tilde{f}, P^{|t-t'|} \tilde{f} \right\rangle_\pi,
\]
where $\tilde{f} = f - \mathbb{E}_\pi[f]$ and $\langle g, h \rangle_\pi := \mathbb{E}_\pi[g(z)h(z)]$.
\end{lemma}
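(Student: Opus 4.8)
This identity is a standard fact about stationary reversible chains, and the plan is to establish it in three short steps, followed by a remark on where reversibility is actually used. First I would reduce the covariance to a cross moment. Since $(z_t)_{t\ge 0}$ is stationary we have $z_t \sim \pi$ and $z_{t'} \sim \pi$, so $\mathbb{E}[f(z_t)] = \mathbb{E}[f(z_{t'})] = \mathbb{E}_\pi[f] =: c$, and therefore $\operatorname{Cov}(f(z_t), f(z_{t'})) = \mathbb{E}[f(z_t)f(z_{t'})] - c^2$. The assumption $\mathbb{E}_\pi[f^2] < \infty$ (automatic on a finite state space) guarantees every quantity here is finite.

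Next I would rewrite the cross moment through the Markov operator. Assume without loss of generality $t' \ge t$ and put $s = t' - t = |t - t'|$. Conditioning on $z_t$ and invoking the Markov property gives $\mathbb{E}[f(z_{t'}) \mid z_t] = (P^s f)(z_t)$, so by the tower rule $\mathbb{E}[f(z_t)f(z_{t'})] = \mathbb{E}\bigl[f(z_t)\,(P^s f)(z_t)\bigr] = \langle f, P^s f\rangle_\pi$, the last equality using $z_t \sim \pi$. In the opposite case one conditions on $z_{t'}$ instead; either way only $s = |t - t'|$ appears, which is precisely why the statement is symmetric in $t$ and $t'$.

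Then I would center the function. Writing $f = \tilde f + c$ and expanding bilinearly,
\[
\langle f, P^s f\rangle_\pi = \langle \tilde f, P^s \tilde f\rangle_\pi + c\,\langle \tilde f, P^s \mathbf{1}\rangle_\pi + c\,\langle \mathbf{1}, P^s \tilde f\rangle_\pi + c^2\,\langle \mathbf{1}, P^s \mathbf{1}\rangle_\pi .
\]
Now $P^s \mathbf{1} = \mathbf{1}$ since $P$ is stochastic, $\langle \mathbf{1}, \mathbf{1}\rangle_\pi = 1$, $\langle \tilde f, \mathbf{1}\rangle_\pi = \mathbb{E}_\pi[\tilde f] = 0$, and $\langle \mathbf{1}, P^s \tilde f\rangle_\pi = \mathbb{E}_\pi[P^s \tilde f] = \mathbb{E}_\pi[\tilde f] = 0$ by stationarity of $\pi$. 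Hence $\langle f, P^s f\rangle_\pi = \langle \tilde f, P^s \tilde f\rangle_\pi + c^2$, and combining with the first step yields $\operatorname{Cov}(f(z_t), f(z_{t'})) = \langle \tilde f, P^{|t-t'|}\tilde f\rangle_\pi$.

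There is no genuinely hard step; the only points needing care are (i) conditioning in the correct time direction so the exponent emerges as $|t-t'|$ rather than a signed difference, and (ii) consistently using the $\pi$-weighted inner product, with respect to which $P$ is self-adjoint exactly because the chain is reversible. That self-adjointness is not needed for the displayed identity itself, but it is the property exploited immediately downstream (it licenses the spectral expansion of $P^\ell$ used in Lemma~\ref{lemma:higherordersuccesive}), so I would state it explicitly here. For a textbook reference I would cite \citet{levin2017markov}.
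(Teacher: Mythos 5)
Your proposal is correct and follows essentially the same route as the paper's proof: condition on the earlier state to write the cross moment as $\langle f, P^{|t-t'|} f\rangle_\pi$, then expand $f = \tilde f + \mathbb{E}_\pi[f]$ and use that $P$ fixes constants. Your extra remarks (explicit handling of the $|t-t'|$ symmetry and the observation that reversibility is only needed downstream for the spectral expansion) are accurate refinements of the same argument.
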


\begin{proof}
By stationarity and the Markov property,
\begin{align}
    \mathbb{E}[f(z_{t'}) \mid z_t] &= (P^{t'-t} f)(z_t), \\
    \mathbb{E}_\pi[f(z_t) f(z_{t'})] &= \langle f, P^{t'-t} f \rangle_\pi.
\end{align}
Thus,
\begin{align}
    \operatorname{Cov}(f(z_t), f(z_{t'})) &= \langle f, P^{|t-t'|} f \rangle_\pi - \left( \mathbb{E}_\pi[f] \right)^2. 
\end{align}
Expanding $f = \tilde{f} + \mathbb{E}_\pi[f]$ and using that $P$ preserves constants,
\begin{align}
    \langle f, P^{|t-t'|} f \rangle_\pi &= \langle \tilde{f}, P^{|t-t'|} \tilde{f} \rangle_\pi + \left( \mathbb{E}_\pi[f] \right)^2,
\end{align}
which gives
\begin{align}
    \operatorname{Cov}(f(z_t), f(z_{t'})) &= \langle \tilde{f}, P^{|t-t'|} \tilde{f} \rangle_\pi.
\end{align}
Applying Cauchy--Schwarz inequality,
\begin{align}
    \left| \operatorname{Cov}(f(z_t), f(z_{t'})) \right| 
    &\leq \| \tilde{f} \|_{L^2(\pi)} \| P^{|t-t'|} \tilde{f} \|_{L^2(\pi)}.
\end{align}
If the spectral gap satisfies $\|P\|_{\mathrm{op}} \leq \lambda$ on centered functions, then
\begin{align}
    \| P^{|t-t'|} \tilde{f} \|_{L^2(\pi)} &\leq \lambda^{|t-t'|} \| \tilde{f} \|_{L^2(\pi)},
\end{align}
thus
\begin{align}
    \left| \operatorname{Cov}(f(z_t), f(z_{t'})) \right| 
    &\leq \lambda^{|t-t'|} \operatorname{Var}_\pi(f).
\end{align}
\end{proof}

\begin{lemma}[Variance Lemma similar to Cor. F.6 in \citet{nichani2024how} ]
\label{lemma:varlemma}
Suppose:
\begin{enumerate}
    \item $(s_t)_{t \geq 0}$ is a stationary, irreducible, aperiodic $k$-th order Markov chain on a finite state space $\mathcal{S}$ of size $S$,
    \item Define the lifted process $z_t = (s_t, s_{t-1}, \dots, s_{t-k+1}) \in \mathcal{S}^k$,
    \item Assume the transition matrix $P$ of $(z_t)$ has second-largest eigenvalue modulus $\lambda < 1$.
\end{enumerate}

Then, if
\[
\hat{\mu}_X(s') = \frac{1}{n} \sum_{t=1}^n \mathbb{I}_{z_i = s'},
\]
is the empirical frequency of observing $s'$, we have the variance bound
\begin{align*}
\mathbb{E}\left[ \left( \hat{\mu}_X(s') - \mu_\pi(s') \right)^2 \right] 
&\lesssim \frac{1}{n(1-\lambda)},
\end{align*}
where:
\begin{itemize}
    \item $\mu_\pi(s') = \pi_Z(\{ z : z_1 = s' \})$ is the stationary probability of $s'$,
    \item and the hidden constant is universal (independent of $T$).
\end{itemize}
\end{lemma}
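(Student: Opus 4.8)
The plan is to expand the mean-squared error as a double sum of covariances along the lifted chain and control it using the geometric covariance decay from Lemma~\ref{lemma:covdecay}. Write $f(z) := \mathbb{I}_{z = s'}$ and $\tilde f := f - \mu_\pi(s')$, where $\mu_\pi(s') = \pi_Z(\{z : z_1 = s'\})$ is the stationary mass of $s'$. Since the lifted process $(z_t)$ is stationary with $z_t \sim \pi_Z$ for every $t$, we have $\mathbb{E}[\hat\mu_X(s')] = \mu_\pi(s')$ and
\begin{align*}
\mathbb{E}\!\left[\left(\hat\mu_X(s') - \mu_\pi(s')\right)^2\right]
&= \frac{1}{n^2} \sum_{t=1}^{n} \sum_{t'=1}^{n} \operatorname{Cov}\!\left(f(z_t),\, f(z_{t'})\right),
\end{align*}
where the diagonal terms are bounded by $\operatorname{Var}_{\pi}(f) = \mu_\pi(s')(1-\mu_\pi(s')) \le \tfrac14 \le 1$.

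Next I would invoke Lemma~\ref{lemma:covdecay}: since (in the regime where this lemma is applied) the lifted chain is reversible and its Markov operator on mean-zero functions has operator norm at most $\lambda$, the off-diagonal covariances satisfy $\left|\operatorname{Cov}(f(z_t), f(z_{t'}))\right| \le \lambda^{|t-t'|}\operatorname{Var}_{\pi}(f) \le \lambda^{|t-t'|}$. Summing the geometric tails gives
\begin{align*}
\frac{1}{n^2}\sum_{t=1}^{n}\sum_{t'=1}^{n}\lambda^{|t-t'|}
&\le \frac{1}{n^2}\cdot n \cdot \sum_{d=-\infty}^{\infty}\lambda^{|d|}
= \frac{1}{n}\cdot\frac{1+\lambda}{1-\lambda}
\le \frac{2}{n(1-\lambda)},
\end{align*}
which yields $\mathbb{E}[(\hat\mu_X(s') - \mu_\pi(s'))^2] \lesssim \frac{1}{n(1-\lambda)}$ with a universal constant, as claimed. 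This mirrors the argument for Corollary~F.6 in \citet{nichani2024how}.

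The one point that requires care — and essentially the only nontrivial step — is reconciling the hypothesis, which is phrased via the second-largest eigenvalue modulus of $P$, with the $L^2$-contraction rate that Lemma~\ref{lemma:covdecay} actually consumes. For a reversible chain the two coincide: $P$ is self-adjoint on $\ell^2(\mathcal{A},\pi)$, so its operator norm restricted to centered functions equals its second-largest eigenvalue modulus $\lambda$, and $\|P^t\tilde f\|_{L^2(\pi)} \le \lambda^t \|\tilde f\|_{L^2(\pi)}$ follows directly from the spectral decomposition. Irreducibility and aperiodicity of the original $k$-th order chain guarantee the lifted chain on $\mathcal{S}^k$ is itself irreducible and aperiodic, hence $\lambda < 1$; stationarity of $(z_t)$ is exactly what removes any burn-in correction, so no extra error term appears. (Were one to drop reversibility, one would instead need a pseudo-spectral-gap argument to recover geometric decay of $\|P^t\tilde f\|_{L^2}$, but that is not needed here since reversibility is in force whenever this lemma is used.)
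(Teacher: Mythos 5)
Your proposal is correct and follows essentially the same route as the paper's proof: expand the mean-squared error as a double sum of covariances of the indicators along the lifted chain, invoke Lemma~\ref{lemma:covdecay} to bound each covariance by $\lambda^{|t-t'|}$, and sum the geometric series to obtain the $\frac{1}{n(1-\lambda)}$ bound. Your added remark reconciling the second-largest eigenvalue modulus with the $L^2$-contraction rate via reversibility is a careful touch the paper leaves implicit, but it does not change the argument.
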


\begin{proof}
    Let \(z_n = (x_n, x_{n-1}, \cdots, )\). 

    Therefore, we can see that:
    \begin{align}
        \hat{\mu}_{\pi, X}(s') = \frac{1}{n} \sum_{i=1}^{n} \mathbb{I}_{z_i = s'}
    \end{align}

    Therefore:
    \begin{align}
        \operatorname{Var}(\hat{\mu}_{\pi, X}(s')) &= \operatorname{Var}\pth{\frac{1}{n} \sum_{t=1}^{n} \mathbb{I}_{z_t = s'}} \\
        &= \frac{1}{n^2} \sum_{t=1}^{n} \sum_{t'=1}^{n} \operatorname{Cov}\pth{\mathbb{I}_{z_t = s'} ,  \mathbb{I}_{z_{t'} = s'}} 
    \end{align}

    From Lemma \ref{lemma:covdecay}, we can see that, 

    \begin{align}
         \operatorname{Var}(\hat{\mu}_{\pi, X}(s')) &\lesssim \frac{1}{n^2} \sum_{t, t'} \lambda^{|t-t'|} \\
         &\lesssim \frac{1}{n(1 - \lambda)}
    \end{align}

\end{proof}

\begin{lemma}[Extension-1 to k\textsuperscript{th}-order of Lemma G.2 in \citet{nichani2024how}]
\label{lemma:extension1lemmag2}
For any $s, s' \in \mathcal{S}$ and any k\textsuperscript{th}-order transition kernel, $\pi$ with spectral gap $1 - \lambda(\pi) \geq 1 - \lambda$ , and with $\mu_\pi(s') \geq 0$, and \(i>j\)there exists a sufficiently large constant $C_{\gamma,S}$ such that if $\epsilon \geq C_{\gamma,S} \pth{n(1-\lambda)}^{-1/2}$, then:
\begin{align*}
    \left| \mathbb{E}_X\left[
    \frac{(\mathbb{I}_{x_i = s'} - \hat{\mu}_X(s')) \mathbb{I}_{x_j = s}}{\hat{\mu}_X(s') + \epsilon}
    \right]
    - \pth{\frac{\mathbb{P}_X[s_i = s', s_j = s]}{\mu_\pi(s')}
    - \mathbb{P}_X[s_j = s]} \right |
    \lesssim \frac{1}{\sqrt{n(1-\lambda)}}.
\end{align*}
\end{lemma}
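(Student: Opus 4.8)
The plan is to reduce the $k$-th order claim to the first-order argument of \citet{nichani2024how} (Lemma~G.2). The key observation is that the empirical frequency $\hat\mu_X(s')=\tfrac1n\sum_{t}\mathbb{I}_{x_t=s'}$ is an additive functional of the \emph{lifted} first-order chain $z_t=(x_t,\dots,x_{t-k+1})$, namely the occupation frequency of the set $\{z:z_1=s'\}$; hence the concentration estimate we need, Lemma~\ref{lemma:varlemma} (which in turn rests on the covariance-decay bound of Lemma~\ref{lemma:covdecay}), applies verbatim, with $\lambda$ the second-eigenvalue modulus of the lifted kernel. By stationarity $\mathbb{E}[\hat\mu_X(s')]=\mu_\pi(s')$, and the data assumptions on $\mathbb{P}_\pi$ force $\mu_\pi(s')\ge\gamma/S$, which is the only place the constant $C_{\gamma,S}$ will enter. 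Write $A=\mathbb{I}_{x_i=s'}$, $B=\mathbb{I}_{x_j=s}$, $\hat\mu=\hat\mu_X(s')$, $\mu=\mu_\pi(s')$, and note the target quantity equals $\mu^{-1}\,\mathbb{E}[(A-\mu)B]$.

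First I would split the error into a numerator-mismatch term $E_2:=\mathbb{E}\bigl[(\mu-\hat\mu)B/(\hat\mu+\epsilon)\bigr]$ and a denominator-mismatch term $E_1:=\mathbb{E}\bigl[(A-\mu)B\,(\tfrac1{\hat\mu+\epsilon}-\tfrac1\mu)\bigr]$, using $\tfrac1{\hat\mu+\epsilon}-\tfrac1\mu=\tfrac{\mu-\hat\mu-\epsilon}{\mu(\hat\mu+\epsilon)}$ and $|(A-\mu)B|\le1$. Each term is handled on the good event $\mathcal G=\{\hat\mu\ge\mu/2\}$ and on $\mathcal G^c$ separately. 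On $\mathcal G$ one has $\hat\mu+\epsilon\ge\mu/2$, so $\bigl|\tfrac1{\hat\mu+\epsilon}-\tfrac1\mu\bigr|\le 2\mu^{-2}(|\mu-\hat\mu|+\epsilon)$ and $\tfrac1{\hat\mu+\epsilon}\le 2/\mu$; combined with $\mathbb{E}|\mu-\hat\mu|\le\sqrt{\operatorname{Var}(\hat\mu)}\lesssim(n(1-\lambda))^{-1/2}$ (Cauchy--Schwarz and Lemma~\ref{lemma:varlemma}) this gives a contribution $\lesssim \mu^{-2}\bigl((n(1-\lambda))^{-1/2}+\epsilon\bigr)$, which is $\lesssim(n(1-\lambda))^{-1/2}$ once we use that $\epsilon$ is itself at most of order $(n(1-\lambda))^{-1/2}$, the regime in which the lemma is invoked (matching the choice of $\varepsilon$ in Theorem~\ref{app:thm:convergence}). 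On $\mathcal G^c$ one has only $\hat\mu+\epsilon\ge\epsilon$ and $|\mu-\hat\mu|\le\mu$, so each integrand is bounded by a $(\gamma,S)$-dependent multiple of $\epsilon^{-1}$; multiplying by $\mathbb{P}(\mathcal G^c)\le\mathbb{P}(|\hat\mu-\mu|\ge\mu/2)\le 4\mu^{-2}\operatorname{Var}(\hat\mu)\lesssim\mu^{-2}(n(1-\lambda))^{-1}$ (Chebyshev and Lemma~\ref{lemma:varlemma}) yields $\lesssim\epsilon^{-1}(n(1-\lambda))^{-1}$, which is $\lesssim C_{\gamma,S}^{-1}(n(1-\lambda))^{-1/2}$ \emph{precisely} by the hypothesis $\epsilon\ge C_{\gamma,S}(n(1-\lambda))^{-1/2}$. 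Summing the four pieces gives the stated bound.

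The main obstacle is the bad event $\mathcal G^c$: there the denominator $\hat\mu+\epsilon$ can be as small as $\epsilon$, so the integrand blows up like $1/\epsilon$, and the estimate is rescued only by the interplay between the polynomial rarity of $\mathcal G^c$ (the $O((n(1-\lambda))^{-1})$ variance bound) and the lower bound $\epsilon\ge C_{\gamma,S}(n(1-\lambda))^{-1/2}$, so that $\epsilon^{-1}(n(1-\lambda))^{-1}$ still beats $(n(1-\lambda))^{-1/2}$. A secondary point requiring care is uniformity: every constant must depend on $\pi$ only through $\mu\ge\gamma/S$ and the spectral gap $1-\lambda$, so that the bound survives the outer expectation over $\pi\sim\mathbb{P}_\pi$ taken by the callers of this lemma. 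No genuinely new difficulty arises from $k>1$, since the lifting reduces everything to the one-dimensional occupation-measure concentration already present in the first-order analysis.
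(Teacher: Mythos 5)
Your proposal is correct and follows essentially the same route as the paper's proof: the same decomposition into a denominator-mismatch term and a numerator/$\epsilon$ term, the same split on the event $\{\hat\mu_X(s')\ge\mu_\pi(s')/2\}$ with the occupation-measure variance bound (via the lifted first-order chain, Lemma~\ref{lemma:varlemma}) controlling both $\mathbb{E}\lvert\hat\mu_X(s')-\mu_\pi(s')\rvert$ and the bad-event probability, and the same use of $\epsilon\ge C_{\gamma,S}(n(1-\lambda))^{-1/2}$ (together with $\epsilon$ being of that order in the application) to absorb the $\epsilon^{-1}$ blow-up on the bad event. Your explicit handling of the bad event via Chebyshev and of the constant through $\mu_\pi(s')\ge\gamma/S$ only makes explicit what the paper leaves implicit.
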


\begin{proof}
    The proof is the same as the proof of Lemma G.2 in \citet{nichani2024how}, where to control the variance term, we use \ref{lemma:varlemma}. For completeness, we desribe the proof below:

Define
\begin{align*}
E_\pi(s,s') := 
\mathbb{E}_X\left[
\frac{(\mathbb{I}_{x_i = s'} - \hat{\mu}_X(s')) \mathbb{I}_{x_j = s}}{\hat{\mu}_X(s') + \epsilon}
\right]
-
\frac{\mathbb{P}_X[x_i = s', x_j = s]}{\mu_\pi(s')}
+
\mathbb{P}_X[x_j = s].
\end{align*}

Expanding:
\[
(\mathbb{I}_{x_i = s'} - \hat{\mu}_X(s')) \mathbb{I}_{x_j = s}
= \mathbb{I}_{x_i = s'} \mathbb{I}_{x_j = s} - \hat{\mu}_X(s') \mathbb{I}_{x_j = s}.
\]
Thus:
\begin{align*}
E_\pi(s,s') 
&= \mathbb{E}_X\left[
\frac{\mathbb{I}_{x_i = s'} \mathbb{I}_{x_j = s}}{\hat{\mu}_X(s') + \epsilon}
\right]
-
\frac{\mathbb{P}_X[x_i = s', x_j = s]}{\mu_\pi(s')}
-
\mathbb{E}_X\left[
\frac{\hat{\mu}_X(s') \mathbb{I}_{x_j = s}}{\hat{\mu}_X(s') + \epsilon}
\right]
+
\mathbb{P}_X[x_j = s].
\end{align*}

Grouping terms:
\begin{align*}
E_\pi(s,s') 
&= \mathbb{E}_X\left[
\mathbb{I}_{x_i = s'} \mathbb{I}_{x_j = s}
\left( \frac{1}{\hat{\mu}_X(s') + \epsilon} - \frac{1}{\mu_\pi(s')} \right)
\right]
+
\mathbb{E}_X\left[
\mathbb{I}_{x_j = s}
\left( 1 - \frac{\hat{\mu}_X(s')}{\hat{\mu}_X(s') + \epsilon} \right)
\right].
\end{align*}

Grouping terms:
\[
\frac{1}{\hat{\mu}_X(s') + \epsilon} - \frac{1}{\mu_\pi(s')}
\approx \frac{\mu_\pi(s') - \hat{\mu}_X(s') - \epsilon}{(\hat{\mu}_X(s') + \epsilon)\mu_\pi(s')},
\]
and
\[
1 - \frac{\hat{\mu}_X(s')}{\hat{\mu}_X(s') + \epsilon} = \frac{\epsilon}{\hat{\mu}_X(s') + \epsilon}.
\]

Thus by the triangle inequality and boundedness of indicator variables $\mathbb{I}_{x_i=s'}, \mathbb{I}_{x_j=s} \in \{0,1\}$, we get
\begin{align*}
|E_\pi(s,s')| 
&\lesssim \mathbb{E}_X\left[
\frac{|\mathbb{I}_{x_i = s'} \mathbb{I}_{x_j = s}| |\hat{\mu}_X(s') - \mu_\pi(s')| + \epsilon (|\mathbb{I}_{x_i = s'} \mathbb{I}_{x_j = s}| + \mu_\pi(s') |\mathbb{I}_{x_j = s}|)}{(\hat{\mu}_X(s') + \epsilon) \mu_\pi(s')}
\right] \\
&\lesssim \mathbb{E}_X\left[
\frac{|\hat{\mu}_X(s') - \mu_\pi(s')| + \epsilon}{\mu_\pi(s')^2}
\right]
+ \epsilon^{-1} \mathbb{P}_X\left[ \hat{\mu}_X(s') \leq \frac{\mu_\pi(s')}{2} \right].
\end{align*}

Using concentration inequalities (Lemma \ref{lemma:varlemma}):
\[
\mathbb{E}_X\left[ (\hat{\mu}_X(s') - \mu_\pi(s'))^2 \right] \lesssim \frac{1}{n(1 - \lambda)},
\quad
\mathbb{P}_X\left[ \hat{\mu}_X(s') \leq \frac{\mu_\pi(s')}{2} \right] \lesssim \frac{1}{n(1 - \lambda)},
\]
and $\epsilon \sim (n(1 - \lambda))^{-1/2}$, we conclude:
\[
|E_\pi(s,s')| \lesssim \frac{1}{n(1 - \lambda)} .
\]
\end{proof}

\begin{lemma}[Extension-2 to k\textsuperscript{th}-order of Lemma G.2 in \citet{nichani2024how}]
\label{lemma:extension2lemmaG2}
For any $s', s_1, s_2, \cdots s_k \in \mathcal{S}$ and any k\textsuperscript{th}-order transition kernel, $\pi$ with spectral gap $1 - \lambda(\pi) \geq 1 - \lambda$ , and with $\mu_\pi(s') \geq 0$, and \(i>j\), there exists a sufficiently large constant $C_{\gamma,S}$ such that if $\epsilon \geq C_{\gamma,S} \pth{n(1-\lambda)}^{-1/2}$, then:
\begin{align*}
    \sum_{l=1}^{k}\frac{1}{k}\sth{\mathbb{E}_X\left[
    \frac{(\mathbb{I}_{x_i = s'} - \hat{\mu}_X(s')) \mathbb{I}_{x_j = s_l}}{\hat{\mu}_X(s') + \epsilon}
    \right]
    - \pth{\frac{\mathbb{P}_X[s_i = s', s_j = s_l]}{\mu_\pi(s')}
    - \mathbb{P}_X[s_j = s_l]}}
    \lesssim \frac{1}{\sqrt{n(1-\lambda)}}.
\end{align*}
\end{lemma}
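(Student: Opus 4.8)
The plan is to reduce this directly to Lemma~\ref{lemma:extension1lemmag2}. For each fixed $l \in \{1, \ldots, k\}$, the quantity
\[
E_\pi(s_l, s') := \mathbb{E}_X\!\left[ \frac{(\mathbb{I}_{x_i = s'} - \hat{\mu}_X(s'))\,\mathbb{I}_{x_j = s_l}}{\hat{\mu}_X(s') + \epsilon} \right] - \left( \frac{\mathbb{P}_X[s_i = s',\, s_j = s_l]}{\mu_\pi(s')} - \mathbb{P}_X[s_j = s_l] \right)
\]
is exactly the error term controlled by Lemma~\ref{lemma:extension1lemmag2} with $s = s_l$. Hence, under the stated hypotheses ($i > j$, spectral gap $1 - \lambda(\pi) \geq 1 - \lambda$, and $\epsilon \geq C_{\gamma,S}(n(1-\lambda))^{-1/2}$ with the constant $C_{\gamma,S}$ of that lemma), I would invoke Lemma~\ref{lemma:extension1lemmag2} to obtain $|E_\pi(s_l, s')| \lesssim (n(1-\lambda))^{-1/2}$, with an implied constant depending only on $(\gamma, S)$ and, crucially, independent of both $l$ and the order $k$.

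Next I would apply the triangle inequality to the convex average. Since the weights $\{1/k\}_{l=1}^{k}$ are nonnegative and sum to one,
\[
\left| \sum_{l=1}^{k} \frac{1}{k}\, E_\pi(s_l, s') \right| \;\le\; \sum_{l=1}^{k} \frac{1}{k}\, |E_\pi(s_l, s')| \;\le\; \max_{1 \le l \le k} |E_\pi(s_l, s')| \;\lesssim\; \frac{1}{\sqrt{n(1-\lambda)}},
\]
which is exactly the claimed bound. No strengthening of assumptions is needed: the threshold on $\epsilon$ and the spectral-gap condition are inherited verbatim from Lemma~\ref{lemma:extension1lemmag2}, and one simply takes the same $C_{\gamma,S}$ (or its maximum over the finitely many states $s_l$, which changes nothing).

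There is essentially no substantive obstacle here beyond bookkeeping; the single point worth flagging is uniformity in the Markov order. One must confirm that the per-term estimate from Lemma~\ref{lemma:extension1lemmag2} does not degrade as $k$ grows — it depends on $\pi$ only through the spectral gap and on the state space only through $(\gamma, S)$ — so that averaging with the weights $1/k$ genuinely preserves the rate $(n(1-\lambda))^{-1/2}$. Equivalently, had one bounded the sum term by term without using the averaging weights, a spurious factor of $k$ would appear; exploiting $\sum_l \tfrac1k = 1$ is precisely what keeps the final estimate clean and order-free.
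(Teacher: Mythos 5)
Your proposal is correct and matches the paper's own proof: both apply Lemma~\ref{lemma:extension1lemmag2} to each term with $s = s_l$ and then use the convex weights $1/k$ so the averaged sum inherits the same $(n(1-\lambda))^{-1/2}$ rate. Your added remark on uniformity in $k$ is a sensible bit of bookkeeping but introduces nothing beyond what the paper's one-line argument already uses.
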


\begin{proof}
    The proof follows immediatly from Lemma \ref{lemma:extension1lemmag2}. Let:

\begin{align*}
        \sum_{l=1}^{k}\frac{1}{k}\underbrace{\sth{\mathbb{E}_X\left[
    \frac{(\mathbb{I}_{x_i = s'} - \hat{\mu}_X(s')) \mathbb{I}_{x_j = s_l}}{\hat{\mu}_X(s') + \epsilon}
    \right]
    - \pth{\frac{\mathbb{P}_X[s_i = s', s_j = s_l]}{\mu_\pi(s')}
    - \mathbb{P}_X[s_j = s_l]}}}_{Q_l}
    \lesssim \frac{1}{\sqrt{n(1-\lambda)}}.
\end{align*}

We know from Lemma \ref{lemma:extension1lemmag2} that \(Q_l \lesssim \frac{1}{\sqrt{n(1-\lambda)}}\) and hence it immediatly follows that \(\sum_{l=1}^{k} \frac{1}{k}Q_l \lesssim \frac{1}{\sqrt{n(1-\lambda)}}\).
    
\end{proof}

\newpage

\section{Experiments and Experimental Details}
\label{app:experiments}

\subsection{Additional Experiments}
\label{app:other_order_markov_chains}

In this section, we present additional attention map visualizations obtained by training a two-layer, single-head transformer on first, second, and third-order Markov chains, respectively. Attention maps from both the first and second layers are shown. For the first layer, we compute the average attention map over multiple sequences to understand the expected attention pattern. For the second layer, we display the attention map for a specific sequence. Additionally, we include a \emph{Pseudo Attention Map}, which highlights the tokens that would ideally be attended to under a \(k\)-gram estimator. To assess alignment, we also plot the absolute difference between the actual attention map and the pseudo attention map. Note that for the first layer, we do not compute a pseudo attention map; hence, the pseudo attention and the corresponding absolute difference are not meaningful in that context.

\subsubsection{First-Order Markov Chains}

The following plots show the attention maps learned from first-order Markov chains.

\begin{figure}[h!]
    \centering
    \begin{subfigure}{\textwidth}
        \centering
        \includegraphics[width=0.25\textwidth]{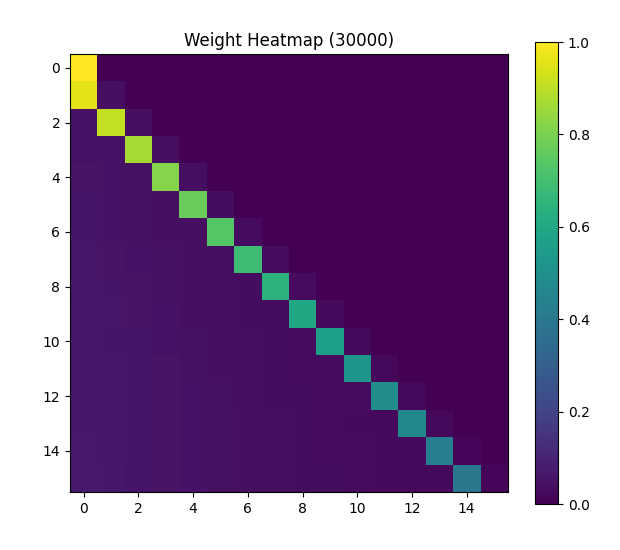} 
    \end{subfigure}

    \caption{Layer 1: Average attention map computed over multiple sequences. }
\end{figure}

\begin{figure}[h!]
    \centering
    \begin{subfigure}{\textwidth}
        \centering
        \includegraphics[width=1\textwidth]{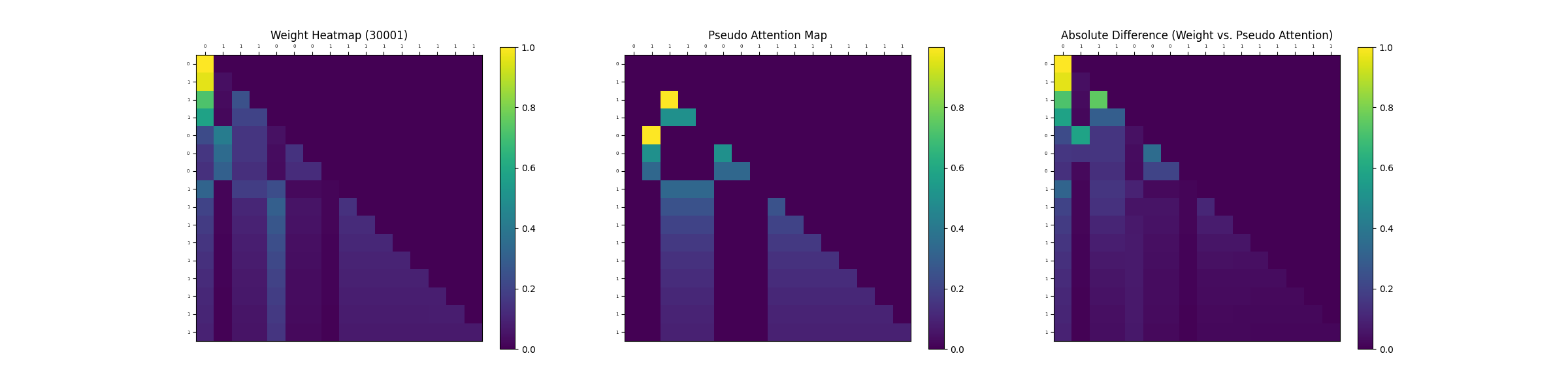} 
    \end{subfigure}

    \caption{Layer 2: Attention map corresponding to a randomly sampled sequence (denoted as sequence–1)}
\end{figure}

\begin{figure}[h!]
    \centering
    \begin{subfigure}{\textwidth}
        \centering
        \includegraphics[width=1\textwidth]{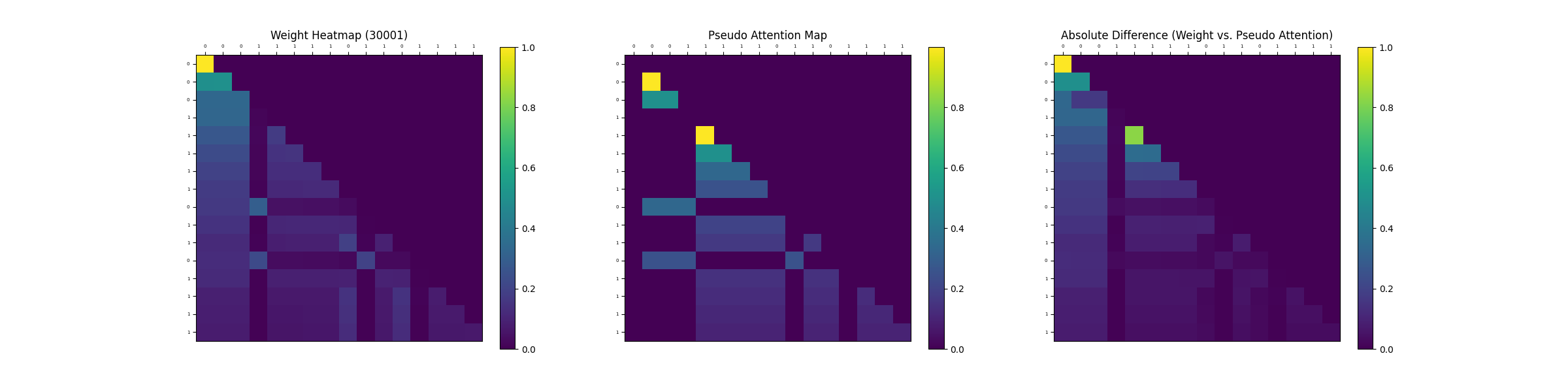} 
    \end{subfigure}

    \caption{Layer 2: Attention map corresponding to a randomly sampled sequence (denoted as sequence–2)}
\end{figure}

\begin{figure}[h!]
    \centering
    \begin{subfigure}{\textwidth}
        \centering
        \includegraphics[width=1\textwidth]{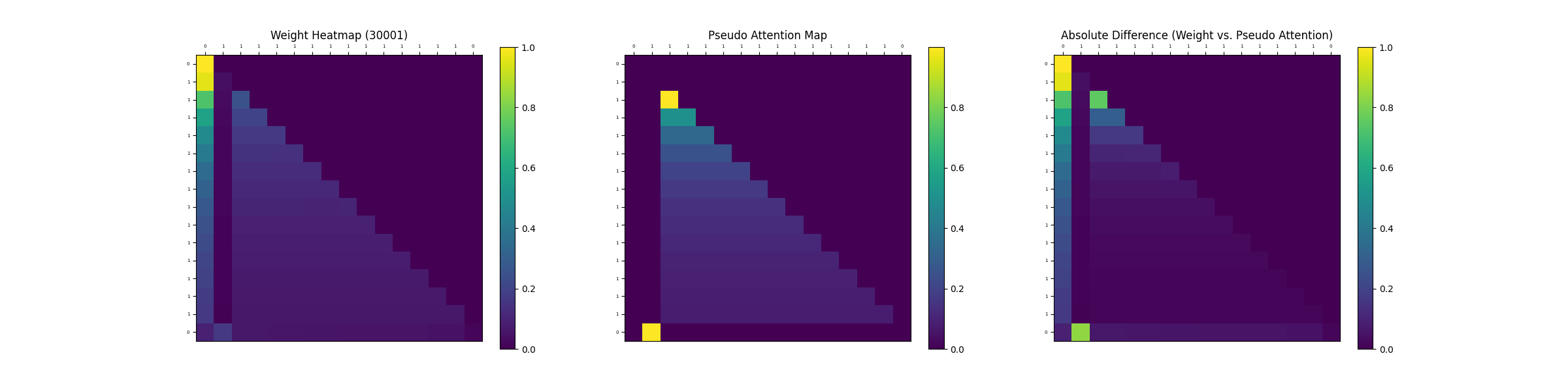} 
    \end{subfigure}

    \caption{Layer 2: Attention map corresponding to a randomly sampled sequence (denoted as sequence–3)}
\end{figure}

\newpage

\subsubsection{Second-Order Markov Chains}

The following plots show the attention maps learned from second-order Markov chains.

\begin{figure}[htbp]
    \centering
    \begin{subfigure}{\textwidth}
        \centering
        \includegraphics[width=0.25\textwidth]{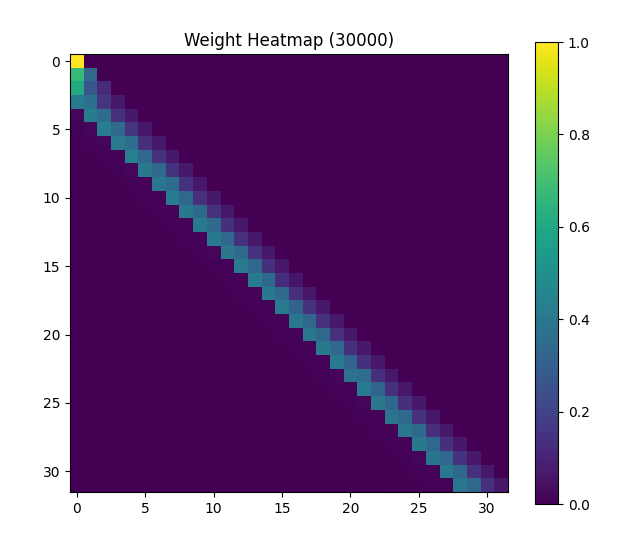} 
    \end{subfigure}

    \caption{Layer 1: Average attention map computed over multiple sequences. }
\end{figure}

\begin{figure}[h!]
    \centering
    \begin{subfigure}{\textwidth}
        \centering
        \includegraphics[width=1\textwidth]{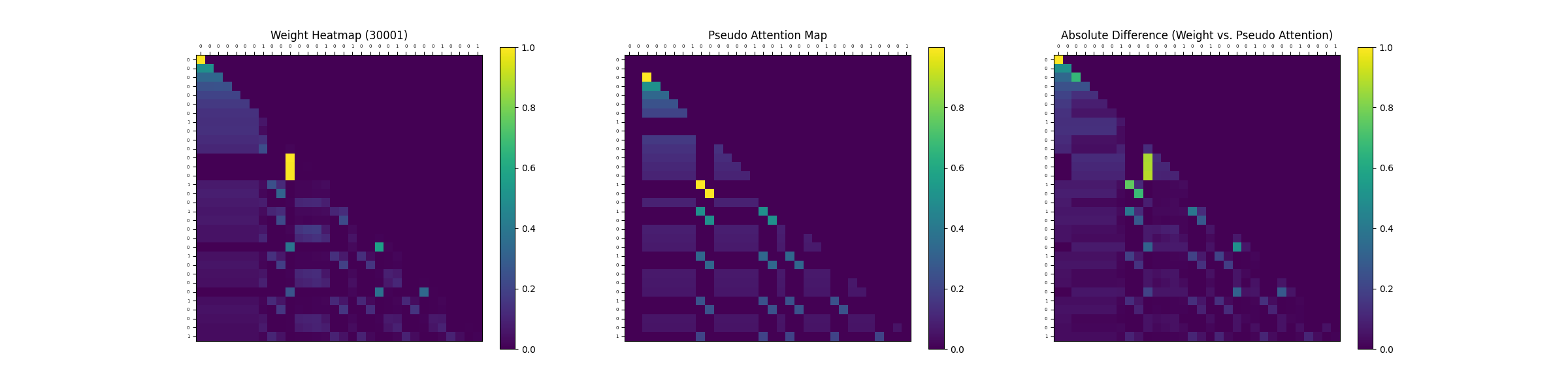} 
    \end{subfigure}

    \caption{Layer 2: Attention map corresponding to a randomly sampled sequence (denoted as sequence–1)}
\end{figure}

\begin{figure}[h!]
    \centering
    \begin{subfigure}{\textwidth}
        \centering
        \includegraphics[width=1\textwidth]{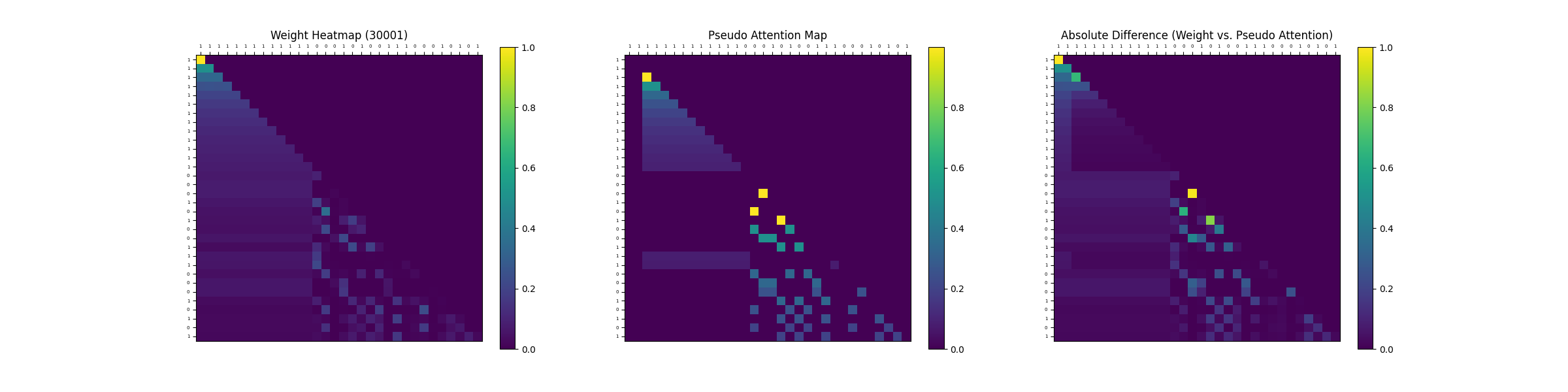} 
    \end{subfigure}

    \caption{Layer 2: Attention map corresponding to a randomly sampled sequence (denoted as sequence–2)}
\end{figure}

\begin{figure}[h!]
    \centering
    \begin{subfigure}{\textwidth}
        \centering
        \includegraphics[width=1\textwidth]{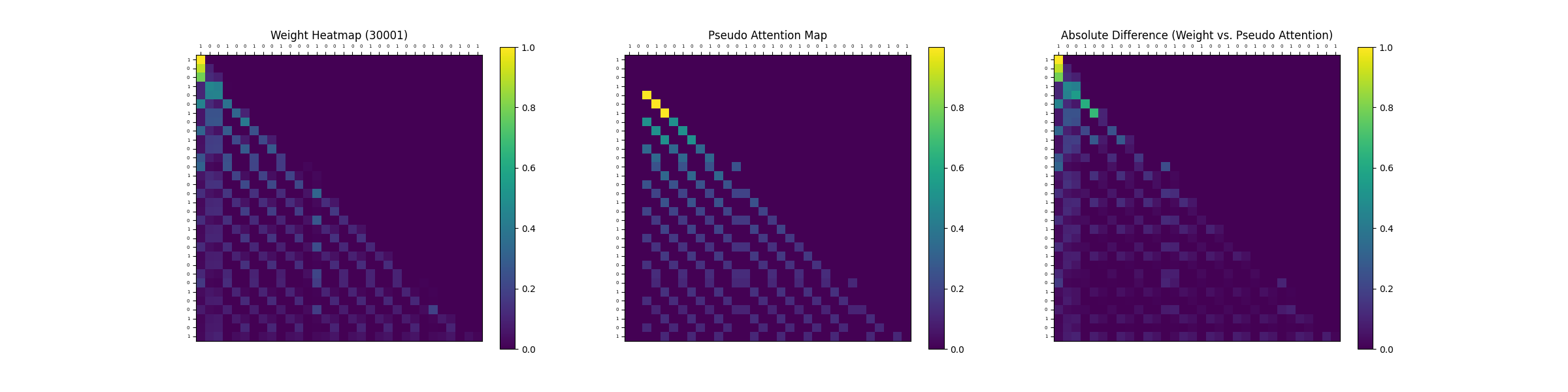} 
    \end{subfigure}

    \caption{Layer 2: Attention map corresponding to a randomly sampled sequence (denoted as sequence–3)}
\end{figure}

\newpage

\subsubsection{Third-Order Markov Chains}

The following plots show the attention maps learned from third-order Markov chains.

\begin{figure}[htbp]
    \centering
    \begin{subfigure}{\textwidth}
        \centering
        \includegraphics[width=0.25\textwidth]{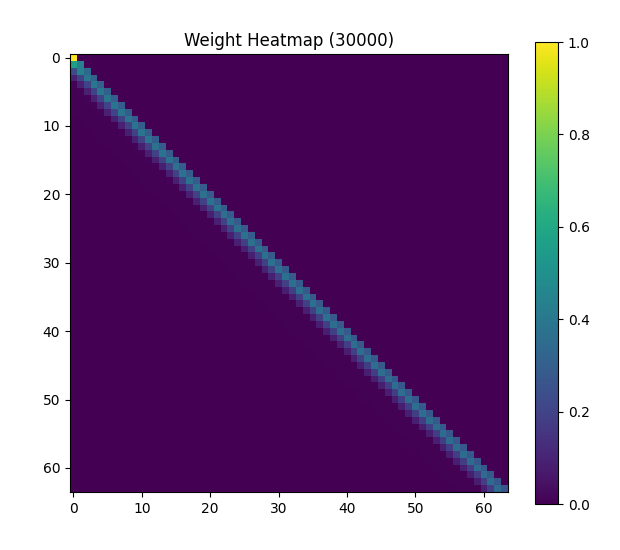} 
    \end{subfigure}

    \caption{Layer 1: Average attention map computed over multiple sequences. }
\end{figure}

\begin{figure}[h!]
    \centering
    \begin{subfigure}{\textwidth}
        \centering
        \includegraphics[width=1\textwidth]{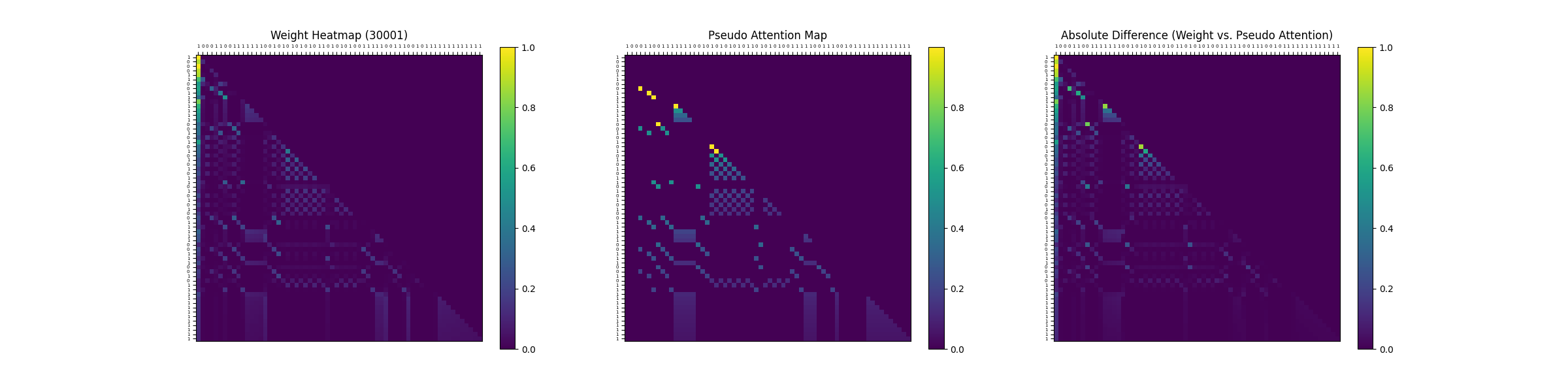} 
    \end{subfigure}

    \caption{Layer 2: Attention map corresponding to a randomly sampled sequence (denoted as sequence–1)}
\end{figure}

\begin{figure}[h!]
    \centering
    \begin{subfigure}{\textwidth}
        \centering
        \includegraphics[width=1\textwidth]{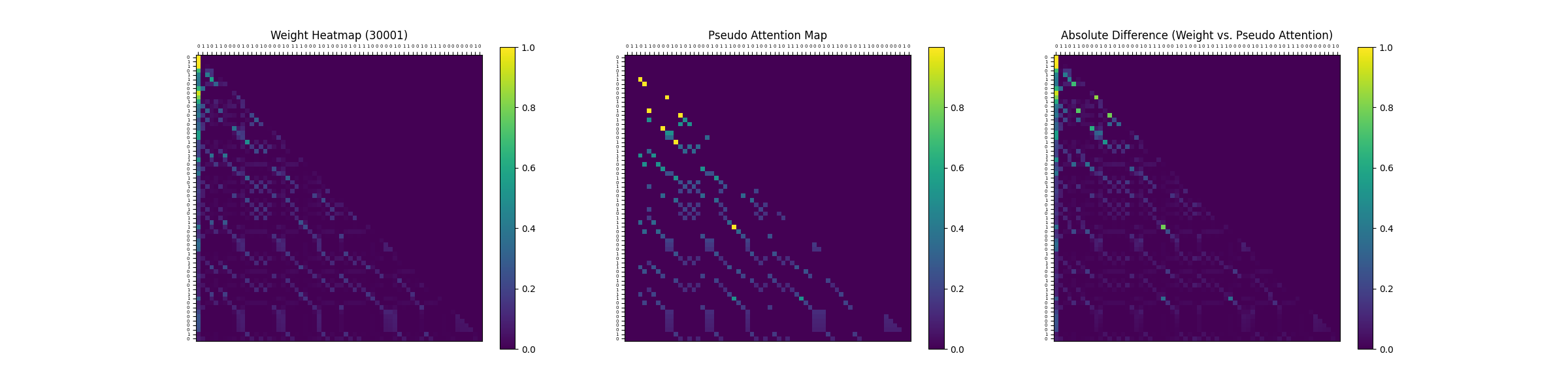} 
    \end{subfigure}

    \caption{Layer 2: Attention map corresponding to a randomly sampled sequence (denoted as sequence–2)}
\end{figure}

\begin{figure}[h!]
    \centering
    \begin{subfigure}{\textwidth}
        \centering
        \includegraphics[width=1\textwidth]{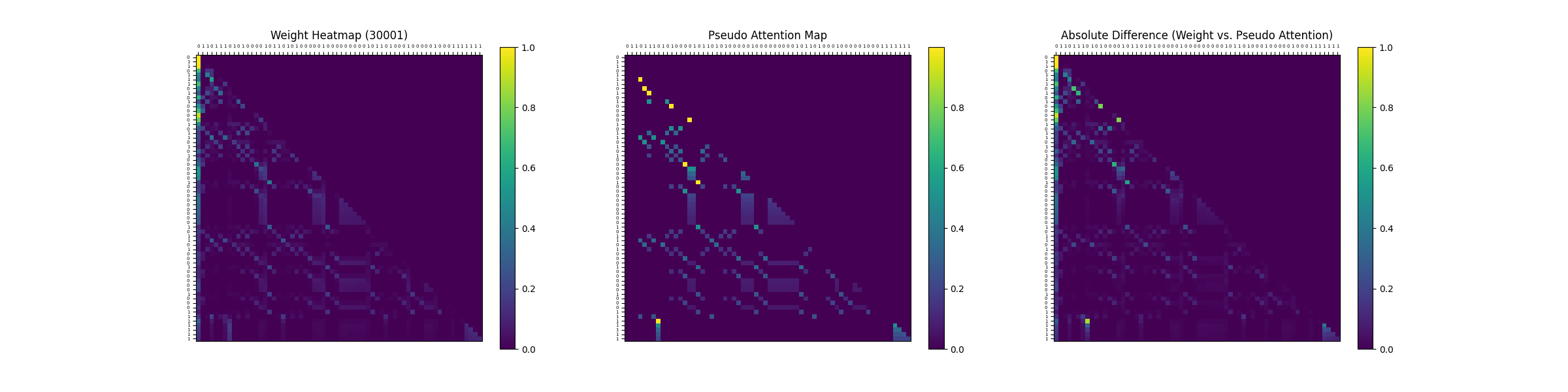} 
    \end{subfigure}

    \caption{Layer 2: Attention map corresponding to a randomly sampled sequence (denoted as sequence–3)}
\end{figure}

\newpage
\subsection{Experiments on One Layer Transformers}
\label{subsec:onelayertransformers}

To assess whether single-layer transformers can implement induction heads, we trained one-layer models on sequences generated from second-order Markov chains, using the same hyperparameter ranges as in our two-layer baselines.  

\paragraph{Quantitative results.} As shown in \autoref{tab:crossentropy}, the gap between the true loss (computed from the ground-truth transition probabilities) and the loss achieved by the trained model remains significantly larger for one-layer transformers than for their two-layer counterparts. This difference persists across hyperparameter settings and indicates that deeper models capture the underlying structure more effectively.  

\begin{table}[h]
\centering
\begin{tabular}{lc}
\toprule
\textbf{Number of Layers} & \textbf{Excess Cross-Entropy Loss} \\
\midrule
1 & $0.131 \pm 0.000$ \\
2 & $0.100 \pm 0.003$ \\
\bottomrule
\end{tabular}
\vspace{3pt}
\caption{Excess cross-entropy loss (loss obtained by the model subtracted with the true loss) for one and two-layer transformers trained on sequences from a second-order Markov chain.}
\label{tab:crossentropy}
\end{table}

\paragraph{Qualitative results.} Examination of the attention maps (\autoref{fig:onelayerseq1}, \autoref{fig:onelayerseq2}, \autoref{fig:onelayerseq3}) confirms this finding: one-layer transformers converge to a qualitatively different structure than the pseudo-attention map required for a $k$-gram estimator (see \autoref{fig:kinductionhead}). Specifically, attention mass concentrates along the two to three lower diagonals, rather than following the induction-like pattern.  

These results are consistent with prior observations~\cite{olsson2022context} that one-layer transformers are insufficient for representing induction heads, while two-layer models succeed in doing so (albeit on different datasets).

\begin{figure}[h!]
    \centering
    \begin{subfigure}{\textwidth}
        \centering
        \includegraphics[width=1\textwidth]{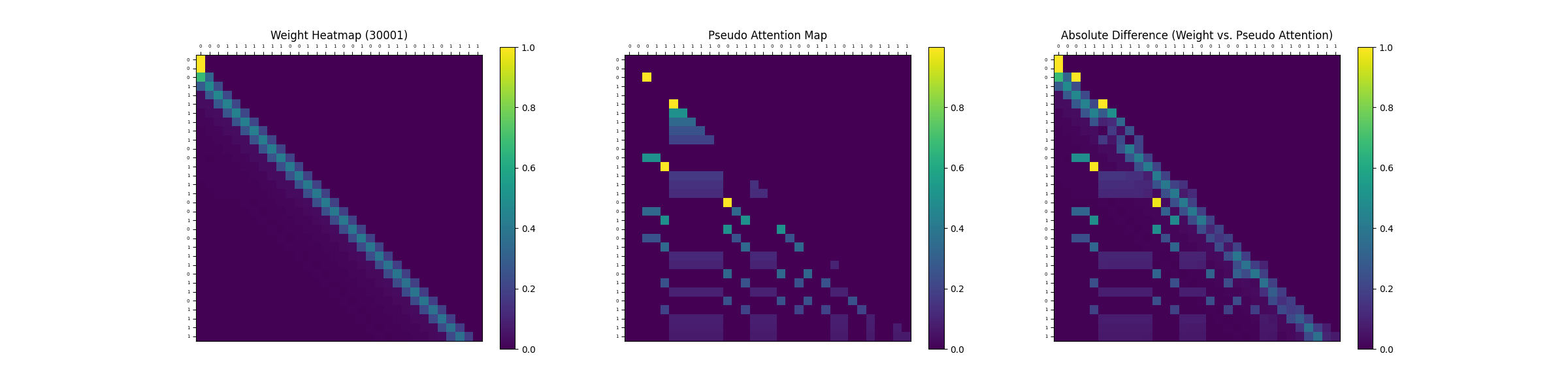} 
    \end{subfigure}

    \caption{Attention map corresponding to a randomly sampled sequence (denoted as sequence–1)}
    \label{fig:onelayerseq1}
\end{figure}

\begin{figure}[h!]
    \centering
    \begin{subfigure}{\textwidth}
        \centering
        \includegraphics[width=1\textwidth]{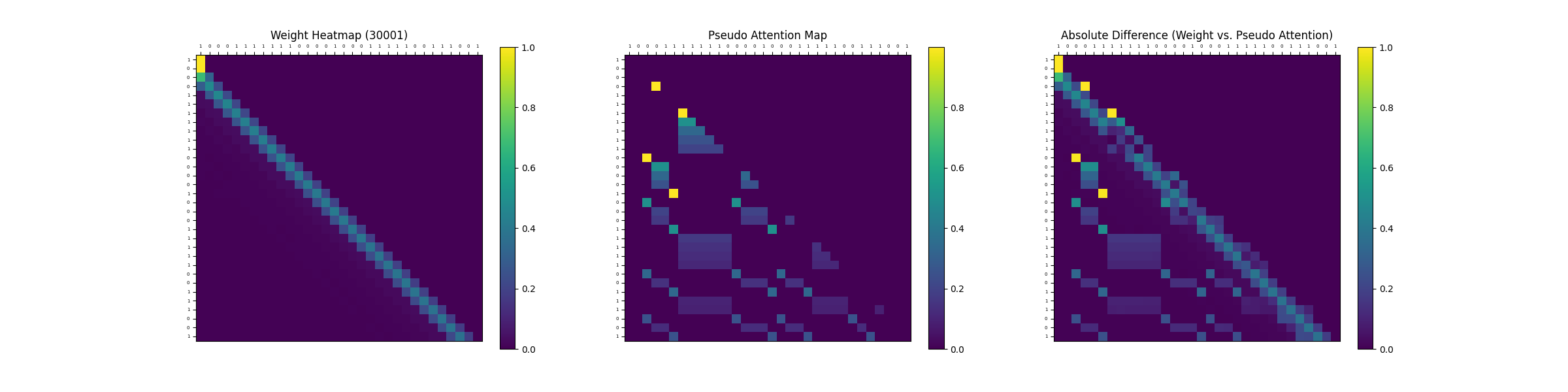} 
    \end{subfigure}

    \caption{Attention map corresponding to a randomly sampled sequence (denoted as sequence–2)}
    \label{fig:onelayerseq2}
\end{figure}

\begin{figure}[h!]
    \centering
    \begin{subfigure}{\textwidth}
        \centering
        \includegraphics[width=1\textwidth]{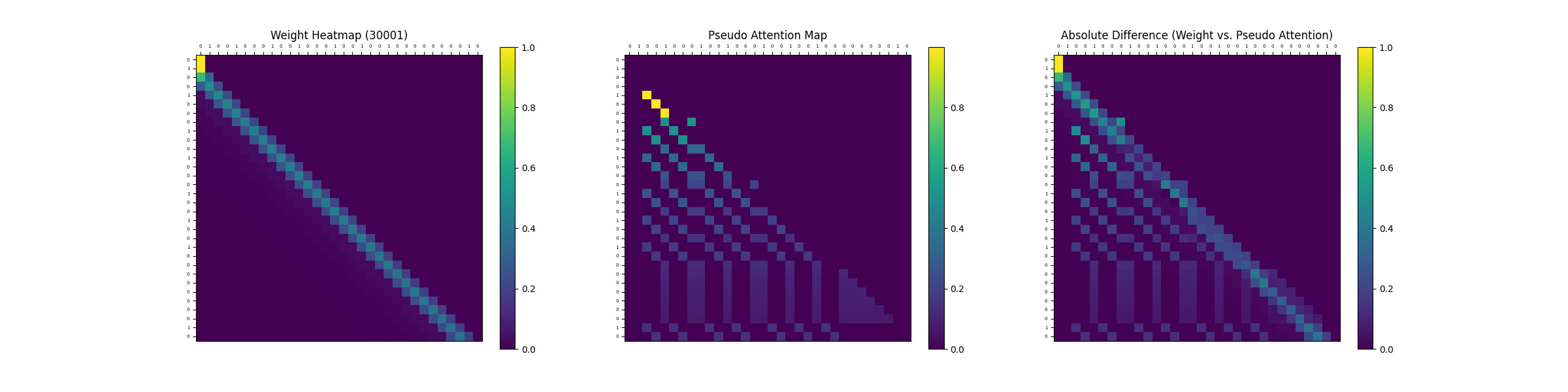} 
    \end{subfigure}

    \caption{Attention map corresponding to a randomly sampled sequence (denoted as sequence–3)}
    \label{fig:onelayerseq3}
\end{figure}

\begin{figure}[h!]
    \centering
    \begin{subfigure}{\textwidth}
        \centering
        \includegraphics[width=1\textwidth]{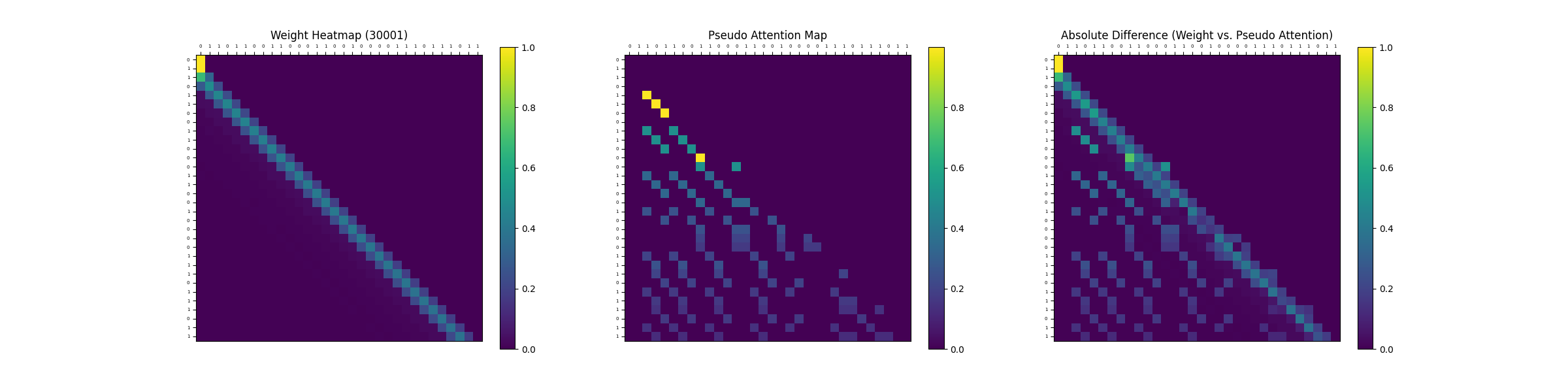} 
    \end{subfigure}

    \caption{Attention map corresponding to a randomly sampled sequence (denoted as sequence–4)}
    \label{fig:onelayerseq4}
\end{figure}

\newpage

\subsection{Experiments on Noisy Sequences}
\label{sec:noisy-sequences}

We evaluate the robustness of our \(k\)-gram framework under non-ideal conditions by introducing stochastic noise into synthetic sequences generated from second-order Markov chains. We consider two noise processes: (i) random token substitution and (ii) perturbation of the transition dynamics. Models are two-layer, single-head transformers trained with three random seeds. Performance is reported as the difference between the model's cross-entropy loss and the cross-entropy computed under the true transition probabilities over a large evaluation batch.

\subsubsection{Random Token Substitution}

\begin{table}[ht]
\centering
\begin{tabular}{@{}cc@{}}
\toprule
\(p_n\) & Difference in Cross-Entropy Loss \\
\midrule
0.0 & \(0.10 \pm 0.003\) \\
0.1 & \(0.15 \pm 0.010\) \\
0.25 & \(0.18 \pm 0.002\) \\
0.5 & \(0.21 \pm 0.005\) \\
\bottomrule
\end{tabular}
\vspace{0.5em}
\centering
\caption{Cross-entropy loss difference under token substitution noise. Values are mean \(\pm\) std. across three seeds.}
\label{tab:token-noise}
\end{table}

We corrupt sequences at the token level with probability \(p_n\). For each token, with probability \(p_n\) we replace it with a uniformly sampled alternative; with probability \(1-p_n\) it is left unchanged. The results can be found in \autoref{tab:token-noise}. In the noiseless case (\(p_n=0\)), the learned second-layer attention map closely aligns with the \emph{pseudo-attention map} (the optimal attention pattern for an ideal \(k\)-gram estimator). As \(p_n\) increases, the alignment degrades and the loss gap widens.

\subsubsection{Perturbation of Transition Dynamics}

We next perturb the sequence dynamics directly. Let \(P_d\) denote the true transition probability matrix. Before each prediction step we interpolate between \(P_d\) and a randomly generated transition matrix \(P_r\), defining
\begin{equation}
\label{eq:transition-perturb}
P \;=\; (1-\alpha)\,P_d \;+\; \alpha\,P_r, \qquad \alpha \in [0,1],
\end{equation}
where a new transition matrix \(P_r\) is resampled at every time step. As in the previous setting, we measure the difference between the model’s cross-entropy and the true loss. Note that the results can be found in \autoref{tab:transition-perturb}. The loss gap increases with \(\alpha\). For \(\alpha \le 0.1\), attention maps remain close to the pseudo-attention pattern; at \(\alpha=0.25\) and beyond, alignment deteriorates substantially. The model is more robust to this transition noise than to token-level corruption.

\begin{table}[t]
\centering
\begin{tabular}{@{}cc@{}}
\toprule
\(\alpha\) & Difference in Cross-Entropy Loss \\
\midrule
0.0 & \(0.100 \pm 0.003\) \\
0.1 & \(0.101 \pm 0.003\) \\
0.25 & \(0.141 \pm 0.005\) \\
0.5 & \(0.181 \pm 0.004\) \\
\bottomrule
\end{tabular}
\vspace{0.5em}
\caption{Cross-entropy loss difference under transition perturbations defined in Eq.~\eqref{eq:transition-perturb}. Values are mean \(\pm\) std. across three seeds.}
\label{tab:transition-perturb}
\end{table}

\subsubsection{Discussion}

These results complement our theory: when the data-generating process is close to Markovian, the learned attention structure approximates the pseudo-attention map; as noise increases, alignment degrades in tandem with cross-entropy performance. Overall, transformer-based estimators are sensitive to deviations from ideal \(k\)-gram dependencies, with greater robustness observed for transition perturbations than for token substitution.

\subsection{Experiments on Bit Precision}
\label{app:subsec:exp_bp}

Our bit-precision results are the same as those of \cite{rajaraman2024transformers}, which show that $\Omega(\log T + k)$ bits per parameter suffice for an additive error of $\mathcal{O}(1/T)$, where $T$ is the sequence length and $k$ the Markov order. To validate this, we quantize weights and activations of a model trained on a second-order Markov chain with sequence length 32. We then compute the Frobenius norm between the pseudo (optimal) attention map and the second-layer attention map of the quantized model across 10 random sequences (each from a different Markov transition kernel), and report the mean and variance:

\begin{table}[h!]
\centering
\label{tab:quantization}
\begin{tabular}{c c}
\toprule
\textbf{Quantization (Bits)} & \textbf{Frobenius Norm (Mean $\pm$ Std)} \\ 
\midrule
2   & $4.6108 \,\pm\, 0.7190$ \\
4   & $4.5842 \,\pm\, 0.9826$ \\
8   & $3.8615 \,\pm\, 0.4590$ \\
32 (No quantization) & $3.7700 \,\pm\, 0.4725$ \\
\bottomrule
\end{tabular}
\end{table}

These results suggest that 8-bit quantization preserves the attention structure (as measured by the Frobenius norm) almost as well as full precision (32-bit), and aligns with the theoretical bounds.

\subsection{Experimental Details}

The table below provides details of the hyperparameters used across all experiments in the paper. Each experiment was conducted on a single NVIDIA A100 GPU, with runtimes ranging from 30 minutes to 2 hours.

\begin{table}[H]
\label{tab:transformer-setup}
\vspace{1mm}
\small%
\newcolumntype{R}{>{\raggedleft\arraybackslash}X}
\begin{tabularx}{\linewidth}{lX}
    \toprule
    Dataset & $k$-th order binary Markov source \\
    Architecture & Based on the \gptwo architecture as implemented in \cite{pagliardini-llm} \\
    \midrule
    Batch size & Grid-searched in $\{16,32,64\}$ \\
    Accumulation steps & 1 \\
    \midrule
    Optimizer & AdamW ($\beta_1 \in \{0.85,0.9\}, \beta_2 \in\{0.9,0.95\}$) \\
    Learning rate & Grid-searched in $\{0.0001,0.001, 0.01\}$ \\
    Scheduler & Cosine \\
    \# Iterations & $30000$ \\
    Weight decay & Grid-searched in $\{0,10^{-4},10^{-3}\}$\\
    \midrule
    Dropout & $0$ or $0.1$ \\
    Sequence length & $8$, $32$, $64$ \\
    Embedding dimension & Grid-searched in $\{32,64,128\}$ \\
    Transformer layers & 2 \\
    Attention heads & $1$ or $2$ depending on the experiment \\
    \midrule
    Repetitions & $5$\\
    \bottomrule
\end{tabularx}
\vspace{0.5em}
\caption{Settings and parameters for the transformer model used in the experiments.}
\end{table}

\clearpage

\section{Parameter Breakdown and Comparison}
\label{app:sec:parameterbreakdown}

In this section, we present a comprehensive comparison of parameter counts that explicitly accounts for all components of the models under consideration. Our analysis shows that our construction achieves greater parameter efficiency than the design of \citet{rajaraman2024transformers} and also improves upon the approach proposed by \citet{nichani2024how}. To make this concrete, we provide a full parameter breakdown for both our model and that of \citet{rajaraman2024transformers}, detailing the contributions of embeddings, attention layers, feed-forward modules, and output projection.

\subsubsection*{Our Construction}

The total parameter count of our construction is: $9(6S+3)^2 + (6S+3)(2T + 2S + 9)$

\begin{table}[h!]
\centering
\resizebox{0.8\textwidth}{!}{%
\begin{tabular}{>{\centering\arraybackslash}m{0.45\linewidth} 
                >{\centering\arraybackslash}m{0.45\linewidth}}
\toprule
\textbf{Component} & \textbf{Parameters} \\
\midrule\midrule
Token Embedding & $(6S+3) \times S$ \\ 
\midrule\midrule
\multicolumn{2}{c}{\textbf{Attention Layer 1}} \\
Positional Encoding & $(6S+3) \times T$ \\ 
Query, Key, Value Projections & $3(6S+3)^2$ \\ 
\midrule\midrule
\multicolumn{2}{c}{\textbf{MLP — (Three Layers)}} \\
Weights & $3(6S+3)^2$ \\
Biases  & $3(6S+3)$ \\
Layer Norms & $6(6S+3)$ \\
\midrule\midrule
\multicolumn{2}{c}{\textbf{Attention Layer 2}} \\
Positional Encoding & $(6S+3) \times T$ \\ 
Query, Key, Value Projections & $3(6S+3)^2$ \\ 
\midrule\midrule
Output Projection & $(6S+3) \times S$ \\ 
\midrule\toprule
\textbf{Total} & $9(6S+3)^2 + (6S+3)(2T + 2S + 9)$ \\ 
\bottomrule
\end{tabular}%
}
\vspace{0.5em} 
\caption{Parameter breakdown for our construction.}
\end{table}

\subsubsection*{Construction in \citet{rajaraman2024transformers}}

The total parameter count of the construction in \citet{rajaraman2024transformers} is: $15(6S+3)^2 + (6S+3)(3T + 2S + 10)$

\begin{table}[h!]
\centering
\resizebox{0.8\textwidth}{!}{%
\begin{tabular}{>{\centering\arraybackslash}m{0.45\linewidth} 
                >{\centering\arraybackslash}m{0.45\linewidth}}
\toprule
\textbf{Component} & \textbf{Parameters} \\
\midrule\midrule
Token Embedding & $(6S+3) \times S$ \\ 
\midrule\midrule
\multicolumn{2}{c}{\textbf{Transformer Layer 1}} \\
Positional Encoding & $(6S+3) \times T$ \\ 
Query, Key, Value Projections & $3(6S+3)^2$ \\ 
MLP (2 layers) Weights & $2(6S+3)^2$ \\
MLP (2 layers) Biases  & $2(6S+3)$ \\
Layer Norms & $2(6S+3)$ \\
\midrule\midrule
\multicolumn{2}{c}{\textbf{Transformer Layer 2}} \\
\multicolumn{2}{c}{\emph{Same as Transformer Layer 1: } $(6S+3)\times T + 5(6S+3)^2 + 4(6S+3)$} \\
\midrule\midrule
\multicolumn{2}{c}{\textbf{Transformer Layer 3}} \\
Positional Encoding & $(6S+3) \times T$ \\ 
Query, Key, Value Projections & $3(6S+3)^2$ \\ 
MLP (2 layers) Weights & $2(6S+3)^2$ \\
MLP (2 layers) Biases  & $2(6S+3)$ \\
Layer Norms & $2(6S+3)$ \\
\midrule\midrule
Output Projection & $(6S+3) \times S$ \\ 
\midrule\toprule
\textbf{Total} & $15(6S+3)^2 + (6S+3)(3T + 2S + 10)$ \\ 
\bottomrule
\end{tabular}%
}
\vspace{0.5em} 
\caption{Parameter breakdown for the construction in \citet{rajaraman2024transformers}.}
\end{table}
\vspace{-5mm}
\end{document}